\definecolor{matthieu}{RGB}{0, 127, 255}
\theoremstyle{plain}
\newtheorem{theorem}{Theorem}[section]
\newtheorem{proposition}[theorem]{Proposition}
\newtheorem{lemma}[theorem]{Lemma}
\theoremstyle{definition}
\newtheorem{definition}[theorem]{Definition}
\newtheorem{example}[theorem]{Example}
\newtheorem{remark}[theorem]{Remark}
\newenvironment{customthm}[1]
  {\innercustomthm}
  {\endinnercustomthm}
\newenvironment{customprop}[1]
  {\innercustomprop}
  {\endinnercustomthm}
\DeclareMathOperator*{\argmin}{arg\,min}
\newcommand{\fngw}{\mathrm{FNGW}_{\alpha, \beta}}
\title{Exploiting Edge Features in Graphs with Fused Network Gromov-Wasserstein Distance}
\author{%
  Junjie Yang\,$^1$ \qquad Matthieu Labeau\,$^1$ \qquad Florence d'Alché-Buc\,$^1$ \\[0.12cm]
  $^1$\,LTCI, Télécom Paris, Institut Polytechnique de Paris, France \\
  Correspondence to: \texttt{\href{mailto:junjie.yang@telecom-paris.fr}{junjie.yang@telecom-paris.fr}}
}
\begin{document}

\maketitle

\begin{abstract}
Pairwise comparison of graphs is key to many applications in Machine learning ranging from clustering, kernel-based classification/regression and more recently supervised graph prediction.
Distances between graphs usually rely on informative representations of these structured objects such as bag of substructures or other graph embeddings.
A recently popular solution consists in representing graphs as metric measure spaces, allowing to successfully leverage Optimal Transport, which provides meaningful distances allowing to compare them: the Gromov-Wasserstein distances. However, this family of distances overlooks edge attributes, which are essential for many structured objects.
In this work, we introduce an extension of Gromov-Wasserstein distance for comparing graphs whose both nodes and edges have features. We propose novel algorithms for distance and barycenter computation. We empirically show the effectiveness of the novel distance in learning tasks where graphs occur in either input space or output space, such as classification and graph prediction.
\end{abstract}

\section{Introduction}
Optimal Transport (OT)~\citep{villani_optimal_2009} has recently offered novel insights on graph modeling, benefiting a series of tasks, e.g. graph classification~\citep{vayer_optimal_2019}, graph matching~\citep{xu_gromov-wasserstein_2019}, dictionary learning~\citep{vincent-cuaz_online_2021} or structured prediction~\citep{brogat-motte_learning_2022}. 
Based on the representation of a graph as a metric measure space, where the nodes of the graph are considered as the support of the probability measure, OT provides a natural way to compute a meaningful distance between graphs, such as the Gromov-Wasserstein (GW) distance~\citep{memoli_gromovwasserstein_2011, sturm_space_2012}. The asymmetric structure of directed graphs has led to the generalization of GW via the Network Gromov-Wasserstein (NGW) distance~\citep{chowdhury_gromovwasserstein_2019} while the Fused Gromov-Wasserstein (FGW) distance~\citep{vayer_optimal_2019} was proposed to extend GW to node-labeled graphs. Recently, \citet{barbe_graph_2021} proposed the DifFused Gromov Wasserstein (DFGW) distance to smooth the node features along the graph structure through a heat diffusion operator which uses the Laplacian kernel. 
Overall, this new class of distances enjoys various 
useful metric and geodesic properties, allowing for example to derive the minimum path between two structured objects~\citep{sturm_space_2012, vayer_fused_2020}.

On the other hand, many machine learning applications require dealing with graphs with complex edge features, such as the abstract meaning representation (AMR)~\citep{banarescu_abstract_2013} in natural language processing or the scene graph~\citep{j_johnson_image_2015} in computer vision, where they model a great deal of possible relations between entities, or the chemical bonding types between atoms of a molecule in chemistry~\citep{irwin_zinc_2012, duhrkop2015searching}. 
Dynamic graphs may also require more sophisticated modeling, as they use temporal edge features which continuously evolve over time~\citep{kazemi_representation_2020}.
There have been several attempts in the literature to obtain meaningful graph representations by including edge features: the two dominant solutions are to use graph kernels 
and Graph Neural Networks (GNNs). Among graph kernels, 
the Neighborhood Subgraph Pairwise Distance Kernel (NSPDK)~\citep{costa_fast_2010} takes edge labels into consideration to build graph-invariant encodings of subgraphs while more involved graph kernels leverage bags of subgraphs~\citep{grenier2015bags}.
The GNNs belonging to the Message Passing Neural Networks (MPNNs) framework~\citep{simonovsky_dynamic_2017, gilmer_neural_2017, fey_splinecnn_2018, corso_principal_2020} incorporate directly edge features via their aggregation procedure, while attention-based GNNs~\citep{velickovic_graph_2018, shi_masked_2021, brody_how_2022} may leverage edge features to compute the attention weights. 
Building on recent computational and theoretical results on GW distances, this paper proposes to represent the edge features of a graph by equipping the original measure space with an additional binary function whose codomain falls in a metric space. 
This generalization allows us to flexibly include edge features into the computation of a GW distance 
while keeping its desirable topological properties.
It especially unlocks better graph modeling for a wide array of tasks, of which we explore several. We propose dedicated procedures for two types of applications: (i) distance-based learning where, for example, the distance is used to define a kernel in the input space to solve graph classification tasks and (ii) barycenter-based learning for which the solution of the problem writes as a \textit{graph barycenter}. For the latter, we focus on two emblematic tasks, dictionary learning and supervised graph prediction.

We verify the interest of the proposed distance on synthetic and real-world datasets. In particular, we provide extensive results on real-world graph classification benchmark tasks as well as a supervised graph prediction problem under the form of a metabolite identification task. 

\paragraph{Summary of contributions:}
\begin{itemize}[leftmargin=*]
    \item We propose a new OT-based distance (FNGW), which is a generalization of both the Network Gromov-Wasserstein Distance and the Fused Gromov-Wasserstein Distance to edge-labeled graphs. 
    \item We derive an algorithm for the computation of the FNGW distance in the discrete case, along with procedures dedicated to  barycenter computation, dictionary learning and graph prediction based on the FNGW distance.     \item We carry out experiments on synthetic datasets where graphs are directed and both nodes and edges are labeled, followed by classification and structured prediction experiments on real-world graph datasets, for which we observe a significant increase in performance compared to the FGW distance and state-of-the-art kernel-based methods.
\end{itemize}

\paragraph{Notations:}
For two probability measures $\mu \in \mathrm{Prob}(X_0)$, $\nu \in \mathrm{Prob}(X_1)$ where $X_0$ and $X_1$ are both polish spaces, we note $\Pi(\mu, \nu)$ the set of all couplings of $\mu$ and $\nu$, i.e., the set of probability measures $\pi$ on the product space $X_0 \times X_1$ satisfying $\pi(A_0, X_1) = \mu(A_0)$ and $\pi(X_0, A_1) = \nu(A_1)$ for all $A_0 \in \mathcal{B}(X_0)$, $A_1 \in \mathcal{B}(X_1)$ where $\mathcal{B}(\cdot)$ denotes the Borel $\sigma$-algebra. $\Sigma_n = \{\boldsymbol{h} \in (\mathbb{R}_+)^n, \sum_{i=1}^n h_i = 1\}$ is the simplex histogram with $n$ bins.
In the case where both $\mu$ and $\nu$ are discrete, 
i.e. we can write $\mu = \sum_{i=1}^n a_i\delta_{x_{i}^0}$ and  $\nu = \sum_{j=1}^m b_j\delta_{x_j^1}$ with $\boldsymbol{a} \in \Sigma_n$ and $\boldsymbol{b} \in \Sigma_m$. Here $\delta$ denotes the dirac measure. We note $\Pi(\boldsymbol{a}, \boldsymbol{b})$ the set of matrices $\pi \in \mathbb{R}^{n\times m}$ satisfying $\sum_i \pi_{i,j} = b_j$ and $\sum_j \pi_{i,j} = a_i$ despite the abuse of the notation. We use $\#$ to denote the pushforward operator on measures.
We note $\times_n$ the $n$-mode tensor-matrix product. Given a tensor $X\in \mathbb{R}^{I_1\times I_2 \cdots \times I_N}$ and a matrix $A \in \mathbb{R}^{J\times I_n}$, $X \times_n A$ gives a tensor of shape $I_1\times \cdots \times I_{n-1}\times J \times I_{n+1}\times \cdots \times I_N$ where $(X \times_n A)(i_1,\cdots,i_{n-1},j,i_{n+1},\cdots,i_N) = \sum_{i_n=1}^{I_n} X(i_1, \cdots, i_n,\cdots, i_N)A(j,i_n)$. 
We note $\mathcal{I}_{N\times M}$ a tensor of shape $N\times N \times M$ with $\mathcal{I}_{N\times M}(n_1, n_2, m) = \hat{\delta}_{n_1n_2}$ where $\hat{\delta}$ is the Kronecker delta function.
\section{Fused Network Gromov-Wasserstein Distance}
In this section, we first give the general definition of the Fused Network Gromov-Wasserstein Distance that extends both the NGW-distance and the FGW-distance whose definitions are recalled in Appendix A. We then derive its discrete form, adapted to the representation of graphs, with the corresponding computation algorithm. Proofs of propositions and theorems are given in Appendix A.
\begin{definition}[Fused Network Gromov-Wasserstein Distance]
	Let $\mathcal{G}$ be the set of tuples of the form $(X, \psi_{X}, \varphi_X, \omega_{X}, \mu_{X})$ where $X$ is a polish space, $\psi_{X}: X\to \Psi $ is a bounded continuous measurable function from $X$ to a metric space $(\Psi, d_\Psi)$ , $\varphi_X: X \times X \to \mathbb{R}$ is a bounded continuous measurable function, $\omega_{X}: X\times X \to \Omega$ is a bounded continuous measurable function from $X^2$ to a metric space $(\Omega, d_\Omega)$ and $\mu_{X}$ is a fully supported Borel probability measure.   
	Given two tuples $g_X = (X, \psi_{X}, \varphi_X, \omega_{X}, \mu_{X})$, $g_Y=(Y, \psi_{Y}, \varphi_Y, \omega_{Y}, \mu_{Y})$ from $\mathcal{G}$ and trade-off parameters $(\alpha, \beta) \in [0, 1]^2$, the Fused Network Gromov-Wasserstein Distance between $g_X$ and $g_Y$ is defined for any $(p, q)\in [1, \infty]$ as follows:
	\begin{gather}
		\mathrm{FNGW}_{\alpha, \beta, q, p}(g_X, g_Y) = \min_{\mu\in \Pi(\mu_X, \mu_Y)}\mathcal{E}_{\alpha, \beta, q, p}(g_X, g_Y, \mu)
	\end{gather}
    with 
    \begin{align}
        \mathcal{E}_{\alpha, \beta, q, p}(g_X, g_Y, \mu) = \bigg( \int_{X\times Y}\int_{X\times Y} & [(1-\alpha-\beta) d_{\Psi}\left(\psi_X(x),\psi_Y(y)\right)^q  \nonumber \\ + \alpha d_\Omega(\omega_{X}(x, x'),\omega_{Y}(y, y'))^q & + \beta |\varphi_{X}(x, x') - \varphi_{Y}(y, y')|^q] ^p 
        d\mu(x,y)d\mu(x',y') \bigg)^{\frac{1}{p}}
    \end{align}
 \label{def:fngw}
\end{definition}

\begin{remark}
    Note that $\varphi$ is not necessarily a symmetric function, which corresponds to the setting of the NGW~\citep{chowdhury_gromovwasserstein_2019}, as opposed to the GW distance. We do not impose that constraint on the newly introduced function $\omega$ either.
\end{remark}

\begin{example}[Directed Labeled Graph]
\label{ex:dlg}
A graph can be represented by a tuple $(X, \psi_{X}, \varphi_X, \omega_{X}, \mu_{X})$ where $X$ is the set of all the nodes of the graph, $\psi$ maps each node into its node feature space $\Psi$, $\varphi(x_i, x_j)$ measures the presence of an \textbf{explicit} directed edge from node $x_i$ to $x_j$, $\omega$ maps each \textbf{implicit} directed edge (of which we always assume the existence) associated to two nodes into an edge feature space $\Omega$. Finally, $\mu_X$ attributes importance weights to the nodes of the graph.
\end{example}
With this representation of graphs, the distance introduced in Def. \ref{def:fngw} can naturally serve as a \textit{metric}. 
The following theorems state the existence of the FNGW distance and its metric properties.
\begin{theorem}[Optimal Coupling]
	Given  $g_X = (X, \psi_{X}, \varphi_X, \omega_{X}, \mu_{X})$,  $g_Y=(Y, \psi_{Y}, \varphi_Y, \omega_{Y}, \mu_{Y})$, for any $(p, q) \in  [1, \infty]$ and  $(\alpha, \beta) \in [0, 1]^2$, there exists an optimal coupling $\mu^*\in \Pi(\mu_X, \mu_Y)$ which satisfies $\mathrm{FNGW}_{\alpha, \beta, q, p}(g_X, g_Y) = \mathcal{E}_{\alpha, \beta, q, p}(g_X, g_Y, \mu^*)$. 
 \label{theorem:opt_couple}
\end{theorem}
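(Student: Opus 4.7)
The plan is to prove existence via the classical direct method of the calculus of variations, namely compactness of the admissible set plus (lower semi-)continuity of the cost. This is the standard strategy already used for Gromov--Wasserstein and Network Gromov--Wasserstein existence results, and the extra features $\psi$ and $\omega$ fit into exactly the same framework because they are assumed bounded and continuous.

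First I would endow $\mathrm{Prob}(X\times Y)$ with the topology of weak convergence. Since the marginals $\mu_X$ and $\mu_Y$ are fixed Borel probability measures on the polish spaces $X$ and $Y$, the family $\Pi(\mu_X,\mu_Y)$ is tight: for any $\varepsilon>0$, pick compacts $K_X\subset X$, $K_Y\subset Y$ with $\mu_X(K_X^c)<\varepsilon/2$, $\mu_Y(K_Y^c)<\varepsilon/2$, then every $\mu\in\Pi(\mu_X,\mu_Y)$ satisfies $\mu((K_X\times K_Y)^c)\le\mu_X(K_X^c)+\mu_Y(K_Y^c)<\varepsilon$. Prokhorov's theorem then gives relative compactness, and weak closedness of $\Pi(\mu_X,\mu_Y)$ (the marginal constraints pass to the limit by testing against bounded continuous functions depending only on $x$ or only on $y$) upgrades this to compactness.

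Next I would establish continuity of $\mu\mapsto\mathcal{E}_{\alpha,\beta,q,p}(g_X,g_Y,\mu)$. The integrand
\[
F(x,y,x',y')=\bigl[(1{-}\alpha{-}\beta)\,d_\Psi(\psi_X(x),\psi_Y(y))^q+\alpha\,d_\Omega(\omega_X(x,x'),\omega_Y(y,y'))^q+\beta\,|\varphi_X(x,x')-\varphi_Y(y,y')|^q\bigr]^p
\]
is a bounded continuous function on $(X\times Y)^2$: boundedness of $\psi_X,\psi_Y,\omega_X,\omega_Y,\varphi_X,\varphi_Y$ (together with continuity of $d_\Psi,d_\Omega$) makes each summand bounded and continuous, and composing with $t\mapsto t^p$ preserves both properties on a bounded range. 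Because $\mu_n\to\mu$ weakly in $\mathrm{Prob}(X\times Y)$ implies $\mu_n\otimes\mu_n\to\mu\otimes\mu$ weakly in $\mathrm{Prob}((X\times Y)^2)$ (a classical consequence of Portmanteau applied twice, or of Fubini for products of tight sequences), testing against the bounded continuous $F$ yields continuity of the $p$-th power of $\mathcal{E}$, and hence of $\mathcal{E}$ itself by monotonicity and continuity of $t\mapsto t^{1/p}$ on $[0,\infty)$.

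Finally I would invoke the Weierstrass extreme value principle: a continuous (in particular lower semi-continuous) real-valued function on the nonempty compact set $\Pi(\mu_X,\mu_Y)$ attains its infimum, so there exists $\mu^*\in\Pi(\mu_X,\mu_Y)$ with $\mathrm{FNGW}_{\alpha,\beta,q,p}(g_X,g_Y)=\mathcal{E}_{\alpha,\beta,q,p}(g_X,g_Y,\mu^*)$. Note that $\Pi(\mu_X,\mu_Y)$ is nonempty since the product coupling $\mu_X\otimes\mu_Y$ belongs to it. The only nontrivial step is the joint continuity in $\mu$ of the double integral; the main subtlety there is the weak convergence of product measures $\mu_n\otimes\mu_n$, which is where one needs to be careful, but it is standard once one observes that the integrand is bounded continuous on the product polish space.
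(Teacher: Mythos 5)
Your strategy coincides with the paper's: compactness of $\Pi(\mu_X,\mu_Y)$ in the weak topology plus semicontinuity of $\mu\mapsto\mathcal{E}_{\alpha,\beta,q,p}(g_X,g_Y,\mu)$, then Weierstrass. The paper cites the compactness as a lemma from \citet{chowdhury_gromovwasserstein_2019} and gets lower semicontinuity from a lemma in \citet{vayer_fused_2020}, whereas you reprove both from scratch: tightness via the fixed marginals, Prokhorov, closedness of the constraint set for compactness; and boundedness-plus-continuity of the integrand together with $\mu_n\otimes\mu_n\rightharpoonup\mu\otimes\mu$ for (full, not merely lower semi-) continuity of the cost. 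Both routes are valid, and yours is more elementary and self-contained, which is a genuine advantage; the paper's route has the benefit of covering all of $p\in[1,\infty]$ uniformly by design.

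The one gap in your argument is the endpoint $p=\infty$. Your continuity step uses the representation of $\mathcal{E}^p$ as $\int F\,d(\mu\otimes\mu)$ with $F$ bounded continuous, which is only the correct interpretation for $p<\infty$; for $p=\infty$ the cost is an essential supremum, i.e., an $L^\infty(\mu\otimes\mu)$ norm, and the map $\mu\mapsto\|F\|_{L^\infty(\mu\otimes\mu)}$ is not weakly continuous in general, only lower semicontinuous (mass can escape to regions where $F$ is small, so the essential sup can drop in the limit but cannot jump up). This is still enough for Weierstrass on the compact set $\Pi(\mu_X,\mu_Y)$, so the theorem survives, but your proof as written does not cover it. The cheap fix is to replace the continuity claim by lower semicontinuity throughout, which is exactly what the paper does and which holds for every $p\in[1,\infty]$ with a uniform argument.
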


\begin{theorem}[Metric Properties]
	The FNGW distance satisfies the following properties:
	for all $g_X = (X, \psi_{X}, \varphi_X, \omega_{X}, \mu_{X})$, $g_Y=(Y, \psi_{Y}, \varphi_Y, \omega_{Y}, \mu_{Y})$ and $g_Z=(Z, \psi_{Z}, \varphi_Z, \omega_{Z}, \mu_{Z})$ from $  \mathcal{G}$:
	\begin{itemize}[leftmargin=*]
		\item  $\mathrm{FNGW}_{\alpha, \beta, q, p}(g_X, g_Y) \geq 0 $
        \item $\mathrm{FNGW}_{\alpha, \beta, q, p}(g_X, g_Y) = \mathrm{FNGW}_{\alpha, \beta, q, p}(g_Y, g_X)$
        \item $\mathrm{FNGW}_{\alpha, \beta, q, p}(g_X, g_X) = 0 $. Moreover, $\mathrm{FNGW}_{\alpha, \beta, q, p}(g_X, g_Y) = 0 $ if and only there is a Borel probability space $(Z, \mu_Z)$ with measurable maps $f: Z\to X$ and  $g: Z\to Y$ such that
			\begin{gather}
		\nonumber f_{\#}\mu_Z = \mu_X \quad g_{\#}\mu_Z = \mu_Y \\
		\nonumber \| (1-\alpha - \beta) d_{\Psi}\left(\psi_X\circ f,\psi_Y\circ g\right)^q  + \alpha d_\Omega(f^{\#}\omega_{X}, g^{\#}\omega_{Y})^q + \beta |f^{\#}\varphi_X - g^{\#}\varphi_Y|^q\|_{\infty} = 0
	\end{gather}
    where $f^{\#}\omega_{X}: Z \times Z \to \Omega$ is the pullback weight function defined by $(z, z') \to \omega_{X}(f(z), f(z'))$ and  $f^{\#}\varphi_{X}: Z \times Z \to \mathbb{R}$ is given by $(z, z') \to \varphi_{X}(f(z), f(z'))$. 
        \item $\mathrm{FNGW}_{\alpha, \beta, q, p}(g_X, g_Z) \leq  2^{q-1}(\mathrm{FNGW}_{\alpha, \beta, q, p}(g_X, g_Y) + \mathrm{FNGW}_{\alpha, \beta, q, p}(g_Y, g_Z))$
	\end{itemize}
 \label{theorem:metric}
\end{theorem}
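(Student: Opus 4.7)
The plan is to handle the four items in order, mimicking the classical proofs for the Gromov--Wasserstein, Network Gromov--Wasserstein, and Fused Gromov--Wasserstein distances, but carrying the additional $\omega$-term throughout. To keep the notation light I will write $c_{AB}(a,b,a',b') := (1-\alpha-\beta)\, d_\Psi(\psi_A(a),\psi_B(b))^q + \alpha\, d_\Omega(\omega_A(a,a'),\omega_B(b,b'))^q + \beta\,|\varphi_A(a,a')-\varphi_B(b,b')|^q$, so that $\mathcal{E}_{\alpha,\beta,q,p}(g_A,g_B,\mu) = \|c_{AB}\|_{L^p(\mu\otimes\mu)}$.

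Non-negativity is immediate since each of the three summands in $c_{XY}$ is nonnegative. For symmetry, the transposition $\tau(x,y) = (y,x)$ induces a bijection $\Pi(\mu_X,\mu_Y) \leftrightarrow \Pi(\mu_Y,\mu_X)$ via pushforward, and $c_{XY}\circ(\tau\times\tau) = c_{YX}$ because $d_\Psi$, $d_\Omega$, and $|\cdot|$ are symmetric in their arguments; hence the two minima coincide. The diagonal coupling $\mu = (\mathrm{id}\times\mathrm{id})_\#\mu_X$ gives $c_{XX} \equiv 0$ on its support, proving $\mathrm{FNGW}(g_X,g_X)=0$. For the if-and-only-if: if the quadruple $(Z,\mu_Z,f,g)$ exists, the pushforward $(f,g)_\#\mu_Z \in \Pi(\mu_X,\mu_Y)$ yields zero cost because the integrand is the pullback of the combined term whose $L^\infty$ norm vanishes; conversely, if $\mathrm{FNGW}(g_X,g_Y) = 0$, Theorem~\ref{theorem:opt_couple} supplies an optimal $\mu^*$ with $\mathcal{E}(g_X,g_Y,\mu^*) = 0$, and taking $Z = X\times Y$, $\mu_Z = \mu^*$, $f,g$ the canonical projections delivers the claim — the marginal conditions yield $f_\#\mu_Z = \mu_X$ and $g_\#\mu_Z = \mu_Y$, while vanishing of the integral of the continuous nonnegative integrand against $\mu^*\otimes\mu^*$ is equivalent to its vanishing on the support.

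For the triangle inequality, fix optimal couplings $\mu_{XY}^* \in \Pi(\mu_X,\mu_Y)$ and $\mu_{YZ}^* \in \Pi(\mu_Y,\mu_Z)$ provided by Theorem~\ref{theorem:opt_couple}, then invoke the gluing lemma to obtain $\pi \in \mathrm{Prob}(X\times Y\times Z)$ whose $(X,Y)$- and $(Y,Z)$-marginals coincide with $\mu_{XY}^*$ and $\mu_{YZ}^*$. Let $\mu_{XZ} := (\mathrm{proj}_{X\times Z})_\#\pi \in \Pi(\mu_X,\mu_Z)$. Applying the triangle inequalities of $d_\Psi$, $d_\Omega$ and $|\cdot|$ termwise and then the elementary bound $(u+v)^q \leq 2^{q-1}(u^q+v^q)$ for $q \geq 1$ (by convexity of $t\mapsto t^q$) produces the pointwise estimate $c_{XZ}(x,z,x',z') \leq 2^{q-1}\bigl(c_{XY}(x,y,x',y') + c_{YZ}(y,z,y',z')\bigr)$ uniformly in $y,y'$. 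Integrating the $p$-th power against $\pi\otimes\pi$, extracting the $p$-th root, and applying Minkowski's inequality yields
\[
\mathrm{FNGW}(g_X,g_Z) \leq \|c_{XZ}\|_{L^p(\pi\otimes\pi)} \leq 2^{q-1}\bigl(\|c_{XY}\|_{L^p(\pi\otimes\pi)} + \|c_{YZ}\|_{L^p(\pi\otimes\pi)}\bigr).
\]
Since $c_{XY}$ depends only on the $X,Y$-coordinates, its norm against $\pi\otimes\pi$ equals its norm against $\mu_{XY}^*\otimes\mu_{XY}^*$, and analogously for $c_{YZ}$; the right-hand side therefore reduces to $2^{q-1}(\mathrm{FNGW}(g_X,g_Y) + \mathrm{FNGW}(g_Y,g_Z))$.

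The most delicate step will be the triangle inequality, where I must confirm that the gluing lemma applies in this generality (polish spaces with Borel probability measures) and that the constant $2^{q-1}$ is indeed the correct output of combining the inner $q$-th-power triangle inequality with the outer $L^p$ Minkowski inequality rather than a different power. A secondary subtlety is the converse of the identity property: the translation between integral vanishing against $\mu^*\otimes\mu^*$ and the stated $L^\infty$ condition on the supports relies on continuity of the integrand together with the full-support hypothesis built into the definition of $\mathcal{G}$.
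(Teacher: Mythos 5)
Your proposal follows the same decomposition into four items and uses exactly the key ingredients of the paper's proof: nonnegativity from the underlying metrics, the transposition pushforward $T(x,y)=(y,x)$ for symmetry, the $(f,g)$-pushforward and coordinate-projection constructions for the identity characterization (using Theorem~\ref{theorem:opt_couple} to supply the optimal coupling), and the gluing lemma plus inner triangle inequalities, the convexity bound $(u+v)^q \leq 2^{q-1}(u^q+v^q)$, and $L^p$-Minkowski for the relaxed triangle inequality. The constant $2^{q-1}$ is indeed correct and the marginalization step you flag (that the $L^p(\pi\otimes\pi)$ norm of $c_{XY}$ reduces to its norm against $\mu^*_{XY}\otimes\mu^*_{XY}$) is precisely how the paper closes the argument, so this is essentially the same proof.
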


We now give the definition of the FNGW when the measure $\mu_X$ is discrete, following Example~\ref{ex:dlg}.
\begin{definition}[FNGW Distance in the Discrete Case]
Let $(X, \psi_{X}, \varphi_{X}, \omega_{X}, \mu_{X}) \in \mathcal{G}$ and suppose $X$ is a finite set of size $n$. Let $F\in \Psi^n$ be the 
set of values of $\psi_X$ on every point of $X$, $A \in \mathbb{R}^{n\times n}$ and $E \in \Omega^{n\times n}$ be the respective sets for $\varphi_X$ and $\omega_X$ on every pair of points of $X^2$. Lastly, let $\boldsymbol{p} \in \Sigma_n$ be the histogram which satisfies $\mu_X = \sum_i^n p_i\delta_{x_i}$. Given two quadruples $g = (F, A, E, \boldsymbol{p})$ of size $n$, $\tilde{g} = (\tilde{F}, \tilde{A}, \tilde{E}, \tilde{\boldsymbol{p}})$ of size $m$ corresponding to two tuples of $\mathcal{G}$, and trade-off parameters $(\alpha, \beta) \in [0, 1]^2$, for  $(p, q) \in  [1, \infty]$ the Fused Network Gromov-Wasserstein distance between them is written as:
	\begin{equation}
		\mathrm{FNGW}_{\alpha, \beta, q, p}(g, \tilde{g}) = \min_{\pi \in \Pi(\boldsymbol{p},\tilde{\boldsymbol{p}})} \mathcal{E}_{\alpha, \beta, q, p}(\{F, A, E\}, \{\tilde{F}, \tilde{A}, \tilde{E}\}, \pi)
	\label{eq/fngw}
	\end{equation}
	with
	\begin{align}
		 \nonumber \mathcal{E}_{\alpha, \beta, q, p}(\{F, A, E\}, \{\tilde{F}, \tilde{A}, \tilde{E}\}, \pi) = \bigg( \sum_{i,j,k,l}\bigg[ & \alpha d_{\Omega}\left(E(i,k), \tilde{E}(j,l)\right)^q + \beta |A(i, k)- \tilde{A}(j,l)|^q \\  
        &  + (1-\alpha -\beta) d_{\Psi}\left(F(i),\tilde{F}(j)\right)^q \bigg]^p\pi_{k,l}  \pi_{i,j}\bigg)^{\frac{1}{p}}
	\end{align}
	We define the 4-dimensional tensors $J(A, \tilde{A})$ and $L(E, \tilde{E})$ as follows:
	\begin{gather}
        J_{i, j, k, l}(A, \tilde{A}) = |A(i,k) - \tilde{A}(j,l)|^q \quad L_{i, j, k, l}(E, \tilde{E}) = d_{\Omega}\left(E(i,k), \tilde{E}(j,l)\right)^q
	\end{gather}
	and the cost matrix $M(F, \tilde{F})$:
	\begin{equation}
		M_{i,j}(F, \tilde{F}) =  d_{\Psi}\left(F(i),\tilde{F}(j)\right)^q
	\end{equation}
	Choosing $p=1$, we can rewrite
     \begin{gather}
        \mathcal{E}_{\alpha, \beta, q, p} = 
        \langle(1-\alpha-\beta)M(F, \tilde{F}) + \beta J(A, \tilde{A})   \otimes\pi + \alpha L(E, \tilde{E}) \otimes\pi, \pi\rangle
     \end{gather}
\end{definition}

\begin{remark}[Computational Complexity]
    Following~\citet{vayer_optimal_2019}, the term $J(A, \tilde{A})$ can be computed efficiently with an appropriate choice of metric space $\Psi=\mathbb{R}^S$ with $d_{\Psi}(a, b) = \|a-b\|_{\mathbb{R}^S}$, and $q = 2$.
	The computation of $L(E, \tilde{E}) \otimes\pi$ is similarly non-trivial, requiring $O(n^2m^2T)$ operations, where $T$ is the number of operations necessary to compute the distance $d_{\Omega}(E(i,k), \tilde{E}(j,l))$. However, by choosing again an appropriate metric space $(\Omega, d_{\Omega})$ and $q$, its complexity can be reduced:
\label{remark:complexity}
\end{remark}

\begin{proposition}
	When $\Omega = \mathbb{R}^T$  with its associated metric $d_{\Omega}(a, b) = \|a-b\|_{\mathbb{R}^T}$ and $q=2$, the term $ L(E, \tilde{E}) \otimes\pi$ becomes
	\begin{equation}
		L(E, \tilde{E}) \otimes\pi = g(E)\boldsymbol{p}\mathbb{1}_m^\mathsf{T} + \mathbb{1}_n\boldsymbol{\tilde{p}}^\mathsf{T}h(\tilde{E})^\mathsf{T} -2 \sum_{t=1}^{T}E[t]\pi\tilde{E}[t]^\mathsf{T} 
    \label{eq:tensor_matrix}
	\end{equation}
	 where $g: \mathbb{R}^{n\times n\times T} \to \mathbb{R}^{n\times n}$ is expressed as $g(E)_{i,j}= \|E(i,j)\|^2_{\mathbb{R}^T}$, $h: \mathbb{R}^{m\times m\times T} \to \mathbb{R}^{m\times m}$ is expressed as by $h(\tilde{E})_{i,j}= \|\tilde{E}(i,j)\|^2_{\mathbb{R}^T}$ and the matrix $E[t](i, j) = E(i, j, t)$ for any $i, j, t$. It can hence be computed with the complexity $O(n^2mT+nm^2T)$.
\label{prop:complexity_l2}
\end{proposition}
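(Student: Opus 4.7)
The plan is to exploit the Hilbert-space structure of $\Omega = \mathbb{R}^T$ and expand the squared Euclidean distance algebraically. Since $q=2$, we have the identity
\begin{equation*}
    d_\Omega\bigl(E(i,k), \tilde{E}(j,l)\bigr)^2 = \|E(i,k)\|_{\mathbb{R}^T}^2 + \|\tilde{E}(j,l)\|_{\mathbb{R}^T}^2 - 2 \sum_{t=1}^{T} E(i,k,t)\,\tilde{E}(j,l,t).
\end{equation*}
Plugging this into the contraction $(L(E,\tilde{E})\otimes\pi)_{i,j}=\sum_{k,l} L_{i,j,k,l}(E,\tilde{E})\,\pi_{k,l}$ splits the tensor--matrix product into three separate contributions that I would handle one at a time.

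For the first piece, $\sum_{k,l} g(E)_{i,k}\,\pi_{k,l}$ collapses via the marginal constraint $\sum_l \pi_{k,l} = p_k$ to $\sum_k g(E)_{i,k}\,p_k = (g(E)\boldsymbol{p})_i$. Because this is independent of $j$, it contributes the rank-one matrix $g(E)\boldsymbol{p}\mathbb{1}_m^{\mathsf{T}}$. The second piece is symmetric: using the other marginal $\sum_k \pi_{k,l} = \tilde{p}_l$, it reduces to $(h(\tilde{E})\tilde{\boldsymbol{p}})_j$, independent of $i$, producing $\mathbb{1}_n\tilde{\boldsymbol{p}}^{\mathsf{T}}h(\tilde{E})^{\mathsf{T}}$.

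For the cross term I would swap the $t$-summation with the $(k,l)$-summation, so that for each fixed $t$ the remaining double sum $\sum_{k,l} E[t](i,k)\,\pi_{k,l}\,\tilde{E}[t](j,l)$ is exactly the $(i,j)$-entry of the matrix product $E[t]\,\pi\,\tilde{E}[t]^{\mathsf{T}}$. Summing over $t$ and multiplying by $-2$ recovers the third term of \eqref{eq:tensor_matrix}. The complexity claim then follows by bookkeeping: forming the two slice-wise squared-norm matrices $g(E)$ and $h(\tilde{E})$ costs $O((n^2+m^2)T)$, while each of the $T$ products $E[t]\pi\tilde{E}[t]^{\mathsf{T}}$ costs $O(n^2m+nm^2)$, which dominates and gives the announced $O(n^2mT+nm^2T)$ bound. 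There is essentially no analytic obstacle here; the only subtlety is index hygiene, namely keeping $g$ attached to $E$ and $h$ to $\tilde{E}$ so that the appropriate marginal of $\pi$ contracts cleanly in each of the first two terms.
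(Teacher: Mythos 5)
Your proposal is correct and follows essentially the same route as the paper: expand $\|E(i,k)-\tilde{E}(j,l)\|^2$ into three terms and contract each against $\pi$ using the marginal constraints. The only cosmetic difference is that the paper fixes $t$ and then invokes Prop.~1 of Peyr\'e et al.\ (2016) for the resulting slice-wise GW decomposition, whereas you carry out that three-term contraction explicitly yourself; the underlying computation and the resulting complexity bound are identical.
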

\textbf{In the remaining of this work, for considerations of computational efficiency, we choose to work in the metric space $\Psi$ and $\Omega$ defined in Remark \ref{remark:complexity} and Proposition \ref{prop:complexity_l2}, with $p=1, q=2$.}

Calculating the FNGW distance, in practice, amounts to solving a non-convex quadratic optimization problem. Following~\citet{vayer_optimal_2019}, we use the Conditional Gradient Descent (CGD) (i.e, the Frank-Wolfe algorithm). The complete algorithm is given in Alg.~\ref{algo:fngw_cal}, with the gradient with respect to the transport plan $\pi^{(i-1)}$ having the following form:
\begin{align}
	G = (1-\alpha -\beta) M(F, \tilde{F}) + 2\beta J(A, \tilde{A}) \otimes\pi^{(i-1)} + 2\alpha L(E, \tilde{E}) \otimes\pi^{(i-1)} 
\end{align}

\begin{algorithm}[!tb]
    \caption{Computation of the FNGW Distance by CGD}	
	\label{algo:fngw_cal}
\begin{algorithmic}
    \STATE {\bfseries Input:}  $g = (F, A, E, \boldsymbol{p}) $, $\tilde{g} = (\tilde{F}, \tilde{A}, \tilde{E}, \tilde{\boldsymbol{p}}) $ and trade-off parameters $(\alpha, \beta)$
    \STATE {\bfseries Init:} $\pi^{(0)} = \boldsymbol{p}\tilde{\boldsymbol{p}}^{\mathsf{T}} \in \mathbb{R}^{n\times m}$
    \FOR{$i=1,\dots, N$}
    \STATE Calculate gradient: $G = \nabla_{\pi^{(i-1)}} \mathcal{E}_{\alpha, \beta}(\{F, A, E\}, \{\tilde{F}, \tilde{A}, \tilde{E}\}, \pi^{(i-1)})$
    \STATE Solve the optimization problem with an OT solver: $\tilde{\pi}^{(i-1)} \in \argmin_{\tilde{\pi}\in \Pi(\boldsymbol{p}, \tilde{\boldsymbol{p}})}\langle G, \tilde{\pi}  \rangle$
    \STATE Update the optimal plan: $\pi^{(i)} = (1 - \gamma^{(i)})\pi^{(i-1)} + \gamma^{(i)} \tilde{\pi}^{(i-1)}$
		with $\gamma^{(i)} \in (0, 1)$ given by line-search algorithm (See details in Appendix B). 
    \ENDFOR
    \STATE Calculate the distance: $\mathrm{FNGW}_{\alpha, \beta}(g, \tilde{g}) = \mathcal{E}_{\alpha, \beta}(\{F, A, E\}, \{\tilde{F}, \tilde{A}, \tilde{E}\}, \pi^{(N)})$
    \STATE {\bfseries Output:} $\mathrm{FNGW}_{\alpha, \beta}(g, \tilde{g})$ and $\pi^{(N)}$
\end{algorithmic}
\end{algorithm} 

\section{Learning with Fused Network Gromov-Wasserstein Barycenter}
OT is the source of a continuously increasing array of tools used to apply machine learning methods to structured data. We set to demonstrate the interest of our proposed distance by showing its applicability to very different learning problems on graphs.
Within this section, we are given a set of graphs $\{g_i\}_{i=1}^{n}$ as the input of learning algorithms where $g_i = (F_i, A_i, E_i, \boldsymbol{p_i}) \in  \mathbb{R}^{n_i\times S} \times  \mathbb{R}^{n_i\times n_i} \times  \mathbb{R}^{n_i \times n_i \times T} \times \Sigma_{n_i} $ is a graph of $n_i$ nodes.

\subsection{Fused Network Gromov-Wasserstein Barycenter}
For many applications, such as graph clustering~\citep{peyre_gromov-wasserstein_2016, vayer_optimal_2019} or graph prediction~\citep{brogat-motte_learning_2022}, it is extremely useful to be able to compute a GW-based barycenter. In this section, we define a barycenter based on our proposed FNGW distance and describe an algorithm to compute it.

\begin{definition}[FNGW Barycenter]
Given a set $\{g_k\}_{k=1}^{K}$ 
and a set of weights $\{\lambda_k\}_{k=1}^K$ such that $\sum_k\lambda_k = 1$, the FNGW Barycenter for a pre-defined 
histogram $\boldsymbol{p} \in \Sigma_n$ is defined as follows:
\begin{gather}
	 \mathrm{Bary}(\{\lambda_k\}_k, \{g_k\}_{k}, \boldsymbol{p}) = \argmin_{F \in \mathbb{R}^{n\times S}, A \in \mathbb{R}^{n\times n}, E\in \mathbb{R}^{n\times n\times T}} \sum_k \lambda_k \mathrm{FNGW}_{\alpha, \beta}((F, A, E, \boldsymbol{p}), g_k)
 \label{eq:fngw:min_bary}
\end{gather}
\end{definition}
The above optimization problem can be reformulated as:
\begin{equation}
	\min_{A , E, F ,  (\pi_k \in \Pi(p, p_k))_k} \sum_k \lambda_k \mathcal{E}_{\alpha, \beta}\left((F, A, E), (F_k, A_k, E_k), \pi_k\right) 
	\label{eq:fngw:min_bary_2}
\end{equation}
To obtain the FNGW barycenter, we employ the Block Coordinate Descent (BCD) algorithm, which means that we carry out the minimization in Equation \ref{eq:fngw:min_bary} iteratively with respect to $\{\pi_k\}_k$, $F$, $A$ and $E$. The minimization with respect to $E$ has a closed form, which is given in the following proposition:
\begin{proposition}
	\label{prop:bary_c}
 Optimizing Equation \ref{eq:fngw:min_bary_2} with respect to tensor $E$
 has a closed-form solution:
    \begin{equation}
        E = \frac{1}{\mathcal{I}_{n\times T} \times_2 \boldsymbol{p} \boldsymbol{p}^{\mathsf{T}}}\sum_k\lambda_{k} (E_k \times_2 \pi_k) \times_1 \pi_k
    \label{eq:c_update}
    \end{equation}
\end{proposition}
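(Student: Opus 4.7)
The plan is to isolate the $E$-dependent part of the objective, exploit the $L^2$ structure and the marginal constraints on each $\pi_k$, and then solve the resulting unconstrained strictly convex quadratic minimization entrywise.

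First, I fix $F$, $A$, and $\{\pi_k\}_k$ and collect from the inner objective of equation \eqref{eq:fngw:min_bary_2} the terms depending on $E$. Under the working choices $p=1$, $q=2$, $\Omega = \mathbb{R}^T$, the $E$-dependent summand associated with graph $k$ is
\begin{equation*}
\alpha \sum_{i,j,i',j'} \|E(i,i') - E_k(j,j')\|_{\mathbb{R}^T}^2 \, \pi_k(i,j) \, \pi_k(i',j').
\end{equation*}
Expanding the squared norm, the purely $E_k$ term is constant and can be dropped. The quadratic term $\|E(i,i')\|^2$, summed against $\pi_k(i,j)\pi_k(i',j')$, collapses via the marginal constraints $\sum_j \pi_k(i,j) = p_i$ and $\sum_{j'} \pi_k(i',j') = p_{i'}$ to $\sum_{i,i'} \|E(i,i')\|^2 p_i p_{i'}$, which is independent of $k$. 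Summing over $k$ with weights $\lambda_k$ and using $\sum_k \lambda_k = 1$, the $E$-dependent objective reduces to
\begin{equation*}
\alpha \sum_{i,i'} \|E(i,i')\|^2 p_i p_{i'} \;-\; 2\alpha \sum_k \lambda_k \sum_{i,j,i',j'} \langle E(i,i'), E_k(j,j') \rangle \pi_k(i,j)\pi_k(i',j') \;+\; \text{const}.
\end{equation*}

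Second, I differentiate with respect to each vector $E(i,i') \in \mathbb{R}^T$ and set the gradient to zero, which yields the first-order condition
\begin{equation*}
p_i p_{i'}\, E(i,i') \;=\; \sum_k \lambda_k \sum_{j,j'} E_k(j,j')\, \pi_k(i,j)\pi_k(i',j').
\end{equation*}
Since $\boldsymbol{p}$ is fully supported we have $p_i p_{i'} > 0$, and since the quadratic form $\sum_{i,i'} \|E(i,i')\|^2 p_i p_{i'}$ is strictly convex in $E$, this critical point is the unique global minimizer.

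Third, I rewrite the pointwise identity in the mode-product notation of \eqref{eq:c_update}. Unfolding the definition of $\times_n$, a direct computation gives $(E_k \times_2 \pi_k)(i_1, j, t) = \sum_{i_2} E_k(i_1, i_2, t) \pi_k(j, i_2)$, and then $\big((E_k \times_2 \pi_k) \times_1 \pi_k\big)(i, i', t) = \sum_{j, j'} E_k(j, j', t)\, \pi_k(i, j)\, \pi_k(i', j')$, matching the numerator above. For the denominator, the Kronecker structure of $\mathcal{I}_{n\times T}$ yields $(\mathcal{I}_{n\times T} \times_2 \boldsymbol{p}\boldsymbol{p}^{\mathsf{T}})(i, i', t) = \sum_{i_2} \hat{\delta}_{i i_2}\, p_{i'} p_{i_2} = p_i p_{i'}$ for every channel $t$, so element-wise division reproduces the factor $1/(p_i p_{i'})$ uniformly across the $T$ channels, giving exactly \eqref{eq:c_update}.

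The main technical nuisance, and really the only place where one can slip up, is tracking indices in the mode-product notation and verifying that $\times_1$ and $\times_2$ are applied to the correct axes (so that $\pi_k$ contracts the two graph-$k$ node indices of $E_k$ into the two barycenter node indices of $E$). Once the unrolling is done, the proof is just the unique stationary point of a strictly convex quadratic.
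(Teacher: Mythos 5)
Your proof is correct and follows the same underlying route as the paper's: both reduce to the first-order optimality condition of the strictly convex quadratic in $E$ and then recast the resulting entrywise identity into the $\times_1,\times_2$ notation. The only difference is presentational: the paper rewrites the objective via its Equation~\eqref{eq:tensor_matrix} and then invokes Proposition~13 of Peyr\'e et al.\ (2016) channel by channel, whereas you unpack that same computation by hand (expanding the squared $\ell^2$ norm, collapsing the $\|E(i,i')\|^2$ term through the marginal constraints, and differentiating), which makes the argument self-contained but not substantively different.
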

\begin{remark}
    A regularization term $\gamma \|A\|_{1, 1}$ can be added to encourage sparsity on the barycenter matrix $A$. Due to this term, and contrarily to~\citet{vayer_optimal_2019}, a closed-form solution can no longer be obtained for $A$. Instead, we can use the proximal gradient algorithm to update $A$ at each iteration where the proximity operator $\mathrm{prox}_{\gamma\|.\|}$ coincides with the soft thresholding function $S_{\gamma}$ defined as $S_{\gamma} (A) = \mathrm{sign}(A)\times \mathrm{max}(0, A - \gamma)$ . The complete optimization algorithm is detailed in Alg. \ref{algo:barycenter}.
\end{remark}

We can notice that the optimization procedure preserves an interesting property for the tensor $E$ of the resulting barycenter:

\begin{algorithm}[tb]
\caption{Computation of FNGW Barycenter with BCD}
\label{algo:barycenter}
\begin{algorithmic}
    \STATE {\bfseries Input:} $\{g_k\}_k $,  fixed histogram $\boldsymbol{p}$, trade-off parameter $(\alpha, \beta)$, step size $\epsilon$ of proximal gradient algorithm and sparsity regularization parameter $\gamma$.
    \STATE {\bfseries Init:} Randomly initialize $E^{(0)}, F^{(0)}$ and $A^{(0)}$.
    \FOR{$i=1,\dots, N$}
    \STATE Update $(\pi_k)_k$ with Alg. \ref{algo:fngw_cal}: $\pi_k^{(i)} = \argmin_{\pi_k \in \Pi(\boldsymbol{p}, \boldsymbol{p_k})} \mathcal{E}_{\alpha, \beta}\left((F^{(i-1)}, A^{(i-1)}, E^{(i-1)}), g_k, \pi_k\right)$
    \STATE Update $E$ with Proposition \ref{prop:bary_c} and $F$ (Following Equation 8 in \citet{cuturi_fast_2014}).
    \STATE Update $A$ with proximal gradient descent:
    \FOR{$t=1,\dots, T$}
     \STATE $A^{t+1} = S_{\epsilon\gamma}(A^t - \epsilon \nabla_{A}(\sum_k \mathcal{E}_{\alpha, \beta}))$ with $\nabla_{A}(\sum_k \mathcal{E}_{\alpha, \beta}) = 2 \beta \sum_k \lambda_k(A \circ \boldsymbol{p} \boldsymbol{p}^T - \pi_k^{(i)}F_k\pi_k^{(i)T})$
    \ENDFOR
    \ENDFOR
    \STATE {\bfseries Output:} The barycenter $(F^{(N)}, A^{(N)}, E^{(N)})$
\end{algorithmic}
\end{algorithm}

\begin{proposition}
	If the set of tensors $(E_k)_k$ satisfies the condition: $\forall i, j,k, \sum_t^T E_k(i,j, t) = a \in \mathbb{R}$, 
	then the barycenter $E$ given by Algorithm \ref{algo:barycenter} also verify the same property.
    \label{prop:bary_simplex}
\end{proposition}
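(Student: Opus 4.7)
The plan is to unfold Proposition~\ref{prop:bary_c} into coordinates and apply the hypothesis slice-wise, then exploit the marginal constraints on $(\pi_k)_k$. Using the definitions of the mode-$n$ product from the notation section, I first evaluate the denominator at an arbitrary triple $(i, i', t)$: since $\mathcal{I}_{n\times T}(i, i_2, t) = \hat{\delta}_{i\, i_2}$, one gets $(\mathcal{I}_{n\times T} \times_2 \boldsymbol{p}\boldsymbol{p}^{\mathsf{T}})(i, i', t) = p_i p_{i'}$, which is crucially independent of $t$. Unfolding the two mode-products in the numerator gives $\sum_k \lambda_k \sum_{j, j'} E_k(j, j', t)\,\pi_k(i, j)\,\pi_k(i', j')$, and the element-wise division yields
\begin{equation*}
    E(i, i', t) \;=\; \frac{1}{p_i p_{i'}} \sum_k \lambda_k \sum_{j, j'} E_k(j, j', t)\,\pi_k(i, j)\,\pi_k(i', j').
\end{equation*}

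Next I sum over $t$. Only the factors $E_k(j, j', t)$ depend on $t$, so I can swap sums and apply the hypothesis $\sum_t E_k(j, j', t) = a$ for all $j, j', k$:
\begin{equation*}
    \sum_t E(i, i', t) \;=\; \frac{a}{p_i p_{i'}} \sum_k \lambda_k \sum_{j, j'} \pi_k(i, j)\,\pi_k(i', j').
\end{equation*}
The inner double sum factorizes as $\bigl(\sum_j \pi_k(i, j)\bigr)\bigl(\sum_{j'} \pi_k(i', j')\bigr) = p_i p_{i'}$ by the marginal constraints of $\pi_k \in \Pi(\boldsymbol{p}, \boldsymbol{p_k})$, and combined with $\sum_k \lambda_k = 1$ everything collapses to $a$, which is precisely the claim.

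A small point worth mentioning: Algorithm~\ref{algo:barycenter} initializes $E^{(0)}$ randomly, so the property need not hold at initialization. However, the update in Proposition~\ref{prop:bary_c} recomputes $E$ entirely from $(E_k)_k$, $(\pi_k)_k$, $(\lambda_k)_k$ and $\boldsymbol{p}$ without reusing the previous iterate, so the property is restored at the first iteration and preserved through iteration $N$. There is no serious obstacle here: the argument is essentially bookkeeping, and the only genuine observation is that every non-$t$-dependent factor (the marginal of $\pi_k$, the weight $\lambda_k$, and the normalization $p_i p_{i'}$) cancels cleanly after the slice-wise sum, which is exactly what makes the simplex-sum property stable under the barycentric averaging defined by the closed form.
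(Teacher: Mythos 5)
Your proof is correct and follows essentially the same route as the paper's: write the closed-form update coordinate-wise, sum over the last axis $t$, apply the hypothesis $\sum_t E_k(j,j',t)=a$, and then collapse the remaining sums using the marginal constraints $\sum_j \pi_k(i,j)=p_i$ together with $\sum_k \lambda_k = 1$. Your extra remark that the random initialization $E^{(0)}$ need not satisfy the property but that the update recomputes $E$ from scratch is a fair clarification the paper leaves implicit, but the core argument is the same.
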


\begin{remark}
	When the set of edge features for the graphs $\{g_k\}_{k=1}^{K}$ lies in a simplex space, the above proposition gives us the guarantee that the edge features of their barycenter will also be a simplex. 
\end{remark}
The proofs of Proposition \ref{prop:bary_c} and \ref{prop:bary_simplex} can be found in Appendix A.
\subsection{Dictionary Learning with the {FNGW Barycenter}}

We now describe a learning task that exploits the notion of FNGW-barycenter. Dictionary learning based for graphs is an extension of factorization methods to graphs and was popularized by the seminal works of \citep{xu_gromov-wasserstein_2020} and \citet{vincent-cuaz_online_2021}. In our setting, given a set of labeled graphs $\{g_i\}_{i=1}^n$
, we want to learn a dictionary of \textit{atoms} $\{\overline{g}_s\}_{s=1}^S$
so that each graph of the training set can be reconstructed as a FNGW-barycenter of the dictionary atoms.

Assuming that the probability distributions for the atoms $\{\overline{\boldsymbol{p_s}}\}_{s=1}^S$ are fixed beforehand, our dictionary learning problem writes as:
\begin{align}
    &\min_{\substack{\{\boldsymbol{w_i}\}_{i=1}^n s.t.  \boldsymbol{w_{i}} \in \Sigma_S \\ \{(\overline{E}_s, \overline{A}_s, \overline{F}_s )\}_{s=1}^S}} 
    \sum_{i=1}^{n} \mathrm{FNGW}_{\alpha, \beta} \Big(g_i, \mathrm{Bary}(\boldsymbol{w_i}, \{\overline{g}_s\}_{s=1}^S , \boldsymbol{p_i})\Big) - \lambda \sum_i^n \|\boldsymbol{w_i}\|_2^2
    \label{eq:dl_sim}
\end{align}

where the weights $\{\boldsymbol{w_i}\}_i$ describe the \textit{embeddings} of our graphs in the dictionary and $\lambda$ is a regularization parameter which controls their sparsity.

\begin{remark}
It should be noted that this problem setting corresponds to the dictionary learning in \citet{xu_gromov-wasserstein_2020} rather than the one presented in~\citet{vincent-cuaz_online_2021}, where a graph is {projected on a linear representation of atoms}. Here, representing a graph as a FNGW-barycenter of the atoms allows in particular to use atoms comprising various number of nodes.
\end{remark}

We propose to solve the optimization problem of Equation~\ref{eq:dl_sim} with a stochastic iterative procedure, which for each sampled batches of data 
alternatively updates the embeddings $\{\boldsymbol{w_i}\}_i$ and atoms $\{(\overline{E}_s, \overline{A}_s, \overline{F}_s )\}_{s=1}^S$.
Finding the embedding $\{\boldsymbol{w_i}\}_i$ for a fixed dictionary requires an intermediate procedure called \textit{unmixing}, which is in our case a bi-level optimization problem, since the reconstructed graph is the solution of a FNGW-barycenter problem. Assuming a small change in $\boldsymbol{w_i}$ will not affect the solution of the barycenter problem, we propose therefore to solve the latter first, and find the optimal $\boldsymbol{w_i}$ with the fixed OT plans of the barycenter. The detailed algorithm along with the unmixing procedure are described in Appendix B.

\subsection{Supervised Graph Prediction with the {FNGW Barycenter}}

A Supervised Graph Prediction task consists in learning to predict an output graph $g$ from a given input $x$ in input space $\mathcal{X}$.  An appealing and original application of Optimal Transport for graphs consists in using a Gromov-Wasserstein distance as a loss in this supervised task~\citep{brogat-motte_learning_2022}. 

Let us define $\mathcal{G}$, the set of labeled graphs with maximum number of nodes $m_{\mathrm{max}}$. 
\begin{align}
    \nonumber \mathcal{G} = \big\{g = (F, A, E, \boldsymbol{p}) \hspace{0.25em} | \hspace{0.25em} & m_g \leq m_{\mathrm{max}}, \hspace{0.25em} C \in \{0, 1\}^{m_g \times m_g}, \hspace{0.25em} F = (F_i)_{i=1}^{m_g} \in \mathcal{F}^{m_g}, \\
    & E = (E_{i j}) \in \mathcal{T}^{m_g \times m_g}, \hspace{0.25em} \boldsymbol{p} ={m_g}^{-1} \mathds{1}_{m_g} \big\}
    \label{eq:g_stru}
\end{align}
where  $\mathcal{F} \subset \mathbb{R}^S$ and $\mathcal{T} \subset \mathbb{R}^T$ are finite node and edge features spaces, and $m_{\mathrm{max}}$ is the upper graph size. Denote $\mathcal{G}_m$ the relaxed version of set $\mathcal{G}$:\begin{align}
    \nonumber \mathcal{G}_m = \big\{(F, A, E, \boldsymbol{p}) \hspace{0.25em} | \hspace{0.25em} & C \in [0, 1]^{m\times m}, \hspace{0.25em} F = (F_i)_{i=1}^m \in \mathrm{Conv}(\mathcal{F})^m, \\
    & E = (E_{i j}) \in \mathrm{Conv}(\mathcal{T})^{m\times m}, \hspace{0.25em}\boldsymbol{p} =m^{-1} \mathds{1}_m \big\} 
\end{align}
where $\mathrm{Conv(\cdot)}$ denotes the convex hull of the set.

We consider a set of training pairs consisting of inputs and graphs to be predicted  $\{(x_i, g_i)\}_{i=1}^n$ drawn from a fixed but unknown distribution $\rho$ on $\mathcal{X} \times \mathcal{G}$.
We are interested in the relaxed supervised graph prediction problem, i.e., finding a estimator $f: \mathcal{X} \to \mathcal{G}_m$ of the minimizer $f^*$ of the expected risk $\mathcal{R}(f) = \mathbb{E}_{\rho}[\fngw(f(X), G)]$ where the $\mathrm{FNGW}_{\alpha, \beta}$-distance's definition is extended to $\mathcal{G}_m \times \mathcal{G}$.
We propose, as a solution to this problem, an estimator based on surrogate least square regression \citep{ciliberto_general_2020, brogat-motte_learning_2022} that expresses as a barycenter of the output training data weighted by a function $\gamma$ of the input $x$:
\begin{equation}
    \hat{f}(x) = \argmin_{g\in \mathcal{G}_m} \sum_{i=1}^n \gamma(x)_i \fngw (g, g_i)
    \label{eq:krr_fngw}
\end{equation}
with $\gamma(x) = (\mathbf{K} + n\lambda I)^{-1}\mathbf{K}_x$, where $\mathbf{K}$ is the Gram matrix of the positive definite kernel $k$ defined on the input space $\mathcal{X}$ and $\lambda > 0$ is a ridge regularization parameter. Note that for a given $x$, $\hat{g} \in \mathcal{G}$ is obtained by discretization of $\hat{f}(x)$.

The proposed estimator relies on the Implicit Loss Embedding (ILE) condition\citep{ciliberto_general_2020} that expresses that a (ILE) loss writes as a inner product of feature maps in some well chosen Hilbert space. The ILE framework justifies the relevance of solving a surrogate regression problem in the so-called output feature space and ensures theoretical guarantees for this class of estimators. The next proposition guarantees that the estimator proposed in Equation \ref{eq:krr_fngw} fits ILE condition.
\begin{proposition}
    The FNGW loss admits an Implicit Loss Embedding (ILE).
    \label{prop:ile}
\end{proposition}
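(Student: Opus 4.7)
\textbf{Proof plan for Proposition~\ref{prop:ile}.} The strategy is to reduce to the case of a finite label space, paralleling the argument used by \citet{brogat-motte_learning_2022} for the FGW loss. Concretely, I would embed $\mathcal{G}$ into $\mathbb{R}^{|\mathcal{G}|}$ via canonical basis vectors and $\mathcal{G}_m$ into $\mathbb{R}^{|\mathcal{G}|}$ via the vector of FNGW distances to each label, and then check boundedness of both feature maps.

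First, I would observe that the label space $\mathcal{G}$ in Equation~\ref{eq:g_stru} is finite: the node feature set $\mathcal{F}$, the edge feature set $\mathcal{T}$, the admissible adjacency entries $\{0,1\}$, and the admissible sizes $\{1,\dots,m_{\mathrm{max}}\}$ are all finite, so $|\mathcal{G}| < \infty$. Moreover, the relaxed set $\mathcal{G}_m$ is a compact subset of a finite-dimensional Euclidean space, as a product of $\mathrm{Conv}(\mathcal{F})^m$, $[0,1]^{m\times m}$, and $\mathrm{Conv}(\mathcal{T})^{m\times m}$, each of which is compact since $\mathcal{F}$ and $\mathcal{T}$ are finite.

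Second, I would establish that for each fixed $g \in \mathcal{G}$, the map $\tilde{g}\in \mathcal{G}_m \mapsto \fngw(\tilde{g}, g)$ is continuous and bounded. With the choices $p=1$, $q=2$ made in the paper, the inner objective $\mathcal{E}_{\alpha, \beta, 2, 1}(\{F, A, E\}, \{\tilde{F}, \tilde{A}, \tilde{E}\}, \pi)$ is a polynomial of degree two in $\pi$ whose coefficients depend continuously on $(F, A, E)$ and $(\tilde{F}, \tilde{A}, \tilde{E})$. The constraint set $\Pi(\boldsymbol{p}, \tilde{\boldsymbol{p}})$ is a fixed compact polytope (the marginals are uniform and do not depend on the point of $\mathcal{G}_m$ being evaluated), so by the standard continuous-infimum-on-compact-set argument (Berge's maximum theorem), $\fngw(\cdot, g)$ is continuous on $\mathcal{G}_m$, and boundedness then follows from compactness of $\mathcal{G}_m$. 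Finally, I would construct the ILE explicitly: take $\mathcal{H} = \mathbb{R}^{|\mathcal{G}|}$ with canonical basis $\{e_g\}_{g\in\mathcal{G}}$, set $\phi(g) = e_g$ for $g \in \mathcal{G}$ and $\psi(\tilde{g}) = \sum_{g\in\mathcal{G}} \fngw(\tilde{g}, g)\, e_g$ for $\tilde{g} \in \mathcal{G}_m$. Both maps are measurable with bounded norm by the preceding step, and by construction $\langle \psi(\tilde{g}), \phi(g)\rangle_{\mathcal{H}} = \fngw(\tilde{g}, g)$, which is precisely the ILE condition of \citet{ciliberto_general_2020}.

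The only genuinely technical step is verifying continuity of $\fngw$ on the relaxed space $\mathcal{G}_m$; this is routine given the explicit polynomial form of the inner objective but must be stated carefully since $\fngw$ is defined as a minimum. Everything else is bookkeeping once the finiteness of $\mathcal{G}$ has been exploited. An alternative, more structural proof would attempt to bypass the finite-label reduction by expanding the squared Euclidean terms in $\mathcal{E}_{\alpha,\beta,2,1}$ into inner products (using $\|u-v\|^2 = \|u\|^2 + \|v\|^2 - 2\langle u, v\rangle$), yielding a decomposition $\fngw(\tilde{g}, g) = I(\tilde{g}) + I(g) - 2 \min_\pi \Phi(\tilde{g}, g, \pi)$; however, handling the minimum over $\pi$ as a Hilbert-space inner product would require nontrivial additional work, whereas the finite-label route gives the result immediately.
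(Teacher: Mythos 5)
Your proof is correct and follows the same overall strategy as the paper: establish that $\mathcal{G}$ is finite, that $\mathcal{G}_m$ is compact, and that $\fngw(\cdot, g)$ is continuous on $\mathcal{G}_m$, then conclude the ILE property. The paper packages the last step by invoking Theorem~12 of \citet{ciliberto_general_2020} rather than constructing $\mathcal{H}$, $\psi$, $\phi$ explicitly as you do, but your canonical-basis construction is precisely the content of that theorem in the finite-label case, so the two are equivalent. The one place where the details genuinely diverge is the continuity argument: you invoke Berge's maximum theorem (continuous objective, fixed compact feasible set $\Pi(\boldsymbol{p},\tilde{\boldsymbol{p}})$ since both marginals are uniform and independent of the point in $\mathcal{G}_m$), whereas the paper proves continuity by hand, bounding $|\fngw(g_m+\mathrm{d}g_m,g)-\fngw(g_m,g)|$ via $|\min_\pi f - \min_\pi h| \leq \sup_\pi|f-h|$ followed by Cauchy--Schwarz and boundedness of $\pi_{ij}$. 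Your route is cleaner and more conceptual; the paper's route is more self-contained and gives a quantitative modulus of continuity, but both yield the needed conclusion.
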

Informally, this implies that our estimator is universally consistent and its learning rate is of order $n^{-1/4}$ with additional assumptions. The proof is given in Appendix A.
\section{Empirical evaluation of learning with FNGW-distance and barycenter}

In this section, we assess the relevance of the proposed distance on a diverse set of experiments, including graph classification tasks, barycenter computation and dictionary learning on synthetic datasets and  a real-world graph prediction. A Python implementation of the algorithms presented for these tasks can be found on github\footnote{Anonymized}. We leave additional clustering experiments for Appendix D.

\subsection{Labeled Graph Classification}
Supervised Classification is a classic task in learning at the graph-level. We study the potential benefit of our FNGW distance when input graphs to be classified are represented by kernels based on the FNGW-distance, using a variety of datasets where both node and edge features are present: Cuneiform~\citep{kriege_recognizing_2018}, MUTAG~\citep{debnath_structure-activity_1991, kriege_subgraph_2012}, PTC-MR~\citep{helma_predictive_2001, kriege_subgraph_2012}, BZR-MD, COX2-MD, DHFR-MD and ER-MD~\citep{sutherland_spline-fitting_2003, kriege_subgraph_2012}. All these datasets were collected by~\citet{kersting_benchmark_2016}. 

\paragraph{Experimental Settings.} For datasets whose node features are discrete, as in~\citet{vayer_fused_2020}, we use Weisfeiler-Lehman (WL) labeling to transform them into vectors of dimension $K$, being the number of iterations in WL labeling. We use a dataset-specific random vector as feature for empty edges.
To compute the FNGW distance, we set $A$ to be the matrix of shortest path distance between the nodes. Regarding the distance $d_{\Psi}$, we use the Hamming distance when WL labeling is used, or the $\ell^2$-norm otherwise. The distance $d_{\Omega}$ between edge features is 
the $\ell^2$-norm in all cases.
Our classifier is a Support Vector Machine (SVM) for which the kernel gram matrix $K$ is computed via a Gaussian kernel based on the FNGW distance, with $K_{i,j} = \exp(-\gamma \mathrm{FNGW}(g_i, g_j))$.
As baselines, we use SVMs with diverse kernels, including the Neighborhood Subgraph Pairwise Distance Kernel (NSPDK), which takes into account both node features and (discrete) edge features, while the others can only use the nodes.
We also compare our distance with the FGW distance, using the same matrices $F$ and $A$. 
With all methods, we conduct 50 times 10-fold CV on all datasets, except Cuneiform on which 5-fold CV is performed due to its size.
Each time, the dataset is split into a training set and a test set with 10\% of the data held out for the latter one. The splitting is kept the same for all the methods for a fair comparison. 
We report the mean and the standard deviation of the test accuracy. Other kernel-based baselines, as well as CV hyperparameters, are detailed in Appendix C.

\paragraph{Experimental Results}

\addtolength{\tabcolsep}{-4pt}  

\begin{table*}[thb!]
\caption{Graph classification performance on real datasets. The best results are highlighted in \textbf{Bold}.}
\centering
\resizebox{\textwidth}{!}{
\begin{tabular}{lccccccccc}
  \toprule
  \textbf{Methods} & \textbf{Cuneiform} & \textbf{MUTAG} & \textbf{PTC-MR} & \textbf{BZR-MD} & \textbf{COX2-MD} & \textbf{DHFR-MD} & \textbf{ER-MD}\\
  \midrule
  SPK~\citep{borgwardt_shortest-path_2005} & - & $84.00 \pm 8.80$ & $57.37 \pm 8.16$ & $71.10\pm 6.71$ & $64.90 \pm 8.43$ & $65.85 \pm 5.87$ & $69.78 \pm 6.14$\\
  RWK~\citep{gartner_graph_2003} & - & $88.42 \pm 7.59$ & $57.43 \pm 7.61$ & $71.81 \pm 8.48$ & $62.65 \pm 7.93$ & $67.20 \pm 5.67$ & $70.22 \pm 6.32$\\
  WLK~\citep{vishwanathan_graph_2010}  & - & $88.00 \pm 7.95$ & $61.71 \pm 6.18$ & $63.10 \pm 6.92$ & $56.71 \pm 7.82$ & $66.65 \pm 5.54$ & $66.98 \pm 6.19$\\
  GK~\citep{shervashidze_efficient_2009} & - &$83.89 \pm 8.04$& $54.86 \pm 8.24$ & $48.13 \pm 6.73$ & $44.90 \pm 6.94$ &$67.45 \pm 6.33$ & $59.16 \pm 6.44$\\
  HOPPERK~\citep{feragen_scalable_2013}   & $36.07 \pm 9.65$ & - & - &- &- & - & -\\
  PROPAK~\citep{neumann_propagation_2016}   & $11.93\pm 6.03$ & $76.84 \pm 9.59$ & $58.57 \pm 8.09$ & $73.10 \pm 7.03$ & $58.58 \pm 8.45$ &  $66.85 \pm 6.14$ & $66.44 \pm 6.34$\\
  \midrule
  NSPDK~\citep{costa_fast_2010}  & - & $ \boldsymbol{88.63 \pm 5.99}$ & $59.94 \pm 8.33$ & $73.10 \pm 7.03$ & $60.00 \pm 8.70$ & $67.30 \pm 6.46$ & $63.16 \pm 7.41$\\
  \midrule
  FGW & $80.37 \pm 7.60 $ &$86.11 \pm 8.34$ &  $ 63.77 \pm 9.20$ & $73.61 \pm 6.60$ & $64.90 \pm 7.43$ & $62.70\pm 5.95$ & $72.09 \pm 7.14 $ \\
  FNGW & $\boldsymbol{84.44 \pm 7.07}$ & $83.05 \pm 8.95$ & $\boldsymbol{63.94 \pm 7.54}$ & $\boldsymbol{75.68 \pm 7.01}$ & $\boldsymbol{68.90 \pm 8.33 }$ & $\boldsymbol{73.45 \pm 7.28} $ & $\boldsymbol{79.78 \pm 5.85} $ \\
  \bottomrule
\end{tabular}
}
\label{tab:classifier}
\end{table*}

\addtolength{\tabcolsep}{4pt}  

Graph classification results are reported in Table \ref{tab:classifier}. Our FNGW-based classifier outperforms consistently graph kernel methods, including NSPDK, which also takes advantage of the edge features. Furthermore, FNGW, being at least comparable with FGW on PTC-MR, significantly improves the performance on all the others datasets except MUTAG. We conjecture that the mutagenic effect on a bacterium (MUTAG) or carcinogenicity on rodents (PTC-MR) of molecules may not be sensitive to their chemical bond type, while these features, when combined with the distance information between atoms, are important to distinguish between active and inactive molecules (BZR-MD, COX2-MD, DHFR-MD, ER-MD).

\subsection{Barycenter Computation} 
For this first toy problem, we randomly create 8 circle graphs,  
for which the number of nodes is randomly drawn between 10 and 20. Node features are scalars following a Sine function variation with additive Gaussian Noise $\mathcal{N}(0, 0.3)$. There exists two directed edges between each pair of adjacent nodes, the ascending one being colored in blue and the descending one in green. Supplementary edges are generated with probability $0.5$ 
between every pair of nodes separated by another node.  
The toy graphs are therefore node-labeled, edge-labeled, and directed. The samples are shown in Figure \ref{fig:circle_sample}. To compute the FNGW barycenter, we take $A$ as the adjacency matrix, and encode the presence of colors in the one-hot tensor $E$. We choose the number of nodes of the barycenter to be 15, while varying the sparsity regularization parameter $\gamma$. The resulting graphs are shown in Figure \ref{fig:circle_bary}. Both node and edge features are well preserved in the barycenter for $\gamma = 5\times 10^{-5}$ and $10^{-6}$, showing that an appropriate sparsity regularization 
stabilizes the structure of the edges. 

\begin{figure}[hbtp!]
    \centering
    \begin{subfigure}{.63\textwidth}
        \centering
        \includegraphics[width=\linewidth]{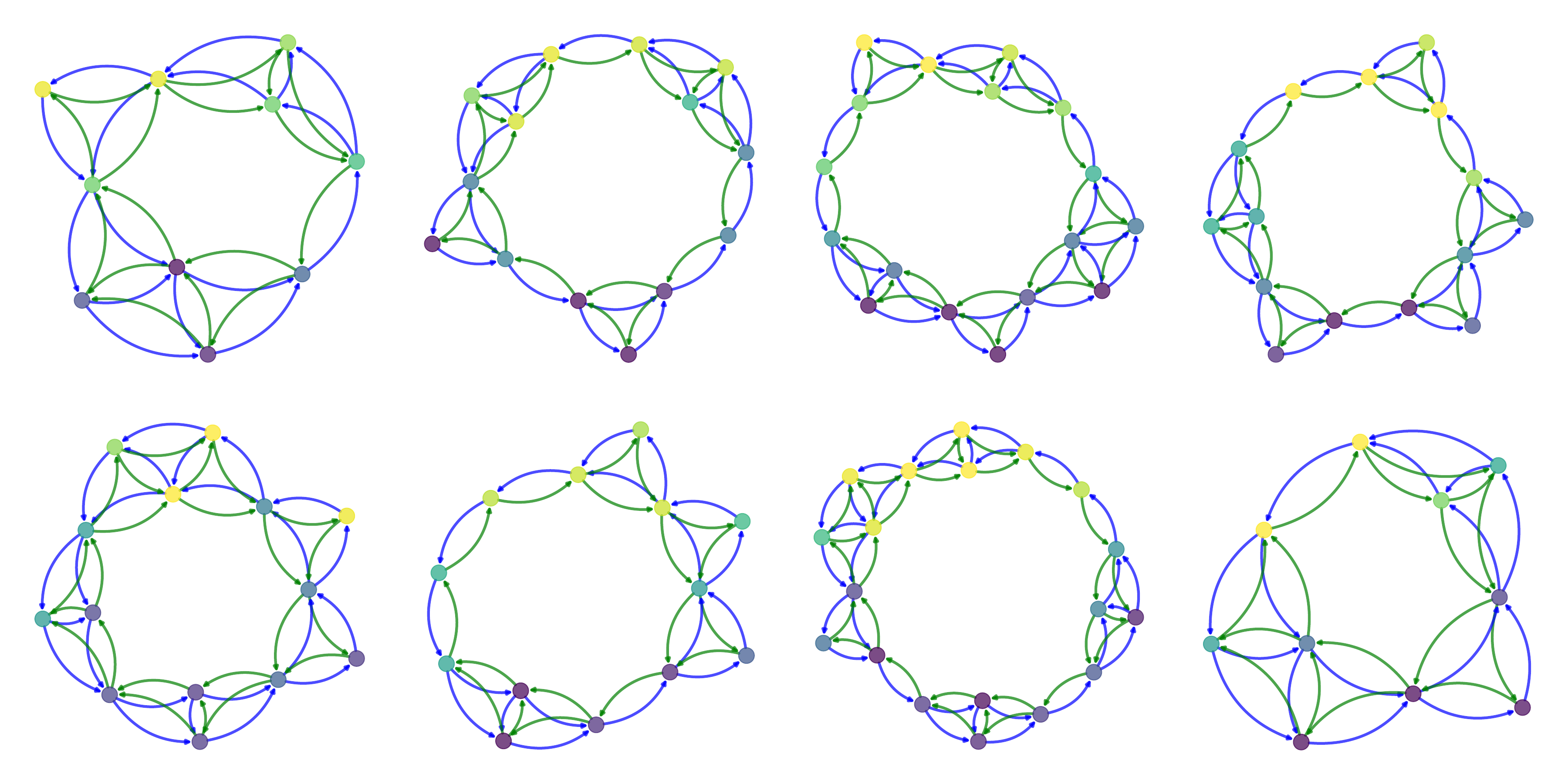}
        \caption{Random circle samples}
        \label{fig:circle_sample}
    \end{subfigure}
    \begin{subfigure}[b]{.33\textwidth}
        \centering
        \includegraphics[width=\linewidth]{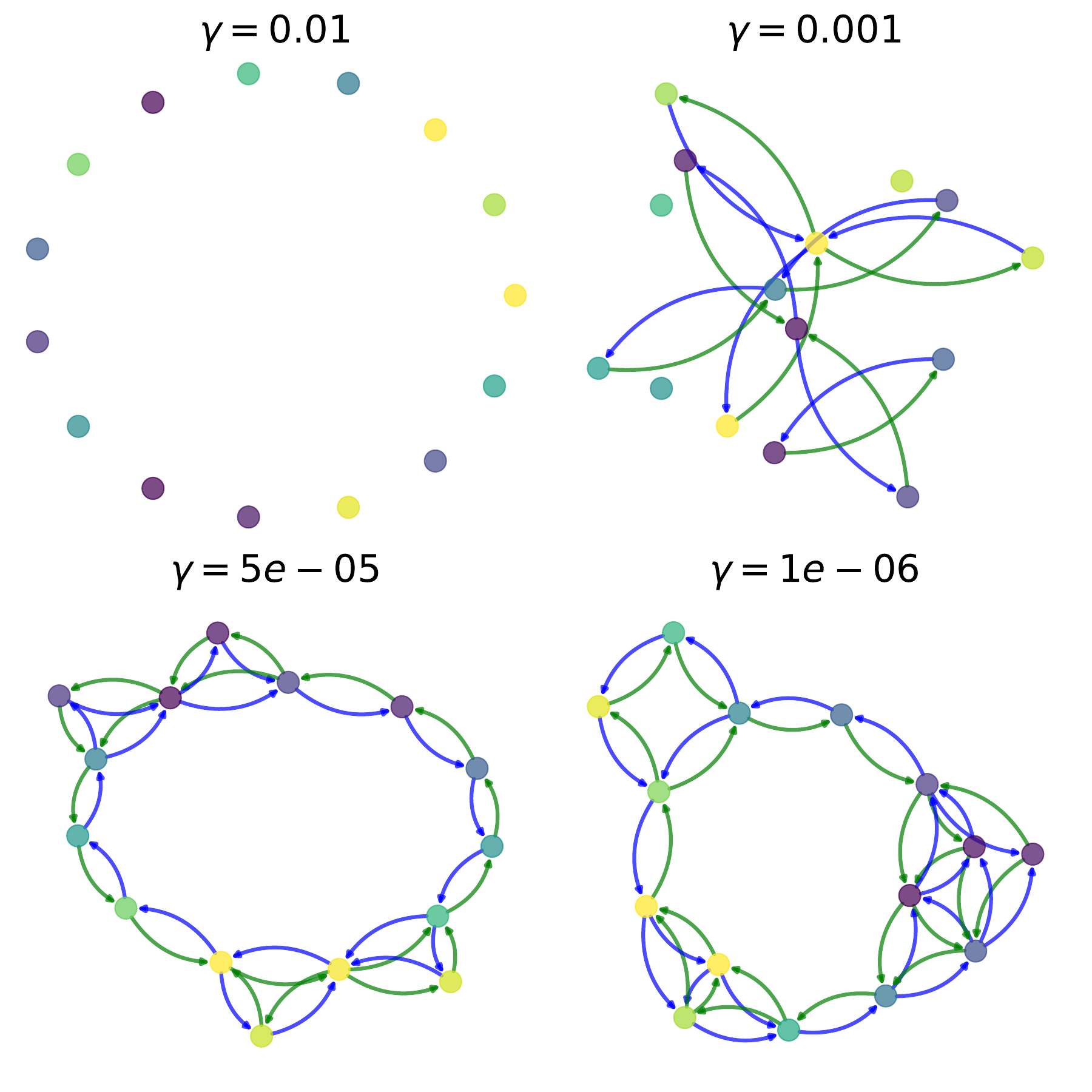}
        \caption{Barycenters with various $\gamma$}
        \label{fig:circle_bary}
    \end{subfigure}

    \caption{Illustration of graph barycenters with the FNGW distance. The color of the node indicates the scalar value of the node feature.}
    \label{fig:circle}
\end{figure}

\subsection{Representation Learning through Dictionary Learning}
We structure this second toy problem around 3 graph templates, that we generate following the Stochastic Block Model (SBM) with 1, 2 and 3 blocks. 
Each template consists of 20 nodes. For each, node features are scalars representing the block they belong to. 
Within a block, all nodes are connected with black edges. Pairs of nodes from adjacent blocks are connected with probability 0.3 by edges that are colored in blue when ascending, green when descending.
The Templates are shown in Figure \ref{fig:dl_templates}.
We repeat the following procedure to create a synthetic dataset for dictionary learning: given a vector $v \in \mathbb{R}^3$ which components are sampled from $\mathcal{N}(0, 1)$, we compute a simplex $w = \mathrm{softmax}(v)$ and generate a new graph by computing the barycenter of the templates defined above with Alg. \ref{algo:barycenter}. The number of nodes is set to be 20 for all barycenters. To control the quality of the generated barycenters, we set a maximum barycenter loss value of 0.3. The goal of the experiment is to retrieve graphs presenting the same characteristics with the 3 original templates through dictionary learning, even with a varying number of nodes. 
Setting the numbers of nodes of the atoms to 5, 10 and 15 respectively, we perform dictionary learning on the dataset. The atoms learnt are shown in Figure \ref{fig:dl_atoms}, and clearly recover the properties of the synthetic templates. 
\begin{figure}[hbtp!]
    \centering
    \begin{subfigure}{.48\textwidth}
        \centering
        \includegraphics[width=\linewidth]{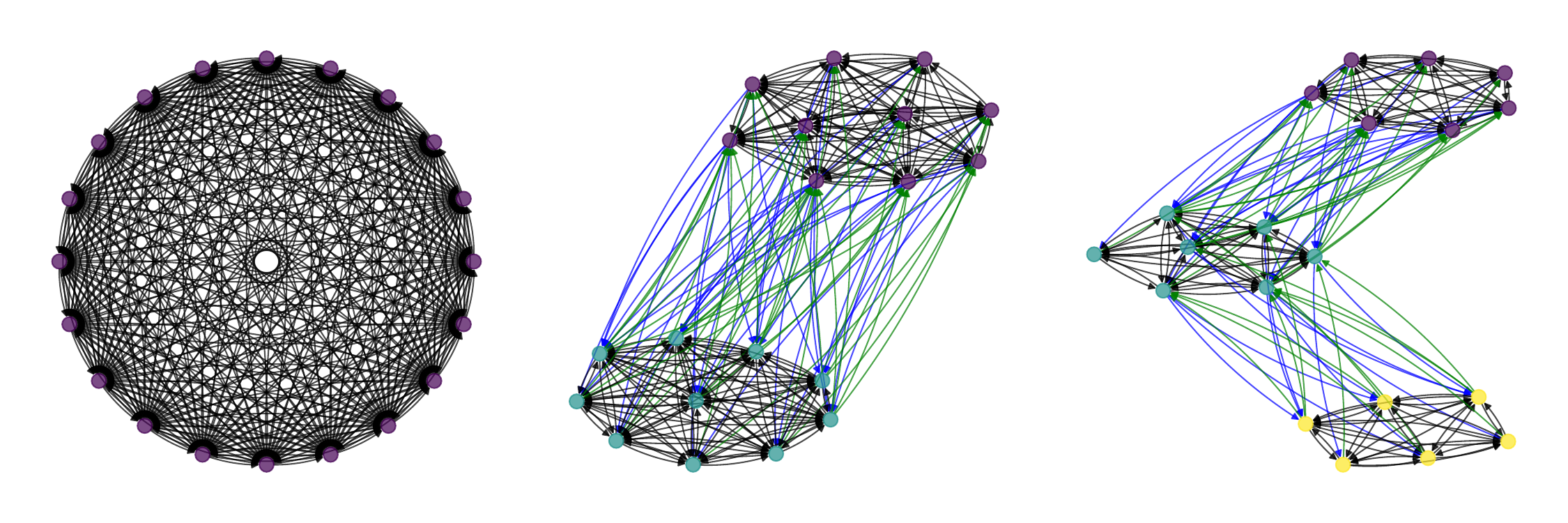}
        \caption{Synthetic templates}
        \label{fig:dl_templates}
    \end{subfigure}
    \begin{subfigure}[b]{.48\textwidth}
        \centering
        \includegraphics[width=\linewidth]{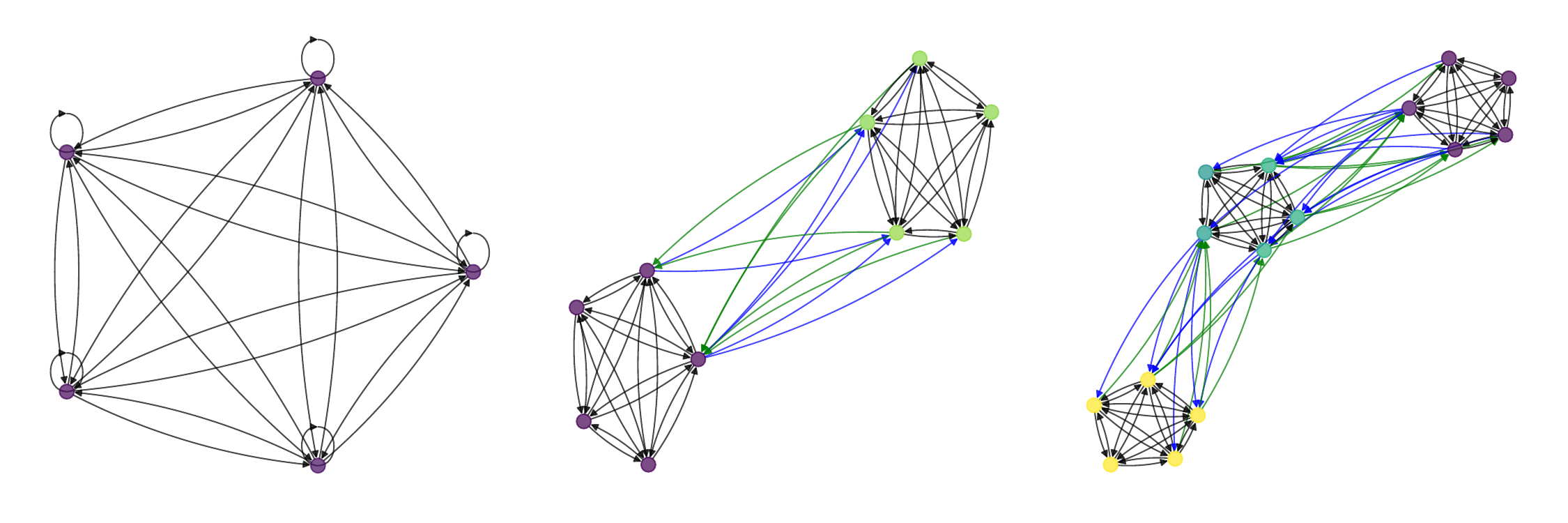}
        \caption{Learned atoms}
        \label{fig:dl_atoms}
    \end{subfigure}

    \caption{Dictionary learning on synthetic dataset.}
    \label{fig:dl_synthetic}
\end{figure}

\subsection{Supervised Graph Prediction: Metabolite Identification}

In this last experiment, we focus on a structured prediction problem, where graphs lie in the output space $\mathcal{G}$.
As in \citet{brouard2016fast, brogat-motte_learning_2022},  we use the Metabolite Identification dataset processed by \citet{duhrkop2015searching}, which consists of 4138 input tandem mass spectra extracted from the GNPS (Global Natural Products Social) public spectral library. The learning algorithm is expected to predict the metabolite (small molecules) given a tandem mass spectra. For each input spectra, a known set of metabolite candidates is provided, i.e., the inference takes place in a smaller search space $\mathcal{G}_c$. The dataset is split into a training set of size $n=3000$ and a test set of size $n_{\mathrm{test}}=1138$. 

\paragraph{Experimental Settings.}
In order to represent the metabolites, we encode their adjacency into $A$ and the atom types as one-hot vectors into $F$. Following~\citet{brogat-motte_learning_2022}, we use the matrix diffused by the normalized Laplacian of the adjacency matrix: $F_{\mathrm{diff}} = e^{-\tau\mathrm{Lap}(A)}F$.
To obtain $E$, we use three configurations: the chemical bond type, the chemical bond stereochemistry (which are embedded through one-hot encodings) and the concatenation of both (\textit{Mix}). The input representation is obtained through a probability product kernel \citep{heinonen_metabolite_2012} on the input mass spectra, which has been shown effective on this problem~\citep{brouard2016fast}. More details on hyperparameters can be found in Appendix C.

\paragraph{Experimental Results.}
\begin{table*}[t!]
\caption{Top-$k$ accuracies on the metabolite identification test set. Best results are in \textbf{Bold}.}
\begin{center}
\begin{tabular}{lccc}
\toprule
 & Top-1 & Top-10 & Top-20\\
\midrule
WL kernel & $9.8\%$ & $29.1\%$ & $37.4\%$ \\
Fingerprint with linear kernel& $28.6\%$ & $54.5\%$ & $59.9\%$ \\
Fingerprint with gaussian kernel & $\boldsymbol{41.0}\%$ & $\boldsymbol{62.0}\%$ & $\boldsymbol{67.8}\%$ \\
\midrule
FGW diffuse & $28.1\%$ & $53.6\%$ & $59.9\%$ \\
\midrule
FNGW diffuse + Bond stereo & $27.7\%$ & $55.2\%$ & $60.9\%$\\
FNGW diffuse + Bond type & $34.6\%$ & $55.1\%$ & $60.0\%$\\
FNGW diffuse + Mix & $36.2\%$ & $58.2\%$ & $61.9\%$\\
\bottomrule
\end{tabular}
\end{center}
\label{tab:expr_metabo}
\end{table*}

We measure the performance of various models on Metabolite Identification via Top-$k$ accuracy with $k \in \{1, 10, 20\}$. The results are presented in Table~\ref{tab:expr_metabo}. Results for the WL kernel, Linear fingerprint, and Gaussian fingerprint are taken from~\citep{brogat-motte_learning_2022}. We observe that with mix edge features, FNGW significantly outperforms FGW \citep{brogat-motte_learning_2022}. Our method approaches the performance of fingerprint with a Gaussian kernel, which uses an expert-derived molecular representation. This confirms that an informative representation of the output space is crucial for graph prediction.

\section{Conclusion}
We proposed a novel Optimal Transport distance for pairwise graph comparison in the presence of edge features, unlocking many applications where this information is available and relevant. This distance built on the Network Gromov-Wasserstein and fused Gromov-Wassertein distances inherits from similar geometric properties. To apply it on graph data, we devised algorithms to compute both the novel distance and the associated barycenter in the discrete case. Empirical evaluation on synthetic as well as real data shows that the use of this distance either as a tool for graph representation or as a loss function yields significant improvements in terms of performance in a large variety of learning tasks. Future works will be dedicated to scale up the distance and the barycenter algorithms to target very large graph datasets. Propagating edge information to nodes or vice versa will also be investigated.


\bibliography{references}

\begin{thebibliography}{42}
\providecommand{\natexlab}[1]{#1}
\providecommand{\url}[1]{\texttt{#1}}
\expandafter\ifx\csname urlstyle\endcsname\relax
  \providecommand{\doi}[1]{doi: #1}\else
  \providecommand{\doi}{doi: \begingroup \urlstyle{rm}\Url}\fi

\bibitem[Banarescu et~al.(2013)Banarescu, Bonial, Cai, Georgescu, Griffitt,
  Hermjakob, Knight, Koehn, Palmer, and Schneider]{banarescu_abstract_2013}
Laura Banarescu, Claire Bonial, Shu Cai, Madalina Georgescu, Kira Griffitt, Ulf
  Hermjakob, Kevin Knight, Philipp Koehn, Martha Palmer, and Nathan Schneider.
\newblock Abstract {Meaning} {Representation} for {Sembanking}.
\newblock In \emph{Proceedings of the 7th {Linguistic} {Annotation} {Workshop}
  and {Interoperability} with {Discourse}}, pages 178--186. Association for
  Computational Linguistics, August 2013.

\bibitem[Barbe et~al.(2021)Barbe, Sebban, Gonçalves, Borgnat, and
  Gribonval]{barbe_graph_2021}
Amélie Barbe, Marc Sebban, Paulo Gonçalves, Pierre Borgnat, and Rémi
  Gribonval.
\newblock Graph {Diffusion} {Wasserstein} {Distances}.
\newblock In \emph{Machine {Learning} and {Knowledge} {Discovery} in
  {Databases}}, pages 577--592. Springer International Publishing, 2021.

\bibitem[Borgwardt and Kriegel(2005)]{borgwardt_shortest-path_2005}
Karsten~M. Borgwardt and Hans-Peter Kriegel.
\newblock Shortest-{Path} {Kernels} on {Graphs}.
\newblock In \emph{Proceedings of the 5th {IEEE} {International} {Conference}
  on {Data} {Mining}}, pages 74--81. IEEE Computer Society, November 2005.

\bibitem[Brody et~al.(2022)Brody, Alon, and Yahav]{brody_how_2022}
Shaked Brody, Uri Alon, and Eran Yahav.
\newblock How {Attentive} are {Graph} {Attention} {Networks}?
\newblock In \emph{International {Conference} on {Learning} {Representations}},
  2022.

\bibitem[Brogat-Motte et~al.(2022)Brogat-Motte, Flamary, Brouard, Rousu, and
  D'Alché-Buc]{brogat-motte_learning_2022}
Luc Brogat-Motte, Rémi Flamary, Celine Brouard, Juho Rousu, and Florence
  D'Alché-Buc.
\newblock Learning to {Predict} {Graphs} with {Fused} {Gromov}-{Wasserstein}
  {Barycenters}.
\newblock In \emph{Proceedings of the 39th {International} {Conference} on
  {Machine} {Learning}}, volume 162 of \emph{Proceedings of {Machine}
  {Learning} {Research}}, pages 2321--2335. PMLR, July 2022.

\bibitem[Brouard et~al.(2016)Brouard, Shen, D{\"u}hrkop, d'Alch{\'e} Buc,
  B{\"o}cker, and Rousu]{brouard2016fast}
C{\'e}line Brouard, Huibin Shen, Kai D{\"u}hrkop, Florence d'Alch{\'e} Buc,
  Sebastian B{\"o}cker, and Juho Rousu.
\newblock {Fast} {Metabolite} {Identification} with {Input} {Output} {Kernel}
  {Regression}.
\newblock \emph{Bioinformatics}, 32\penalty0 (12):\penalty0 i28--i36, 2016.

\bibitem[Chowdhury and Mémoli(2019)]{chowdhury_gromovwasserstein_2019}
Samir Chowdhury and Facundo Mémoli.
\newblock The {Gromov}–{Wasserstein} {Distance} {Between} {Networks} and
  {Stable} {Network} {Invariants}.
\newblock \emph{Information and Inference: A Journal of the IMA}, 8\penalty0
  (4):\penalty0 757--787, 2019.

\bibitem[Ciliberto et~al.(2020)Ciliberto, Rosasco, and
  Rudi]{ciliberto_general_2020}
Carlo Ciliberto, Lorenzo Rosasco, and Alessandro Rudi.
\newblock A {General} {Framework} for {Consistent} {Structured} {Prediction}
  with {Implicit} {Loss} {Embeddings}.
\newblock \emph{Journal of Machine Learning Research}, 21\penalty0
  (98):\penalty0 1--67, 2020.

\bibitem[Corso et~al.(2020)Corso, Cavalleri, Beaini, Liò, and
  Veličković]{corso_principal_2020}
Gabriele Corso, Luca Cavalleri, Dominique Beaini, Pietro Liò, and Petar
  Veličković.
\newblock Principal {Neighbourhood} {Aggregation} for {Graph} {Nets}.
\newblock In \emph{Advances in {Neural} {Information} {Processing} {Systems}},
  volume~33, pages 13260--13271. Curran Associates, Inc., 2020.

\bibitem[Costa and Grave(2010)]{costa_fast_2010}
Fabrizio Costa and Kurt~De Grave.
\newblock Fast {Neighborhood} {Subgraph} {Pairwise} {Distance} {Kernel}.
\newblock In \emph{Proceedings of the 27th {International} {Conference} on
  {International} {Conference} on {Machine} {Learning}}, {ICML}'10, pages
  255--262. Omnipress, 2010.

\bibitem[Cuturi and Doucet(2014)]{cuturi_fast_2014}
Marco Cuturi and Arnaud Doucet.
\newblock Fast {Computation} of {Wasserstein} {Barycenters}.
\newblock In \emph{Proceedings of the 31st {International} {Conference} on
  {Machine} {Learning}}, volume~32 of \emph{Proceedings of {Machine} {Learning}
  {Research}}, pages 685--693. PMLR, June 2014.

\bibitem[Debnath et~al.(1991)Debnath, Lopez~de Compadre, Debnath, Shusterman,
  and Hansch]{debnath_structure-activity_1991}
Asim~Kumar Debnath, Rosa~L Lopez~de Compadre, Gargi Debnath, Alan~J Shusterman,
  and Corwin Hansch.
\newblock Structure-activity {Relationship} of {Mutagenic} {Aromatic} and
  {Heteroaromatic} {Nitro} {Compounds.} {Correlation} with {Molecular}
  {Orbital} {Energies} and {Hydrophobicity}.
\newblock \emph{Journal of medicinal chemistry}, 34\penalty0 (2):\penalty0
  786--797, 1991.

\bibitem[D{\"u}hrkop et~al.(2015)D{\"u}hrkop, Shen, Meusel, Rousu, and
  B{\"o}cker]{duhrkop2015searching}
Kai D{\"u}hrkop, Huibin Shen, Marvin Meusel, Juho Rousu, and Sebastian
  B{\"o}cker.
\newblock {Searching} {Molecular} {Structure} {Databases} with {Tandem} {Mass}
  {Spectra} using {CSI}: {FingerID}.
\newblock \emph{Proceedings of the National Academy of Sciences}, 112\penalty0
  (41):\penalty0 12580--12585, 2015.

\bibitem[Feragen et~al.(2013)Feragen, Kasenburg, Petersen, Bruijne, and
  Borgwardt]{feragen_scalable_2013}
Aasa Feragen, Niklas Kasenburg, Jens Petersen, Marleen~de Bruijne, and Karsten
  Borgwardt.
\newblock Scalable {Kernels} for {Graphs} with {Continuous} {Attributes}.
\newblock In \emph{Proceedings of the 26th {International} {Conference} on
  {Neural} {Information} {Processing} {Systems}}, volume~1 of \emph{{NIPS}'13},
  pages 216--224. Curran Associates Inc., 2013.

\bibitem[Fey et~al.(2018)Fey, Lenssen, Weichert, and
  Müller]{fey_splinecnn_2018}
Matthias Fey, Jan~Eric Lenssen, Frank Weichert, and Heinrich Müller.
\newblock {SplineCNN}: {Fast} {Geometric} {Deep} {Learning} {With} {Continuous}
  {B}-{Spline} {Kernels}.
\newblock In \emph{Proceedings of the {IEEE} {Conference} on {Computer}
  {Vision} and {Pattern} {Recognition} ({CVPR})}, June 2018.

\bibitem[Gilmer et~al.(2017)Gilmer, Schoenholz, Riley, Vinyals, and
  Dahl]{gilmer_neural_2017}
Justin Gilmer, Samuel~S. Schoenholz, Patrick~F. Riley, Oriol Vinyals, and
  George~E. Dahl.
\newblock Neural {Message} {Passing} for {Quantum} {Chemistry}.
\newblock In \emph{Proceedings of the 34th {International} {Conference} on
  {Machine} {Learning}}, volume~70 of \emph{Proceedings of {Machine} {Learning}
  {Research}}, pages 1263--1272. PMLR, August 2017.

\bibitem[Grenier et~al.(2015)Grenier, Brun, and Villemin]{grenier2015bags}
Pierre-Anthony Grenier, Luc Brun, and Didier Villemin.
\newblock From {Bags} to {Graphs} of {Stereo} {Subgraphs} in {Order} to
  {Predict} {Molecule's} {Properties}.
\newblock In \emph{Graph-Based Representations in Pattern Recognition: 10th
  IAPR-TC-15 International Workshop}, pages 305--314. Springer, 2015.

\bibitem[Gärtner et~al.(2003)Gärtner, Flach, and Wrobel]{gartner_graph_2003}
Thomas Gärtner, Peter Flach, and Stefan Wrobel.
\newblock On {Graph} {Kernels}: {Hardness} {Results} and {Efficient}
  {Alternatives}.
\newblock In \emph{Learning {Theory} and {Kernel} {Machines}}, pages 129--143.
  Springer Berlin Heidelberg, 2003.

\bibitem[Heinonen et~al.(2012)Heinonen, Shen, Zamboni, and
  Rousu]{heinonen_metabolite_2012}
Markus Heinonen, Huibin Shen, Nicola Zamboni, and Juho Rousu.
\newblock Metabolite {Identification} and {Molecular} {Fingerprint}
  {Prediction} through {Machine} {Learning}.
\newblock \emph{Bioinformatics}, 28\penalty0 (18):\penalty0 2333--2341,
  September 2012.

\bibitem[Helma et~al.(2001)Helma, King, Kramer, and
  Srinivasan]{helma_predictive_2001}
Christoph Helma, Ross~D. King, Stefan Kramer, and Ashwin Srinivasan.
\newblock The {Predictive} {Toxicology} {Challenge} 2000–2001.
\newblock \emph{Bioinformatics}, 17\penalty0 (1):\penalty0 107--108, 2001.

\bibitem[Irwin et~al.(2012)Irwin, Sterling, Mysinger, Bolstad, and
  Coleman]{irwin_zinc_2012}
John~J. Irwin, Teague Sterling, Michael~M. Mysinger, Erin~S. Bolstad, and
  Ryan~G. Coleman.
\newblock {ZINC}: {A} {Free} {Tool} to {Discover} {Chemistry} for {Biology}.
\newblock \emph{Journal of Chemical Information and Modeling}, 52\penalty0
  (7):\penalty0 1757--1768, July 2012.

\bibitem[{J. Johnson} et~al.(2015){J. Johnson}, {R. Krishna}, {M. Stark}, {L.
  -J. Li}, {D. A. Shamma}, {M. S. Bernstein}, and {L.
  Fei-Fei}]{j_johnson_image_2015}
{J. Johnson}, {R. Krishna}, {M. Stark}, {L. -J. Li}, {D. A. Shamma}, {M. S.
  Bernstein}, and {L. Fei-Fei}.
\newblock Image {Retrieval} using {Scene} {Graphs}.
\newblock In \emph{2015 {IEEE} {Conference} on {Computer} {Vision} and
  {Pattern} {Recognition} ({CVPR})}, pages 3668--3678, June 2015.

\bibitem[Kazemi et~al.(2020)Kazemi, Goel, Jain, Kobyzev, Sethi, Forsyth, and
  Poupart]{kazemi_representation_2020}
Seyed~Mehran Kazemi, Rishab Goel, Kshitij Jain, Ivan Kobyzev, Akshay Sethi,
  Peter Forsyth, and Pascal Poupart.
\newblock Representation {Learning} for {Dynamic} {Graphs}: {A} {Survey}.
\newblock \emph{Journal of Machine Learning Research}, 21\penalty0
  (70):\penalty0 1--73, 2020.

\bibitem[Kersting et~al.(2016)Kersting, Kriege, Morris, Mutzel, and
  Neumann]{kersting_benchmark_2016}
Kristian Kersting, Nils~M. Kriege, Christopher Morris, Petra Mutzel, and Marion
  Neumann.
\newblock Benchmark {Data} {Sets} for {Graph} {Kernels}, 2016.
\newblock URL \url{http://graphkernels.cs.tu-dortmund.de}.

\bibitem[Kriege and Mutzel(2012)]{kriege_subgraph_2012}
Nils Kriege and Petra Mutzel.
\newblock Subgraph {Matching} {Kernels} for {Attributed} {Graphs}.
\newblock In \emph{Proceedings of the 29th {International} {Coference} on
  {International} {Conference} on {Machine} {Learning}}, {ICML}'12, pages
  291--298. Omnipress, 2012.

\bibitem[Kriege et~al.(2018)Kriege, Fey, Fisseler, Mutzel, and
  Weichert]{kriege_recognizing_2018}
Nils~M. Kriege, Matthias Fey, Denis Fisseler, Petra Mutzel, and Frank Weichert.
\newblock Recognizing {Cuneiform} {Signs} {Using} {Graph} {Based} {Methods}.
\newblock In \emph{Proceedings of {The} {International} {Workshop} on
  {Cost}-{Sensitive} {Learning}}, volume~88 of \emph{Proceedings of {Machine}
  {Learning} {Research}}, pages 31--44. PMLR, May 2018.

\bibitem[Mémoli(2011)]{memoli_gromovwasserstein_2011}
Facundo Mémoli.
\newblock Gromov–{Wasserstein} {Distances} and the {Metric} {Approach} to
  {Object} {Matching}.
\newblock \emph{Foundations of Computational Mathematics}, 11\penalty0
  (4):\penalty0 417--487, 2011.

\bibitem[Neumann et~al.(2016)Neumann, Garnett, Bauckhage, and
  Kersting]{neumann_propagation_2016}
Marion Neumann, Roman Garnett, Christian Bauckhage, and Kristian Kersting.
\newblock Propagation {Kernels}: {Efficient} {Graph} {Kernels} from
  {Propagated} {Information}.
\newblock \emph{Machine Learning}, 102\penalty0 (2):\penalty0 209--245, 2016.

\bibitem[Peyré et~al.(2016)Peyré, Cuturi, and
  Solomon]{peyre_gromov-wasserstein_2016}
Gabriel Peyré, Marco Cuturi, and Justin Solomon.
\newblock Gromov-{Wasserstein} {Averaging} of {Kernel} and {Distance}
  {Matrices}.
\newblock In \emph{Proceedings of {The} 33rd {International} {Conference} on
  {Machine} {Learning}}, volume~48 of \emph{Proceedings of {Machine} {Learning}
  {Research}}, pages 2664--2672. PMLR, June 2016.

\bibitem[Shervashidze et~al.(2009)Shervashidze, Vishwanathan, Petri, Mehlhorn,
  and Borgwardt]{shervashidze_efficient_2009}
Nino Shervashidze, SVN Vishwanathan, Tobias Petri, Kurt Mehlhorn, and Karsten
  Borgwardt.
\newblock Efficient {Graphlet} {Kernels} for {Large} {Graph} {Comparison}.
\newblock In \emph{Proceedings of the {Twelth} {International} {Conference} on
  {Artificial} {Intelligence} and {Statistics}}, volume~5 of \emph{Proceedings
  of {Machine} {Learning} {Research}}, pages 488--495. PMLR, April 2009.

\bibitem[Shi et~al.(2021)Shi, Huang, Feng, Zhong, Wang, and
  Sun]{shi_masked_2021}
Yunsheng Shi, Zhengjie Huang, Shikun Feng, Hui Zhong, Wenjing Wang, and Yu~Sun.
\newblock Masked {Label} {Prediction}: {Unified} {Message} {Passing} {Model}
  for {Semi}-{Supervised} {Classification}.
\newblock In \emph{Proceedings of the {Thirtieth} {International} {Joint}
  {Conference} on {Artificial} {Intelligence}, {IJCAI}-21}, pages 1548--1554.
  International Joint Conferences on Artificial Intelligence Organization,
  August 2021.

\bibitem[Simonovsky and Komodakis(2017)]{simonovsky_dynamic_2017}
Martin Simonovsky and Nikos Komodakis.
\newblock Dynamic {Edge}-{Conditioned} {Filters} in {Convolutional} {Neural}
  {Networks} on {Graphs}.
\newblock In \emph{Proceedings of the {IEEE} {Conference} on {Computer}
  {Vision} and {Pattern} {Recognition} ({CVPR})}, July 2017.

\bibitem[Sturm(2012)]{sturm_space_2012}
Karl-Theodor Sturm.
\newblock The {Space} of {Spaces}: {Curvature} {Bounds} and {Gradient} {Flows}
  on the {Space} of {Metric} {Measure} {Spaces}.
\newblock \emph{arXiv preprint arXiv:1208.0434}, 2012.

\bibitem[Sutherland et~al.(2003)Sutherland, O'brien, and
  Weaver]{sutherland_spline-fitting_2003}
Jeffrey~J Sutherland, Lee~A O'brien, and Donald~F Weaver.
\newblock Spline-fitting with a {Genetic} {Algorithm}: {A} {Method} for
  {Developing} {Classification} {Structure-activity} {Relationships}.
\newblock \emph{Journal of chemical information and computer sciences},
  43\penalty0 (6):\penalty0 1906--1915, 2003.

\bibitem[Vayer et~al.(2019)Vayer, Chapel, Flamary, Tavenard, and
  Courty]{vayer_optimal_2019}
Titouan Vayer, Laetitia Chapel, R{\'e}mi Flamary, Romain Tavenard, and Nicolas
  Courty.
\newblock Optimal {Transport} for structured data with application on graphs.
\newblock In \emph{Proceedings of the 36th {International} {Conference} on
  {Machine} {Learning}}, volume~97 of \emph{Proceedings of {Machine} {Learning}
  {Research}}, pages 6275--6284. PMLR, June 2019.

\bibitem[Vayer et~al.(2020)Vayer, Chapel, Flamary, Tavenard, and
  Courty]{vayer_fused_2020}
Titouan Vayer, Laetitia Chapel, Remi Flamary, Romain Tavenard, and Nicolas
  Courty.
\newblock Fused {Gromov}-{Wasserstein} {Distance} for {Structured} {Objects}.
\newblock \emph{Algorithms}, 13\penalty0 (9):\penalty0 212, 2020.

\bibitem[Veličković et~al.(2018)Veličković, Cucurull, Casanova, Romero,
  Liò, and Bengio]{velickovic_graph_2018}
Petar Veličković, Guillem Cucurull, Arantxa Casanova, Adriana Romero, Pietro
  Liò, and Yoshua Bengio.
\newblock Graph {Attention} {Networks}.
\newblock In \emph{International {Conference} on {Learning} {Representations}},
  2018.

\bibitem[Villani(2009)]{villani_optimal_2009}
Cédric Villani.
\newblock \emph{Optimal {Transport}: {Old} and {New}}, volume 338.
\newblock Springer, 2009.

\bibitem[Vincent-Cuaz et~al.(2021)Vincent-Cuaz, Vayer, Flamary, Corneli, and
  Courty]{vincent-cuaz_online_2021}
Cédric Vincent-Cuaz, Titouan Vayer, Rémi Flamary, Marco Corneli, and Nicolas
  Courty.
\newblock Online {Graph} {Dictionary} {Learning}.
\newblock In \emph{Proceedings of the 38th {International} {Conference} on
  {Machine} {Learning}}, volume 139 of \emph{Proceedings of {Machine}
  {Learning} {Research}}, pages 10564--10574. PMLR, July 2021.

\bibitem[Vishwanathan et~al.(2010)Vishwanathan, Schraudolph, Kondor, and
  Borgwardt]{vishwanathan_graph_2010}
S.~V.~N. Vishwanathan, Nicol~N. Schraudolph, Risi Kondor, and Karsten~M.
  Borgwardt.
\newblock Graph {Kernels}.
\newblock \emph{Journal of Machine Learning Research}, 11\penalty0
  (40):\penalty0 1201--1242, 2010.

\bibitem[Xu et~al.(2019)Xu, Luo, Zha, and Duke]{xu_gromov-wasserstein_2019}
Hongteng Xu, Dixin Luo, Hongyuan Zha, and Lawrence~Carin Duke.
\newblock Gromov-{Wasserstein} {Learning} for {Graph} {Matching} and {Node}
  {Embedding}.
\newblock In \emph{Proceedings of the 36th {International} {Conference} on
  {Machine} {Learning}}, volume~97 of \emph{Proceedings of {Machine} {Learning}
  {Research}}, pages 6932--6941. PMLR, June 2019.

\bibitem[Xu(2020)]{xu_gromov-wasserstein_2020}
Hongtengl Xu.
\newblock Gromov-{Wasserstein} {Factorization} {Models} for {Graph}
  {Clustering}.
\newblock In \emph{Proceedings of the {AAAI} {Conference} on {Artificial}
  {Intelligence}}, volume~34, pages 6478--6485, April 2020.

\end{thebibliography}
\bibliographystyle{plainnat}

\appendix
The supplementary material is structured as follows: 
In Appendix~\ref{app:proofs}, we present the proofs and detailed computations related to our theoretical derivations. Appendix~\ref{app:algos} complements the main part of the paper by providing additional details on our algorithms. Appendix~\ref{app:expdet} gives additional information about the data and hyperparameter used in the experiments presented in the main part of the paper. Finally, Appendix~\ref{app:expadd} includes additional experiments.

\section{Technical Proofs}
\label{app:proofs}
In Sections \ref{subsec:optcouple-proof} and \ref{subsec:metric-prop-proof}, we present the proofs of the theoretical properties of the FNGW distance that can appear as natural extensions of the properties of the NGW distance \citet{chowdhury_gromovwasserstein_2019} and the FGW distance \citet{vayer_fused_2020}.
We therefore leverage results from both \citet{chowdhury_gromovwasserstein_2019} and \citet{vayer_fused_2020}, and additionally the metric properties of space $\Omega$ for our proofs of Theorem \ref{theorem:opt_couple} and \ref{theorem:metric}. In Sections \ref{subsec:computation-proof}, \ref{subsec:bary-proof} and \ref{subsec:bary-property}, we provide proofs and calculation of the expressions we need when computing the distance itself and the barycenter. In Section \ref{subsec:ile-proof}, we demonstrate the ILE property for the FNGW distance.

For the sake of completeness, we recall first the definitions of the NGW and the FGW distance.
\begin{definition}[Network Gromov-Wasserstein Distance \citep{chowdhury_gromovwasserstein_2019}]
	Let $\mathcal{G}$ be the set of tuple of the form $(X, \varphi_X,  \mu_{X})$ where $X$ is a polish space,  $\varphi_X: X \times X \to \mathbb{R}$ is a bounded continuous measurable function and $\mu_{X}$ is a fully supported Borel probability measure.   
	Given two tuples $g_X = (X, \varphi_X,  \mu_{X})$, $g_Y=(Y,  \varphi_Y, \mu_{Y})$ from $\mathcal{G}$, the Network Gromov-Wasserstein Distance between $g_X$ and $g_Y$ is defined for any $ p \in [1, \infty]$ as follows:
	\begin{gather}
		\mathrm{NGW}_{p}(g_X, g_Y) = \min_{\mu\in \Pi(\mu_X, \mu_Y)}\mathcal{E}_{p}(g_X, g_Y, \mu)
	\end{gather}
    with 
    \begin{align}
        \mathcal{E}_{ p}(g_X, g_Y, \mu) = \bigg( \int_{X\times Y}\int_{X\times Y}   |\varphi_{X}(x, x') - \varphi_{Y}(y, y')|^p 
        d\mu(x,y)d\mu(x',y') \bigg)^{\frac{1}{p}}
    \end{align}
\end{definition}
It should be noted that the FNGW distance extends NWG into a \textit{fused} case and generalize the codomain space of $\varphi$ from $\mathbb{R}$ to a metric space. In the discrete case, when applied to graphs, we leverage this property to take into account edge labels as feature vectors.

\begin{definition}[Fused Gromov-Wasserstein Distance \citep{vayer_fused_2020}]
	Let $\mathcal{G}$ be the set of tuple of the form $(X, \psi_{X}, \varphi_X, \mu_{X})$ where $X$ is a polish space, $\psi_{X}: X\to \Psi $ is a  bounded continuous measurable function from $X$ to a metric space $(\Psi, d_\Psi)$ , $\varphi_X: X \times X \to \mathbb{R}$ is a \textbf{symmetric} bounded continuous measurable function, and $\mu_{X}$ is a fully supported Borel probability measure.   
	Given two tuples $g_X = (X, \psi_{X}, \varphi_X, \mu_{X})$, $g_Y=(Y, \psi_{Y}, \varphi_Y,  \mu_{Y})$ from $\mathcal{G}$ and trade-off parameters $\alpha \in [0, 1]$, the Fused Network Gromov-Wasserstein Distance between $g_X$ and $g_Y$ is defined for any $(p, q)\in [1, \infty]$ as follows:
	\begin{gather}
		\mathrm{FGW}_{\alpha,q, p}(g_X, g_Y) = \min_{\mu\in \Pi(\mu_X, \mu_Y)}\mathcal{E}_{\alpha, q, p}(g_X, g_Y, \mu)
	\end{gather}
    with 
    \begin{align}
        \mathcal{E}_{\alpha,  q, p}(g_X, g_Y, \mu) = \bigg(& \int_{X\times Y}\int_{X\times Y} [(1-\alpha) d_{\Psi}\left(\psi_X(x),\psi_Y(y)\right)^q  \nonumber \\ &+ \alpha |\varphi_{X}(x, x') - \varphi_{Y}(y, y')|^q] ^p 
        d\mu(x,y)d\mu(x',y') \bigg)^{\frac{1}{p}}
    \end{align}
 \label{def:fngw}
\end{definition}
Compared with FGW distance, FNGW fuses another more general function: $\omega_{X}: X\times X \to \Omega$, while releasing the symmetric constraint of $\varphi$ and allowing for feature vectors as edge labels.

\subsection{Proof of Theorem \ref{theorem:opt_couple}: Existence of FNGW Distance}\label{subsec:optcouple-proof}
Here, we recall the theorem to prove in this subsection:
\begin{customthm}{2.4}[Optimal Coupling]
	Given  $g_X = (X, \psi_{X}, \varphi_X, \omega_{X}, \mu_{X})$,  $g_Y=(Y, \psi_{Y}, \varphi_Y, \omega_{Y}, \mu_{Y})$, for any $(p, q) \in  [1, \infty]$ and  $(\alpha, \beta) \in [0, 1]^2$, there exists an optimal coupling $\mu^*\in \Pi(\mu_X, \mu_Y)$ which satisfies $\mathrm{FNGW}_{\alpha, \beta, q, p}(g_X, g_Y) = \mathcal{E}_{\alpha, \beta, q, p}(g_X, g_Y, \mu^*)$.
\end{customthm}

The proof takes mainly advantage of the Weierstrass theorem, which will use the following lemmas:
\begin{lemma}[Compactness of couplings; Lemma 10, \citet{chowdhury_gromovwasserstein_2019}] Let X, Y be two Polish spaces and let $\mu_X\in \mathrm{Prob}(X), \mu_Y \in \mathrm{Prob}(Y)$. Then $\Pi(\mu_X, \mu_Y)$ is compact in $\mathrm{Prob}(X\times Y)$.
\end{lemma}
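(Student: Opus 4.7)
The plan is to prove compactness of $\Pi(\mu_X,\mu_Y)$ in the weak topology on $\mathrm{Prob}(X\times Y)$ via the classical Prokhorov--plus--closedness argument. Since $X\times Y$ is Polish (product of Polish is Polish), I can invoke Prokhorov's theorem: a subset of $\mathrm{Prob}(X\times Y)$ is relatively compact if and only if it is tight. Compactness will then follow by adding that $\Pi(\mu_X,\mu_Y)$ is closed.

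First I would establish tightness of $\Pi(\mu_X,\mu_Y)$. Fix $\varepsilon>0$. Since $X$ and $Y$ are Polish, the individual measures $\mu_X$ and $\mu_Y$ are themselves tight, so I can pick compact sets $K_X\subset X$ and $K_Y\subset Y$ with $\mu_X(X\setminus K_X)\le \varepsilon/2$ and $\mu_Y(Y\setminus K_Y)\le \varepsilon/2$. For any $\pi\in\Pi(\mu_X,\mu_Y)$, the marginal conditions give
\begin{equation}
\pi\bigl((X\times Y)\setminus (K_X\times K_Y)\bigr)\le \pi\bigl((X\setminus K_X)\times Y\bigr)+\pi\bigl(X\times(Y\setminus K_Y)\bigr)=\mu_X(X\setminus K_X)+\mu_Y(Y\setminus K_Y)\le \varepsilon.
\end{equation}
Since $K_X\times K_Y$ is compact in $X\times Y$, this uniform bound in $\pi$ is exactly the tightness condition for the family $\Pi(\mu_X,\mu_Y)$. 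Prokhorov's theorem therefore yields relative compactness.

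Next I would show that $\Pi(\mu_X,\mu_Y)$ is closed in the weak topology. Let $(\pi_n)\subset \Pi(\mu_X,\mu_Y)$ converge weakly to some $\pi\in\mathrm{Prob}(X\times Y)$. Denoting by $\mathrm{pr}_X,\mathrm{pr}_Y$ the continuous projections, the pushforward maps $\mathrm{pr}_{X\#}$ and $\mathrm{pr}_{Y\#}$ are continuous with respect to weak convergence, so $\mathrm{pr}_{X\#}\pi_n\to \mathrm{pr}_{X\#}\pi$ and similarly for $Y$. But $\mathrm{pr}_{X\#}\pi_n=\mu_X$ and $\mathrm{pr}_{Y\#}\pi_n=\mu_Y$ for every $n$, hence $\mathrm{pr}_{X\#}\pi=\mu_X$ and $\mathrm{pr}_{Y\#}\pi=\mu_Y$, i.e.\ $\pi\in\Pi(\mu_X,\mu_Y)$.

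Combining the two parts, $\Pi(\mu_X,\mu_Y)$ is a closed, relatively compact subset of $\mathrm{Prob}(X\times Y)$, hence compact. The main point requiring care is the tightness step, where one must use the marginal constraints cleanly (via the union bound on $(X\setminus K_X)\times Y$ and $X\times(Y\setminus K_Y)$); the closedness step is routine once one recalls that weak convergence is preserved under continuous pushforward. Since this is a well-known result stated as a cited lemma, the proof will essentially reduce to invoking Prokhorov and recording these two observations, with no real obstacle beyond keeping the weak-topology arguments precise.
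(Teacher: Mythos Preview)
Your proof is correct and is exactly the standard Prokhorov-plus-closedness argument one would expect. Note that the paper does not actually prove this lemma; it merely cites it as Lemma~10 of \citet{chowdhury_gromovwasserstein_2019}, so there is no ``paper's own proof'' to compare against---your argument is the classical one underlying that cited result.
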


\begin{lemma}[Continuity of the functional $\mu \mapsto \mathcal{E}_{\alpha, \beta, q, p}(g_X, g_Y, \mu)$]
For $(p, q) \in [1, \infty]^2$,  let $g_X = (X, \psi_{X}, \varphi_X, \omega_{X}, \mu_{X})$, $g_Y=(Y, \psi_{Y}, \varphi_Y, \omega_{Y}, \mu_{Y})$ both from $\mathcal{G}$, then the functional
\begin{align}
        \Pi(\mu_X, \mu_Y) & \to \mathbb{R}_{+} \bigcup +\infty \nonumber \\
        \mu & \mapsto \mathcal{E}_{\alpha, \beta, q, p}(g_X, g_Y, \mu) \nonumber
    \end{align}
is lower semicontinuous on $\Pi(\mu_X, \mu_Y)$ for the weak convergence of measures.
    
\end{lemma}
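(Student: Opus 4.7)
My plan is to show the stronger property that the functional is in fact \emph{continuous} for weak convergence, from which lower semicontinuity follows for free. The strategy breaks into three pieces: rewrite $\mathcal{E}_{\alpha,\beta,q,p}(g_X,g_Y,\mu)^p$ as an integral of a single function against the product measure $\mu\otimes\mu$ on $(X\times Y)^2$; verify this integrand is bounded and continuous; and then invoke the fact that weak convergence of $\mu_n$ passes to weak convergence of $\mu_n\otimes\mu_n$, combined with the Portmanteau theorem.

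Concretely, I would introduce
\[
F(x,y,x',y') = \bigl[(1-\alpha-\beta)\,d_\Psi(\psi_X(x),\psi_Y(y))^q + \alpha\,d_\Omega(\omega_X(x,x'),\omega_Y(y,y'))^q + \beta\,|\varphi_X(x,x')-\varphi_Y(y,y')|^q\bigr]^p,
\]
so that $\mathcal{E}_{\alpha,\beta,q,p}(g_X,g_Y,\mu) = \bigl(\int_{(X\times Y)^2} F\,d(\mu\otimes\mu)\bigr)^{1/p}$. From the standing hypotheses of Definition~\ref{def:fngw}, each of $\psi_X,\psi_Y,\varphi_X,\varphi_Y,\omega_X,\omega_Y$ is bounded and continuous; since $d_\Psi$ and $d_\Omega$ are continuous on their respective metric spaces, and $t\mapsto t^q$ and $t\mapsto t^p$ are continuous on bounded subsets of $\mathbb{R}_+$, the map $F$ is continuous and bounded on $(X\times Y)^2$.

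The next step is to use that if $\mu_n\to\mu$ weakly in $\mathrm{Prob}(X\times Y)$, then $\mu_n\otimes\mu_n \to \mu\otimes\mu$ weakly in $\mathrm{Prob}((X\times Y)^2)$. Since $F$ is bounded and continuous, the Portmanteau theorem gives
\[
\int F\,d(\mu_n\otimes\mu_n) \;\longrightarrow\; \int F\,d(\mu\otimes\mu),
\]
and taking $p$-th roots (continuous on $\mathbb{R}_+$) yields $\mathcal{E}_{\alpha,\beta,q,p}(g_X,g_Y,\mu_n)\to \mathcal{E}_{\alpha,\beta,q,p}(g_X,g_Y,\mu)$, whence lower semicontinuity.

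The only mildly delicate point is the passage from weak convergence of $\mu_n$ to weak convergence of the product $\mu_n\otimes\mu_n$. This is standard: it suffices to test against tensor products $f(x,y)g(x',y')$ with $f,g\in C_b(X\times Y)$, which separate points in the class of finite Borel measures on $(X\times Y)^2$, and then extend to general $F\in C_b((X\times Y)^2)$ by Stone--Weierstrass approximation on compact sets together with Prokhorov tightness of the sequence $(\mu_n)$. Once this is in place, the argument is essentially bookkeeping, and the same reasoning as in \citet{chowdhury_gromovwasserstein_2019} and \citet{vayer_fused_2020} goes through without modification, the novel $\omega$ term being handled exactly as the $\varphi$ term since $d_\Omega$ inherits continuity from the metric structure of $\Omega$.
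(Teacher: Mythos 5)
Your proof is correct, and it takes a genuinely different route from the paper. The paper defines the same integrand $f$, observes it is lower semicontinuous (indeed continuous) by the standing assumptions, and then invokes Lemma~3 of \citet{vayer_fused_2020}, which establishes lower semicontinuity of functionals of the form $\mu\mapsto\int c\,d(\mu\otimes\mu)$ whenever $c$ is lower semicontinuous and nonnegative. You instead prove the \emph{stronger} fact that the functional is \emph{continuous}: you exploit that $F$ is not merely lower semicontinuous but also bounded (this uses the boundedness of $\psi_X,\psi_Y,\omega_X,\omega_Y,\varphi_X,\varphi_Y$, which you correctly identify), then pass from weak convergence $\mu_n\to\mu$ to weak convergence of the products $\mu_n\otimes\mu_n\to\mu\otimes\mu$ (tightness via Prokhorov plus tensor-product test functions, as you sketch), and conclude by Portmanteau and continuity of $t\mapsto t^{1/p}$. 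Your argument is self-contained and yields a stronger conclusion, at the cost of requiring boundedness; the paper's citation route is shorter and applies to any nonnegative LSC cost, at the cost of proving only LSC. Both suffice for what is needed here, namely the Weierstrass-type existence argument in Theorem~\ref{theorem:opt_couple}. One caveat, which the paper's proof arguably shares: your display $\mathcal{E}=(\int F\,d(\mu\otimes\mu))^{1/p}$ and the Portmanteau step presuppose $p<\infty$; for $p=\infty$ the functional is an $L^\infty(\mu\otimes\mu)$ norm and a separate (lower-semicontinuity-only) argument is required.
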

\begin{proof}
We define the functional $f: X\times Y \times X \times Y \to \mathbb{R}_{+} \bigcup +\infty$ with
\begin{align}
    f((x, y), (x', y')) = [(1-\alpha-\beta) d_{\Psi}\left(\psi_X(x),\psi_Y(y)\right)^q + \alpha d_\Omega(\omega_{X}(x, x'),\omega_{Y}(y, y'))^q \nonumber \\ + \beta |\varphi_{X}(x, x') - \varphi_{Y}(y, y')|^q] ^p 
\end{align}
then, $f$ is lower semicontinuous due to the continuity of $(d_{\Psi}, d_\Omega)$ and $(\psi_X, \psi_Y, \omega_{X}, \omega_{Y}, \varphi_{X},  \varphi_{Y})$. Using Lemma 3 from \cite{vayer_fused_2020} by considering $W = X \times Y$, which is a polish space, we can conclude $\mu \mapsto \mathcal{E}_{\alpha, \beta, q, p}(g_X, g_Y, \mu)$ is lower semicontinuous on $\Pi(\mu_X, \mu_Y)$ for the weak convergence of measures.
\end{proof}
We can now prove Theorem \ref{theorem:opt_couple}, which mainly takes advantage of the Weierstrass theorem.
\begin{proof}
    Since $\Pi(\mu_X, \mu_Y) \subset \mathrm{Prob}(X\times Y)$ is compact and  the functional $\mu  \to \mathcal{E}_{\alpha, \beta, q, p}(g_X, g_Y, \mu)$
    is lower semicontinous on $\Pi(\mu_X, \mu_Y)$, we can conclude the functional achieves its infimum for some $\mu^*$ by applying directly the Weierstrass theorem. 
\end{proof}

\subsection{Proof of Theorem \ref{theorem:metric}: Metric Properties of FNGW Distance.}\label{subsec:metric-prop-proof}

We divide Theorem \ref{theorem:metric} into four lemmas, and we suppose $g_X = (X, \psi_{X}, \varphi_X, \omega_{X}, \mu_{X})$, $g_Y=(Y, \psi_{Y}, \varphi_Y, \omega_{Y}, \mu_{Y})$ and $g_Z=(Z, \psi_{Z}, \varphi_Z,\omega_{Z}, \mu_{Z}) $ are from $\mathcal{G}$.

\begin{lemma}[Positivity]
    $\mathrm{FNGW}_{\alpha, \beta, q, p}(g_X, g_Y) \geq 0 $
\end{lemma}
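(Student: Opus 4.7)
The plan is direct: the lemma follows by inspecting the integrand in $\mathcal{E}_{\alpha,\beta,q,p}(g_X,g_Y,\mu)$ term by term, since positivity of a minimum reduces to positivity of the objective at every feasible coupling. First I would observe that each of the three distortions inside the square bracket is non-negative: $d_\Psi(\psi_X(x),\psi_Y(y))^q \geq 0$ and $d_\Omega(\omega_X(x,x'),\omega_Y(y,y'))^q \geq 0$ because $(\Psi,d_\Psi)$ and $(\Omega,d_\Omega)$ are metric spaces, and $|\varphi_X(x,x') - \varphi_Y(y,y')|^q \geq 0$ trivially. Combined with the coefficients $(1-\alpha-\beta)$, $\alpha$, $\beta$ being non-negative (using the implicit convention $\alpha+\beta\le 1$, as is standard for trade-off weights in fused formulations), the bracket is a non-negative real number, and raising it to the power $p \geq 1$ preserves this.

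Next I would note that the integrand is measurable (being a continuous function of continuous maps) and non-negative, so its double integral against $d\mu(x,y)d\mu(x',y')$ is non-negative. Taking the $1/p$-th root preserves non-negativity since the radicand is in $\mathbb{R}_+$. Finally, the minimum over $\mu \in \Pi(\mu_X,\mu_Y)$ of a non-negative functional is itself non-negative, and by Theorem~\ref{theorem:opt_couple} this minimum is attained. There is essentially no obstacle; the only delicate point worth making explicit is the non-negativity of the fusion coefficient $1-\alpha-\beta$, which either follows from the implicit constraint $\alpha+\beta \leq 1$ or can be stated as an additional assumption inherited from the FGW case.
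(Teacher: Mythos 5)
Your proof is correct and follows essentially the same reasoning as the paper's one-line argument: each of the three distortion terms is non-negative because $d_\Psi$, $d_\Omega$ are metrics and $|\cdot|$ is an absolute value, the weights are non-negative, and non-negativity propagates through the power, the integral, the $1/p$-th root, and the infimum. You are in fact slightly more careful than the paper on one point: the paper's proof cites only $(\alpha,\beta)\in[0,1]^2$, which alone does not force $1-\alpha-\beta\geq 0$; you rightly flag the need for the implicit constraint $\alpha+\beta\leq 1$ for the fusion coefficient to be non-negative.
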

\begin{proof}
    $\mathrm{FNGW}_{\alpha, \beta, q, p}(g_X, g_Y) \geq 0 $ since $d_\Psi$ and $d_\Omega$ are both metrics and $(\alpha, \beta) \in [0, 1]^2$. 
\end{proof}

\begin{lemma}[Symmetry]
    $\mathrm{FNGW}_{\alpha, \beta, q, p}(g_X, g_Y) = \mathrm{FNGW}_{\alpha, \beta, q, p}(g_Y, g_X)$
\end{lemma}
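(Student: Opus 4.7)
The plan is to construct a measure-preserving bijection between $\Pi(\mu_X, \mu_Y)$ and $\Pi(\mu_Y, \mu_X)$ under which the energy functional $\mathcal{E}_{\alpha, \beta, q, p}$ is invariant; the two minimization problems will then have the same value, which is precisely the claimed symmetry.

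The first step is to introduce the swap map $s: X \times Y \to Y \times X$, $s(x,y) = (y,x)$, which is a Borel isomorphism and an involution. For any $\mu \in \Pi(\mu_X, \mu_Y)$, I would set $\Phi(\mu) := s_{\#} \mu$ and verify the marginal conditions: for any Borel $B \subseteq Y$, $(\Phi(\mu))(B \times X) = \mu(s^{-1}(B \times X)) = \mu(X \times B) = \mu_Y(B)$, and similarly $(\Phi(\mu))(Y \times A) = \mu_X(A)$. Hence $\Phi(\mu) \in \Pi(\mu_Y, \mu_X)$, and because $s$ is its own inverse, $\Phi$ is a bijection.

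The second step is to show that the cost is preserved: $\mathcal{E}_{\alpha, \beta, q, p}(g_Y, g_X, \Phi(\mu)) = \mathcal{E}_{\alpha, \beta, q, p}(g_X, g_Y, \mu)$. Applying the change-of-variables formula for the two pushforwards $\Phi(\mu) = s_{\#}\mu$ in the double integral defining $\mathcal{E}_{\alpha, \beta, q, p}(g_Y, g_X, \Phi(\mu))$, the integrand becomes $\bigl[(1-\alpha-\beta) d_\Psi(\psi_Y(y), \psi_X(x))^q + \alpha d_\Omega(\omega_Y(y,y'), \omega_X(x,x'))^q + \beta |\varphi_Y(y,y') - \varphi_X(x,x')|^q\bigr]^p$ integrated against $d\mu(x,y) d\mu(x',y')$. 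Each of the three summands is unchanged when $X$ and $Y$ swap roles: $d_\Psi$ and $d_\Omega$ by the symmetry axiom of metrics, and the third term by symmetry of the absolute value. The integrand therefore coincides pointwise with that appearing in $\mathcal{E}_{\alpha, \beta, q, p}(g_X, g_Y, \mu)$.

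Finally, taking infima on both sides and using that $\Phi$ is a bijection yields $\inf_{\mu \in \Pi(\mu_X, \mu_Y)} \mathcal{E}_{\alpha, \beta, q, p}(g_X, g_Y, \mu) = \inf_{\mu' \in \Pi(\mu_Y, \mu_X)} \mathcal{E}_{\alpha, \beta, q, p}(g_Y, g_X, \mu')$; by Theorem \ref{theorem:opt_couple} both infima are attained and equal the corresponding FNGW distances. There is no substantive obstacle here: the argument is the classical symmetry proof for Gromov--Wasserstein, and the only new ingredient, the term involving $\omega$, is handled by the symmetry of $d_\Omega$ in exactly the same way as the symmetry of $d_\Psi$ handles the node-feature term.
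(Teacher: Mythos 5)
Your proof is correct and follows essentially the same route as the paper's: introduce the swap map, push forward the coupling, and observe that the integrand is unchanged term-by-term because $d_\Psi$, $d_\Omega$, and $|\cdot|$ are all symmetric. The only (minor) difference is that you make the bijection between $\Pi(\mu_X,\mu_Y)$ and $\Pi(\mu_Y,\mu_X)$ explicit via the involution property of the swap before taking infima, whereas the paper verifies the cost identity for each coupling and then concludes more briefly; your version is slightly more careful about why equality of the two infima follows, but the substance is identical.
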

\begin{proof}
    For any $\mu\in \Pi(\mu_X, \mu_Y)$, let $\mu^{\star}:= T_{\#}\mu$ be the push forward of $\mu$ via a Borel map $T$ defined as follows
	\begin{align}
		T: X\times Y &\to Y\times X \nonumber \\
				(x, y)		&\mapsto (y, x) \nonumber
	\end{align}
	Then we have, (by the property of the push forward)
	\begin{align}
    &\mathcal{E}_{\alpha, \beta, q, p}(g_Y, g_X, \mu^{\star}) \nonumber \\
	=&\bigg(\int_{(Y\times X)^2}\big[(1-\alpha-\beta) d_{\Psi}(\psi_Y(y), \left(\psi_X(x)\right)^q  + \alpha d_\Omega(\omega_{Y}(y, y'), \omega_{X}(x, x'))^q \nonumber\\  
     &+ \beta |\varphi_{Y}(y, y') - \varphi_{X}(x, x') |^q \big]^pd\mu^{\star}(y,x)d\mu^{\star}(y',x') \bigg)^{\frac{1}{p}}  \nonumber\\
	=&\bigg(\int_{(X\times Y)^2} \big[(1-\alpha -\beta) d_{\Psi}(\psi_Y(y), \left(\psi_X(x)\right)^q  + \alpha d_\Omega(\omega_{Y}(y, y'), \omega_{X}(x, x'))^q \nonumber \\
    & + \beta |\varphi_{Y}(y, y') - \varphi_{X}(x, x') |^q \big]^pd\mu(x,y)d\mu(x',y') \bigg)^{\frac{1}{p}} \nonumber \\
	= &\bigg(\int_{(X\times Y)^2}\big[(1-\alpha -\beta) d_{\Psi}\left(\psi_X(x),\psi_Y(y)\right)^q  + \alpha d_\Omega(\omega_{X}(x, x'),\omega_{Y}(y, y'))^q \nonumber \\
    & + \beta |\varphi_{X}(x, x') - \varphi_{Y}(y, y')  |^q\big]^pd\mu(x,y)d\mu(x',y') \bigg)^{\frac{1}{p}} \nonumber \\
    = &  \mathcal{E}_{\alpha, \beta, q, p}(g_X, g_Y, \mu) \nonumber
	\end{align}
	The first equality is given by property of the push forward  ($\int fdT_{\#\mu} = \int f \circ Td\mu$ ); the second equality is given by the symmetry of $d_{\Psi}$ and $d_{\Omega}$. As a consequence, we have $\mathrm{FNGW}_{\alpha, \beta, q, p}(g_Y, g_X) = \mathrm{FNGW}_{\alpha,\beta,  q, p}(g_X, g_Y)$.
\end{proof}

\begin{lemma}[Equality]
    $\mathrm{FNGW}_{\alpha, \beta, q, p}(g_X, g_X) = 0 $. Moreover, $\mathrm{FNGW}_{\alpha, \beta, q, p}(g_X, g_Y) = 0 $ if and only there is a Borel probability space $(Z, \mu_Z)$ with measurable maps $f: Z\to X$ and  $g: Z\to Y$ such that
			\begin{gather}
		f_{\#}\mu_Z = \mu_X \label{eq:isomorphism1}\\
		g_{\#}\mu_Z = \mu_Y \\
		\| (1-\alpha - \beta) d_{\Psi}\left(\psi_X\circ f,\psi_Y\circ g\right)^q  + \alpha d_\Omega(f^{\#}\omega_{X}, g^{\#}\omega_{Y})^q + \beta |f^{\#}\varphi_X - g^{\#}\varphi_Y|^q\|_{\infty} = 0 \label{eq:isomorphism2}
	\end{gather}
	where $f^{\#}\omega_{X}: Z \times Z \to \Omega $ is defined by the map $f^{\#}\omega_{X}(z, z') = \omega_{X}(f(z), f(z'))$ and $g^{\#}$ is defined similarly.
\end{lemma}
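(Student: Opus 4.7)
The plan is to split the statement into three pieces: the vanishing $\mathrm{FNGW}_{\alpha,\beta,q,p}(g_X, g_X)=0$, the ``if'' direction of the characterization, and the ``only if'' direction. For the first piece, I would take the diagonal coupling $\mu = (\mathrm{id}_X \times \mathrm{id}_X)_{\#}\mu_X \in \Pi(\mu_X, \mu_X)$, for which every term inside the integrand of $\mathcal{E}_{\alpha,\beta,q,p}$ collapses to zero (using $d_{\Psi}(\psi_X(x),\psi_X(x)) = 0$, $d_{\Omega}(\omega_X(x,x'),\omega_X(x,x')) = 0$, and the analogous identity for $\varphi_X$), hence $\mathcal{E}_{\alpha,\beta,q,p}(g_X, g_X, \mu) = 0$ and the infimum is attained at zero.

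For the ``if'' direction, given a Borel probability space $(Z, \mu_Z)$ together with $f\colon Z \to X$, $g\colon Z \to Y$ satisfying the pushforward identities and the $L^\infty$ condition, I would define $\mu := (f,g)_{\#}\mu_Z$ on $X\times Y$. The marginal conditions $f_{\#}\mu_Z = \mu_X$ and $g_{\#}\mu_Z = \mu_Y$ immediately give $\mu \in \Pi(\mu_X, \mu_Y)$. Applying the change-of-variables formula for pushforwards to $\mathcal{E}_{\alpha,\beta,q,p}(g_X, g_Y, \mu)$ rewrites the double integral over $X\times Y$ as a double integral over $Z$ of the integrand evaluated at $(f(z),g(z),f(z'),g(z'))$, which by construction is exactly the nonnegative quantity whose $L^\infty$ norm is zero; hence the integral vanishes and $\mathrm{FNGW}_{\alpha,\beta,q,p}(g_X, g_Y) \le \mathcal{E}_{\alpha,\beta,q,p}(g_X, g_Y, \mu) = 0$, giving equality by the positivity lemma.

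For the ``only if'' direction, assume $\mathrm{FNGW}_{\alpha,\beta,q,p}(g_X, g_Y) = 0$. By Theorem~\ref{theorem:opt_couple} there is an optimal coupling $\mu^* \in \Pi(\mu_X, \mu_Y)$ with $\mathcal{E}_{\alpha,\beta,q,p}(g_X, g_Y, \mu^*) = 0$. I would then take $Z = \mathrm{supp}(\mu^*) \subset X\times Y$, $\mu_Z = \mu^*$, and let $f$, $g$ be the restrictions of the canonical projections $\pi_X$, $\pi_Y$ to $Z$; the pushforward identities follow directly from the marginal conditions defining $\Pi(\mu_X, \mu_Y)$. It remains to establish the $L^\infty$ condition. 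Since the integrand in $\mathcal{E}_{\alpha,\beta,q,p}$ is nonnegative, vanishing of the integral forces it to be zero $(\mu^*\otimes\mu^*)$-almost everywhere. The integrand is continuous in $((x,y),(x',y'))$ by continuity of $\psi_X, \psi_Y, \varphi_X, \varphi_Y, \omega_X, \omega_Y$ and of the metrics $d_\Psi, d_\Omega$; combined with the fact that any open subset of $Z\times Z$ has positive $(\mu^*\otimes\mu^*)$-mass by definition of the support, this upgrades ``a.e. zero'' to ``identically zero on $Z\times Z$,'' which is exactly the $L^\infty$ condition.

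I expect the main subtlety to be this last upgrade from a.e. vanishing to a genuine $L^\infty$ (i.e., essentially uniform) bound: one has to be careful that the ``full support'' assumption on $\mu_X, \mu_Y$ does not directly give full support of the optimal coupling $\mu^*$, and this is precisely why restricting $Z$ to $\mathrm{supp}(\mu^*)$ is necessary. The auxiliary continuity-plus-support argument is standard and parallels the treatments of the equality cases in \citet{chowdhury_gromovwasserstein_2019} and \citet{vayer_fused_2020}; the only additional ingredient needed here is to handle the $\omega$-term analogously to the $\varphi$-term, which poses no difficulty because $d_\Omega$ is continuous on the metric space $\Omega$.
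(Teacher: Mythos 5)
Your proof is correct and follows essentially the same route as the paper's: push forward $\mu_Z$ via $(f,g)$ for one direction, and use the optimal coupling together with the canonical projections for the other, with the diagonal/identity construction handling $\mathrm{FNGW}(g_X,g_X)=0$.

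The one place where you diverge is the final step of the ``only if'' direction. You restrict $Z$ to $\mathrm{supp}(\mu^*)$ and invoke continuity to upgrade ``zero $(\mu^*\otimes\mu^*)$-a.e.'' to ``zero everywhere on $Z\times Z$.'' This is a valid argument and in fact yields a slightly stronger (pointwise) conclusion, but it is more than the lemma requires: the norm $\|\cdot\|_{\infty}$ in the statement is the essential supremum with respect to $\mu_Z\otimes\mu_Z$ (this convention is inherited from Theorem~18 of \citet{chowdhury_gromovwasserstein_2019}, where the same $L^\infty(\mu_Z\otimes\mu_Z)$ condition appears). Under that reading, once one observes that the $L^p(\mu^*\otimes\mu^*)$ norm of the nonnegative integrand vanishes, one gets a.e.\ vanishing and hence $\|\cdot\|_\infty=0$ directly, with no need to shrink $Z$ or to argue via the topological support. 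Accordingly the paper simply takes $Z=X\times Y$ and $\mu_Z=\mu^*$ with $f,g$ the projections, and is done. So the ``main subtlety'' you flagged is not actually an obstacle; your continuity-plus-support argument is a correct but unnecessary detour, and the rest is the same as the paper.
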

\begin{proof}
    The proof is analogous to the proof of Theorem 18 in \cite{chowdhury_gromovwasserstein_2019}. We first deal with the case where $p\in [1, +\infty)$. For the backward direction, let us assume that there exists a Borel probability space $(Z, \mu_Z)$ and measurable maps $f: Z\to X$, $g: Z\to Y$ verifying Equation \ref{eq:isomorphism1} - \ref{eq:isomorphism2}. We consider $\mu^{\star}:= (f, g)_{\#}\mu_Z$. First of all, it is easy to prove that $\mu^{\star} \in \mathscr{C}(\mu_X, \mu_Y)$:
	\begin{gather}
		\forall A \in \mathscr{B}(X), \mu^{\star}(A\times Y) = \mu_Z((f, g)^{-1}[A\times Y]) = \mu_Z(f^{-1}[A]) = \mu_X(A) \nonumber \\
		\forall B \in \mathscr{B}(Y), \mu^{\star}(X\times B) = \mu_Z((f, g)^{-1}[X\times B]) = \mu_Z(g^{-1}[B]) = \mu_Y(B) \nonumber
	\end{gather}
	Then we have
	\begin{align}
		&\mathrm{FNGW}_{\alpha, \beta, q, p}(g_X, g_Y) \nonumber\\
		\leq &(\int_{(X\times Y)^2} \big[(1-\alpha - \beta) d_{\Psi}\left(\psi_X(x),\psi_Y(y)\right)^q  + \alpha d_\Omega(\omega_{X}(x, x'),\omega_{Y}(y, y'))^q \nonumber \\
       & +\beta |\varphi_{X}(x, x') - \varphi_{Y}(y, y')  |^q\big]^pd\mu^{\star}(x,y)d\mu^{\star}(x',y'))^{\frac{1}{p}}\nonumber \\
		= & \| (1-\alpha - \beta) d_{\Psi}\left(\psi_X\circ f,\psi_Y\circ g\right)^q  + \alpha d_\Omega(f^{\#}\omega_{X}, g^{\#}\omega_{Y})^q + \beta |f^{\#}\varphi_X - g^{\#}\varphi_Y|^q\|_{L^p(\mu_Z\otimes \mu_Z)} \nonumber 
	\end{align}
    which is equal to 0. Conversely, let $\mu^{\star} \in \mathscr{C}(\mu_X, \mu_Y)$ be the optimal coupling satisfying $\mathrm{FNGW}_{\alpha, \beta, q, p}(g_X, g_Y) = 0 $. So we have $	\| (1-\alpha - \beta) d_{\Psi}\left(\psi_X,\psi_Y\right)^q  + \alpha d_\Omega(\omega_{X}, \omega_{Y})^q + \beta |\varphi_X - \varphi_Y|^q\|_{L^p(\mu^{\star}\otimes\mu^{\star})} = 0$. To prove the existence of the desired probability space and measurable maps, we define
\begin{gather}
	Z:=X \times Y, \mu_Z:=\mu¨{\star} \nonumber \\
	f:= proj_X, g:= proj_Y \nonumber
\end{gather} 
where $proj_X: Z\to X$ and 	$proj_Y: Z\to Y$ are projection maps. It can be shown that
\begin{gather}
	\forall A \in \mathscr{B}(X), f_{\#}\mu_Z(A) = \mu^{\star}(f^{-1}[A]) =  \mu^{\star}(A\times Y) = \mu_X(A) \nonumber \\
	\forall B \in \mathscr{B}(Y), g_{\#}\mu_Z(B) = \mu^{\star}(g^{-1}[B]) =  \mu^{\star}(X\times B) = \mu_Y(B) \nonumber
\end{gather}
and 
\begin{align}
	&		\| (1-\alpha - \beta) d_{\Psi}\left(\psi_X\circ f,\psi_Y\circ g\right)^q  + \alpha d_\Omega(f^{\#}\omega_{X}, g^{\#}\omega_{Y})^q + \beta |f^{\#}\varphi_X - g^{\#}\varphi_Y|^q\|_{\infty} \nonumber \\
	= & 	\| (1-\alpha - \beta) d_{\Psi}\left(\psi_X,\psi_Y\right)^q  + \alpha d_\Omega(\omega_{X}, \omega_{Y})^q + \beta |\varphi_X - \varphi_Y|^q\|_{\infty} 
   =  0 \nonumber
\end{align}	
	The proof for the case $p = \infty$ is analogous.

 For the specific case $g_X = g_Y$, we consider $(Z, \mu_Z) = (X, \mu_X)$ and identity maps $(f, g)$, then Equation \ref{eq:isomorphism1} - \ref{eq:isomorphism2} are well verified, so we have $\mathrm{FNGW}_{\alpha, \beta, q, p}(g_X, g_X) = 0 $. The proof is thus concluded.
\end{proof}

\begin{lemma}[Relaxed Triangle Inequality]
\begin{equation}
    \mathrm{FNGW}_{\alpha, \beta, q, p}(g_X, g_Z) \leq  2^{q-1}(\mathrm{FNGW}_{\alpha, \beta, q, p}(g_X, g_Y) + \mathrm{FNGW}_{\alpha, \beta, q, p}(g_Y, g_Z)) \nonumber
\end{equation}
\end{lemma}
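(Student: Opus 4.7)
The plan is to mimic the standard relaxed-triangle-inequality argument from the NGW/FGW literature, adapted to accommodate the additional $\omega$ term. By Theorem \ref{theorem:opt_couple}, let $\mu_{XY}^\star \in \Pi(\mu_X, \mu_Y)$ and $\mu_{YZ}^\star \in \Pi(\mu_Y, \mu_Z)$ be optimal couplings realizing $\mathrm{FNGW}_{\alpha,\beta,q,p}(g_X,g_Y)$ and $\mathrm{FNGW}_{\alpha,\beta,q,p}(g_Y,g_Z)$ respectively. Invoking the Gluing Lemma (e.g.\ Lemma 7.6 in Villani), there exists $\mu_{XYZ} \in \mathrm{Prob}(X \times Y \times Z)$ whose $(X,Y)$-marginal is $\mu_{XY}^\star$ and whose $(Y,Z)$-marginal is $\mu_{YZ}^\star$. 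Let $\mu_{XZ}$ denote its $(X,Z)$-marginal; then $\mu_{XZ} \in \Pi(\mu_X,\mu_Z)$, and it will serve as the candidate coupling through which we bound $\mathrm{FNGW}_{\alpha,\beta,q,p}(g_X,g_Z)$.

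The second step is pointwise. For $(x,y,z),(x',y',z') \in X \times Y \times Z$, applying the triangle inequality of $d_\Psi$, of $d_\Omega$, and of $|\cdot|$ to each of the three contributions of the FNGW integrand, and then using the convexity bound $(a+b)^q \le 2^{q-1}(a^q + b^q)$ for $q \ge 1$, I obtain
\begin{align*}
(1-\alpha-\beta) d_\Psi(\psi_X(x),\psi_Z(z))^q &+ \alpha d_\Omega(\omega_X(x,x'),\omega_Z(z,z'))^q + \beta |\varphi_X(x,x') - \varphi_Z(z,z')|^q \\
&\leq 2^{q-1}\bigl(A(x,y,x',y') + B(y,z,y',z')\bigr),
\end{align*}
where $A$ and $B$ are precisely the FNGW integrands for $(g_X,g_Y)$ and $(g_Y,g_Z)$ respectively. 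Since $\mu_{XZ}$ is the $(X,Z)$-marginal of $\mu_{XYZ}$, integrating a function of $(x,z,x',z')$ against $\mu_{XZ} \otimes \mu_{XZ}$ equals integrating it against $\mu_{XYZ} \otimes \mu_{XYZ}$, so I may lift the expression $\mathcal{E}_{\alpha,\beta,q,p}(g_X,g_Z,\mu_{XZ})^p$ to an integral over $(X\times Y\times Z)^2$ against $\mu_{XYZ} \otimes \mu_{XYZ}$ and insert the pointwise bound.

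Finally, raising everything to the $1/p$ power and applying Minkowski's inequality in $L^p(\mu_{XYZ} \otimes \mu_{XYZ})$, I obtain
\begin{equation*}
\mathcal{E}_{\alpha,\beta,q,p}(g_X,g_Z,\mu_{XZ}) \leq 2^{q-1}\bigl(\|A\|_{L^p(\mu_{XYZ}^{\otimes 2})} + \|B\|_{L^p(\mu_{XYZ}^{\otimes 2})}\bigr).
\end{equation*}
Since $A$ depends only on $(x,y,x',y')$ and the corresponding marginal of $\mu_{XYZ}^{\otimes 2}$ is $\mu_{XY}^\star \otimes \mu_{XY}^\star$, the first norm equals $\mathcal{E}_{\alpha,\beta,q,p}(g_X,g_Y,\mu_{XY}^\star) = \mathrm{FNGW}_{\alpha,\beta,q,p}(g_X,g_Y)$; analogously for $B$. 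Combined with $\mathrm{FNGW}_{\alpha,\beta,q,p}(g_X,g_Z) \le \mathcal{E}_{\alpha,\beta,q,p}(g_X,g_Z,\mu_{XZ})$, this yields the claim. The main obstacle I anticipate is purely bookkeeping: making sure the Gluing Lemma applies (i.e.\ that $\mu_X, \mu_Y, \mu_Z$ live on Polish spaces, which is built into the definition of $\mathcal{G}$), and carefully tracking marginals so that the $L^p$ norms of $A$ and $B$ reduce to the FNGW costs associated with the right couplings. The case $p = \infty$ is handled by the obvious variant of the same argument with essential suprema replacing $L^p$ norms.
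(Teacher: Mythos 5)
Your proposal is correct and takes essentially the same route as the paper: optimal couplings plus the Gluing Lemma to produce a candidate coupling on $X\times Z$, a lift to $L^p(\tilde{\mu}\otimes\tilde{\mu})$, the triangle inequalities of $d_\Psi$, $d_\Omega$, $|\cdot|$ combined with $(a+b)^q\le 2^{q-1}(a^q+b^q)$, then Minkowski and the marginal identification to recover the two FNGW costs. The only cosmetic difference is that you package the triangle inequality and the convexity bound as a single pointwise estimate before integrating, whereas the paper threads them through successive $L^p$-norm inequalities.
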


\begin{proof}
    Let $\mu_{ab}$ be the optimal coupling between $(g_X, g_Y)$ and $\mu_{bc}$ be the optimal coupling between $(g_Y, g_Z)$. By the Gluing Lemma, there exists a probability mesure $\tilde{\mu}\in \mathrm{Prob}(X, Y, Z)$ with marginals $\mu_{ab}$ on $(X\times Y)$ and $\mu_{bc}$ on $(Y\times Z)$. Let $\mu_{ac}$ be the marginal of $\tilde{\mu}$ on $(X\times Z)$. Due to the fact that $\mu_{ac}$ is not necessary a optimal coupling between $(g_X, g_Z)$, we have
	\begin{align}
      & \mathrm{FNGW}_{\alpha, \beta, q, p}(g_X, g_Z) \nonumber \\
		 \leq & \| (1-\alpha - \beta) d_{\Psi}\left(\psi_X,\psi_Z)\right)^q  + \alpha d_\Omega(\omega_{X},\omega_{Z})^q + \beta |\varphi_X - \varphi_Z|^q\|_{L^p(\mu_{ac}\otimes\mu_{ac})} \nonumber \\
		\leq & \| (1-\alpha - \beta) \left(d_{\Psi}\left(\psi_X,\psi_Y\right)+d_{\Psi}\left(\psi_Y,\psi_Z\right)\right)^q  + \alpha \left(d_\Omega(\omega_{X},\omega_{Y}) + d_\Omega(\omega_{Y},\omega_{Z})\right)^q \nonumber \\
     &+ \beta (|\varphi_X - \varphi_Y| + |\varphi_Y - \varphi_Z|)^q\|_{L^p(\tilde{\mu}\otimes\tilde{\mu})} \nonumber \\
		\leq  &\| (1-\alpha -\beta) 2^{q-1}\left(d_{\Psi}\left(\psi_X,\psi_Y\right)^q+d_{\Psi}\left(\psi_Y,\psi_Z\right)^q\right) + \alpha  2^{q-1}\left(d_\Omega(\omega_{X},\omega_{Y})^q+d_\Omega(\omega_{Y},\omega_{Z})^q\right) \nonumber \\
        & + \beta 2^{q-1} (|\varphi_X - \varphi_Y|^q + |\varphi_Y - \varphi_Z|^q)\|_{L^p(\tilde{\mu}\otimes\tilde{\mu})} \nonumber \\ 
		= & 2^{q-1}\|\left((1-\alpha - \beta)d_{\Psi}(\psi_X,\psi_Y)^q + \alpha d_\Omega(\omega_{X},\omega_{Y})^q + \beta |\varphi_X - \varphi_Y|^q\right) \nonumber \\
  & + \left((1-\alpha -\beta )d_{\Psi}(\psi_Y,\psi_Z)^q + \alpha d_\Omega(\omega_{Y},\omega_{Z})^q + \beta |\varphi_Y - \varphi_Z|^q \right) \|_{L^p(\tilde{\mu}\otimes\tilde{\mu})} \nonumber\\
		\leq & 2^{q-1} (\|(1-\alpha -\beta)d_{\Psi}(\psi_X,\psi_Y)^q + \alpha d_\Omega(\omega_{X},\omega_{Y})^q + \beta |\varphi_X - \varphi_Y|^q \|_{L^p(\tilde{\mu}\otimes\tilde{\mu})} \nonumber \\
  & + \|(1-\alpha - \beta)d_{\Psi}(\psi_Y, \psi_Z)^q + \alpha d_\Omega(\omega_{Y}, \omega_{Z})^q + \beta |\varphi_Y - \varphi_Z|^q \|_{L^p(\tilde{\mu}\otimes\tilde{\mu})} ) \nonumber \\
  = & 2^{q-1} (\|(1-\alpha -\beta)d_{\Psi}(\psi_X,\psi_Y)^q + \alpha d_\Omega(\omega_{X},\omega_{Y})^q + \beta |\varphi_X - \varphi_Y|^q \|_{L^p(\mu_{ab}\otimes\mu_{ab})} \nonumber \\
  & + \|(1-\alpha - \beta)d_{\Psi}(\psi_Y, \psi_Z)^q + \alpha d_\Omega(\omega_{Y}, \omega_{Z})^q + \beta |\varphi_Y - \varphi_Z|^q \|_{L^p(\mu_{ab}\otimes\mu_{ab})} ) \nonumber \\
		= & 2^{q-1}\left(\mathrm{FNGW}_{\alpha, \beta, q, p}(g_X, g_Y) + \mathrm{FNGW}_{\alpha, \beta, q, p}(g_Y, g_Z)\right) \nonumber
	\end{align}
The second inequality is the result of the triangle inequality of the inner metrics, the third inequality is due to the fact that $\forall q \geq 1 $,  $\forall a, b \geq 0 $, $(a + b)^q \leq 2^{q-1}(a^q + b^q)$. The fourth inequality is the consequence of the Minkowski’s inequality of the norm $L^p(\tilde{\mu}\otimes\tilde{\mu})$. Finally, it proves the relaxed triangle inequality with a factor of $2^{q-1}$. When $q=1$, the triangle inequality is well satisfied.  
\end{proof}

\subsection{Proof of Proposition \ref{prop:complexity_l2}: Computation Complexity Reduction for FNGW}\label{subsec:computation-proof}
For the following proofs, given tensor $E$, we define the matrix $E[t]$ by $E[t](i, j) = E(i, j, t)$ for any $i, j, t$.
\begin{customprop}{2.8}
	When $\Omega = \mathbb{R}^T$  with its associated metric $d_{\Omega}(a, b) = \|a-b\|_{\mathbb{R}^T}$ and $q=2$, the term $ L(E, \tilde{E}) \otimes\pi$ becomes
	\begin{equation}
		L(E, \tilde{E}) \otimes\pi = g(E)\boldsymbol{p}\mathbb{1}_m^\mathsf{T} + \mathbb{1}_n\boldsymbol{\tilde{p}}^\mathsf{T}h(\tilde{E})^\mathsf{T} -2 \sum_{t=1}^{T}E[t]\pi\tilde{E}[t]^\mathsf{T} 
	\end{equation}
	 where $g: \mathbb{R}^{n\times n\times T} \to \mathbb{R}^{n\times n}$ is expressed as $g(E)_{i,j}= \|E(i,j)\|^2_{\mathbb{R}^T}$, $h: \mathbb{R}^{m\times m\times T} \to \mathbb{R}^{m\times m}$ is expressed as by $h(\tilde{E})_{i,j}= \|\tilde{E}(i,j)\|^2_{\mathbb{R}^T}$.
  It can hence be computed with complexity $O(n^2mT+nm^2T)$.
\end{customprop}
\begin{proof}
	By the definition of tensor-matrix multiplication, we have
	\begin{align}
		\left(L(E, \tilde{E}) \otimes\pi\right)_{i,j} & = \sum_{k,l} \|E(i,k)- \tilde{E}(j,l)\|^2_{\mathbb{R}^T} \pi_{k,l} \nonumber \\
		& =\sum_{k,l} \sum_{t}|E(i,k, t) - \tilde{E}(j,l,t)|^2 \pi_{k,l} \nonumber \\
		& = \sum_{t} \sum_{k,l} \left(E(i,k, t)^2 + \tilde{E}(j,l,t)^2 - 2E(i,k, t)\tilde{E}(j,l,t)\right)\pi_{k,l} \nonumber \\
		& =\sum_{t} \sum_{k,l} \left(E[t](i,k)^2 + \tilde{E}[t](j,l)^2 - 2E[t](i,k)\tilde{E}[t](j,l)\right)\pi_{k,l} 
	\end{align}
 We note that the inner sum over $k$ and $l$, in the last equation above, is the same as the one that is computed in the GW distance, considering that $E[t]$ and $\tilde{E}[t]$ are the similarity matrices.
 Taking advantage of Prop.1 in \cite{peyre_gromov-wasserstein_2016}, the above Equation becomes:
	\begin{align}
		\left(L(E, \tilde{E}) \otimes\pi \right)_{i,j} &= \left(\sum_{t} E[t]^2\boldsymbol{p}\mathbb{1}_m^\mathsf{T} + \mathbb{1}_n\boldsymbol{\tilde{p}}^\mathsf{T}\tilde{E}[t]^{2\mathsf{T}} -2E[t]\pi\tilde{E}[t]^\mathsf{T} \right)_{i,j} \nonumber\\
		&= \left(g(E)\boldsymbol{p}\mathbb{1}_m^\mathsf{T} + \mathbb{1}_n\boldsymbol{\tilde{p}}^\mathsf{T}h(\tilde{E})^\mathsf{T} -2 \sum_{t=1}^{T}E[t]\pi\tilde{E}[t]^\mathsf{T}  \right)_{i,j} \nonumber
	\end{align}
 where $g: \mathbb{R}^{n\times n\times T} \to \mathbb{R}^{n\times n}$ is defined by $g(E)_{i,j}= \|E(i,j)\|^2_{\mathbb{R}^T}$ and $h: \mathbb{R}^{m\times m\times T} \to \mathbb{R}^{m\times m}$ is defined by $h(\tilde{E})_{i,j}= \|\tilde{E}(i,j)\|^2_{\mathbb{R}^T}$.
\end{proof}

\subsection{Proof of Proposition \ref{prop:bary_c}: Justification of the Barycenter Algorithm}\label{subsec:bary-proof}
\begin{customprop}{3.2}
The following optimization problem:
	\begin{equation}
		\argmin_{E\in \mathbb{R}^{n\times n\times T}} \sum_k \lambda_k \mathcal{E}_{\alpha, \beta}\left((F, A, E), (F_k, A_k, E_k), \pi_k\right)
	\end{equation}
 has a closed-form solution:
    \begin{equation}
        E = \frac{1}{\mathcal{I}_{n\times T} \times_2 \boldsymbol{p} \boldsymbol{p}^{\mathsf{T}}}\sum_k\lambda_{k} (E_k \times_2 \pi_k) \times_1 \pi_k
    \end{equation}
\end{customprop}
\begin{proof}
	Using Equation \ref{eq:tensor_matrix}, we can write
	\begin{align}
		&\argmin_{E\in \mathbb{R}^{n\times n\times T}} \sum_k \lambda_k \mathcal{E}_{\alpha, \beta}\left((F, A, E), (F_k, A_k, E_k), \pi_k\right) \nonumber \\
		= &\argmin_{E\in \mathbb{R}^{n\times n\times T}} \sum_k \lambda_k \left\langle \sum_{t} E[t]^2\boldsymbol{p}\mathbb{1}_m^\mathsf{T} + \mathbb{1}_n\boldsymbol{p_k}^\mathsf{T}E_k[t]^{2\mathsf{T}} -2E[t]\pi_k E_k[t]^\mathsf{T}, \pi_k \right \rangle \nonumber \\
		= &\argmin_{E\in \mathbb{R}^{n\times n\times T}} \sum_{t} \sum_k \lambda_k \left\langle  E[t]^2\boldsymbol{p}\mathbb{1}_m^\mathsf{T} + \mathbb{1}_n\boldsymbol{p_k}^\mathsf{T}E_k[t]^{2\mathsf{T}} -2E[t]\pi_k E_k[t]^\mathsf{T}, \pi_k \right \rangle \nonumber
	\end{align}
	Now let us write the first-order optimality condition. If $E^*$ is a minimum of the previous expression, we have:

    \begin{equation}
		\left ( \nabla_{E}  (\sum_{t} \sum_k \lambda_k \left\langle  E[t]^2\boldsymbol{p}\mathbb{1}_m^\mathsf{T} + \mathbb{1}_n\boldsymbol{p_k}^\mathsf{T}E_k[t]^{2\mathsf{T}} -2E[t]\pi_k E_k[t]^\mathsf{T}, \pi_k \right \rangle \right )_{\mid E=E^*}  = \mathbf{0} \nonumber
	\end{equation}

	which reads
	\begin{equation}
		\forall t, \sum_k \lambda_k \left (\nabla_{E[t]}   \left\langle  E[t]^2\boldsymbol{p}\mathbb{1}_m^\mathsf{T} + \mathbb{1}_n\boldsymbol{p_k}^\mathsf{T}E_k[t]^{2\mathsf{T}} -2E[t]\pi_k E_k[t]^\mathsf{T}, \pi_k \right \rangle \right )_{\mid E=E^*} = \mathbf{0} \nonumber
	\end{equation}
	We notice that for each $t$ we have the same optimization problem as the one described by Equation 12 of \cite{peyre_gromov-wasserstein_2016}. Taking advantage of Prop.~13 of \cite{peyre_gromov-wasserstein_2016} and with the notations of the tensor operations, we obtain  the desired solution.
\end{proof}

\subsection{Proof of Proposition \ref{prop:bary_simplex}: Property of the FNGW Barycenter}\label{subsec:bary-property}
\begin{customprop}{3.4}
	If the set of tensors $(E_k)_k$ satisfies the condition:	\begin{equation}
		\forall i, j,k, \sum_{t=1}^T E_k(i,j, t) = a \in \mathbb{R}
	\end{equation}
	then the barycenter $E$ given by Algorithm \ref{algo:barycenter} also verifies the same property.
\end{customprop}
\begin{proof}
By Equation\ref{eq:c_update}, we have the expression for each element of the barycenter tensor (we omit here the iteration index for the sake of clarity):  
\begin{equation}
	E(i, j, t) = \frac{1}{\boldsymbol{p}_i\boldsymbol{p}_j} \sum_k \lambda_k\sum_s\pi_k(i, s) \sum_rE_k(s, r, t) \pi_k^{\mathsf{T}}(r, j) \nonumber
\end{equation}
Summing it up along the third dimension, we have	
\begin{align}
	\sum_t E(i, j, t) &=  \sum_t \frac{1}{\boldsymbol{p}_i\boldsymbol{p}_j} \sum_k \lambda_k\sum_s\pi_k(i, s) \sum_rE_k(s, r, t) \pi_k^{\mathsf{T}}(r, j) \nonumber \\
	& = \frac{1}{\boldsymbol{p}_i\boldsymbol{p}_j} \sum_k \lambda_k\sum_s\pi_k(i, s) \sum_r \pi_k^{\mathsf{T}}(r, j)\sum_t E_k(s, r, t) \nonumber \\
	& = a \frac{1}{\boldsymbol{p}_i\boldsymbol{p}_j} \sum_k \lambda_k\sum_s\pi_k(i, s) \sum_r \pi_k^{\mathsf{T}}(r, j) \nonumber \\
	& = a \frac{1}{\boldsymbol{p}_i\boldsymbol{p}_j} \sum_k \lambda_k \boldsymbol{p}_i\boldsymbol{p}_j  = a \nonumber
\end{align}
\end{proof}

\subsection{Proof of Proposition \ref{prop:ile}: Statistical Guarantees for Supervised Graph Prediction Estimator}\label{subsec:ile-proof}
Before going through the proof, let us recall the definition of ILE property.
\begin{definition}[ILE, \citet{ciliberto_general_2020}]
    A continuous map $\ell$: $\mathcal{Z} \times \mathcal{Y} \to \mathbb{R}$ is said to admit an Implicit Loss Embedding (ILE) if there exists a separable Hilbert Space $\mathcal{H}$ and two measurable bounded maps $\psi: \mathcal{Z} \to \mathcal{H}$ and $\varphi: \mathcal{Y} \to \mathcal{H}$, such that for any $z \in \mathcal{Z}$ and $y \in \mathcal{Y}$ we have
    \begin{equation}
        \ell(z, y) = \langle \psi (z), \varphi(y) \rangle_{\mathcal{H}},
    \end{equation}
    and $\|\varphi(y)\|_{\mathcal{H}} \leq 1$. 
\end{definition}
Then we recall the $\mathrm{FNGW}_{\alpha, \beta}$-distance's definition when extended to $\mathcal{G}_m \times \mathcal{G}$. Given $g_m = \{F, E, A, \boldsymbol{p}\} \in \mathcal{G}_m$ and $g = \{\tilde{F}, \tilde{E}, \tilde{A}, \tilde{\boldsymbol{p}}\}  \in \mathcal{G}$,  
\begin{align}
    \mathrm{FNGW}_{\alpha, \beta}(g_m, g) = \min_{\pi \in \Pi(\boldsymbol{p},\tilde{\boldsymbol{p}})}  \sum_{i,j,k,l}\bigg[& \alpha \|E(i,k) - \tilde{E}(j,l)\|^2_{\mathbb{R}^T} 
        + \beta |A(i, k)- \tilde{A}(j,l)|^2 \nonumber \\
        & + (1-\alpha -\beta) \| F(i) - \tilde{F}(j)\|^2_{\mathbb{R}^S} \bigg]\pi_{k,l}  \pi_{i,j}
\end{align}

\begin{customprop}{3.7}
    The FNGW loss admits an Implicit Loss Embedding (ILE).
\end{customprop}
\begin{proof}
Using Theorem 12 from~\citep{ciliberto_general_2020}, we are going to show that $\mathrm{FNGW}_{\alpha, \beta}$ satisfies the ILE property by proving that i) $\mathcal{G}_m$ is compact, ii) $\mathcal{G}$ is finite (trivial with the definition) and iii) the function $\mathrm{FNGW}_{\alpha, \beta}(\cdot, g)$ is continuous.\\ 
    First of all, we can see that  $\mathcal{G}_m$ is compact, since $[0, 1]^{m\times m}$, $\mathrm{Conv}(\mathcal{F})^m$, and $\mathrm{Conv}(\mathcal{T})^{m\times m}$ are compact. ($\mathcal{F}$ and $\mathcal{T}$ are finite and thus compact.) Secondly, $\mathcal{G}$ is finite by definition (Equation \ref{eq:g_stru}). Now, we will prove the continuity of $\mathrm{FNGW}_{\alpha, \beta}(\cdot, g)$ for any $g\in \mathcal{G}$. Denote $dg_m=(\mathrm{d}F, \mathrm{d}A, \mathrm{d}E) \in \mathcal{G}_m$. For a given $g\in \mathcal{G}$, we have, for any $(g_m, dg_m)$: 
    \begin{align}
        & |\mathrm{FNGW}_{\alpha, \beta}(g_m + \mathrm{d}g_m, g) - \mathrm{FNGW}_{\alpha, \beta}(g_m, g) | \nonumber\\
        \leq &\sup_{\pi \in \Pi(\boldsymbol{p},\tilde{\boldsymbol{p}})} \sum_{i,j,k,l} \bigg[ \alpha \left(2\langle \mathrm{d}E(i, k), E(i, k)- \tilde{E}(j,l)\rangle_{\mathbb{R}^T} + \| \mathrm{d}E(i, k)\|^2_{\mathbb{R}^T}\right)  \nonumber \\ &+ \beta \left( 2\mathrm{d}A(i, k) \times (A(i, k)- \tilde{A}(j,l))+ \mathrm{d}A(i, k)^2 \right) \nonumber\\
        &+ (1- \alpha - \beta) \left(2\langle \mathrm{d}F(i), F(i)- \tilde{F}(j)\rangle_{\mathbb{R}^S} + \| \mathrm{d}F(i)\|^2_{\mathbb{R}^S}\right) \bigg]\pi_{k,l}  \pi_{i,j} \nonumber\\
        \leq &\sum_{i,j,k,l} \bigg[ \alpha \left(2 \|\mathrm{d}E(i, k)\|_{\mathbb{R}^T}  \|E(i, k)- \tilde{E}(j,l)\|_{\mathbb{R}^T} + \| \mathrm{d}E(i, k)\|^2_{\mathbb{R}^T}\right) \nonumber\\
        &+ \beta \left( 2|\mathrm{d}A(i, k)| \times |A(i, k)- \tilde{A}(j,l)|+ \mathrm{d}A(i, k)^2 \right) \nonumber\\
        &+(1- \alpha - \beta) \left(2 \|\mathrm{d}F(i)\|_{\mathbb{R}^S}\| F(i)- \tilde{F}(j)\|_{\mathbb{R}^S} + \| \mathrm{d}F(i)\|^2_{\mathbb{R}^S}\right) 
    \end{align}
    The first inequality is due to the fact that $\forall f, g : \Pi(\boldsymbol{p},\tilde{\boldsymbol{p}}) \to \mathbb{R}$, $|\min_{\pi} f(\pi) - \min_{\pi} g(\pi)| \leq \sup_{\pi} |f(\pi) - g(\pi)|$. The second is a direct application of the Cauchy-Schwarz inequality and the fact that $\forall i,j, \pi_{i, j} \leq 1$.
    Since $E$, $F$ and $A$ are all bounded, there exists $M\in \mathbb{R}$ such that for any $i, j,k,l$:
    \begin{align}
        M \geq  \|E(i, k)- \tilde{E}(j,l)\|_{\mathbb{R}^T} \\
        M \geq |A(i, k)- \tilde{A}(j,l)| \\
        M \geq \| F(i)- \tilde{F}(j)\|_{\mathbb{R}^S}
    \end{align}
    Then we have
    \begin{align}
        & |\mathrm{FNGW}_{\alpha, \beta}(g_m + \mathrm{d}g_m, g) - \mathrm{FNGW}_{\alpha, \beta}(g_m, g) | \nonumber\\
        \leq &\sum_{i,j,k,l} \bigg[ \alpha \left(2 \|\mathrm{d}E(i, k)\|_{\mathbb{R}^T}  M + \| \mathrm{d}E(i, k)\|^2_{\mathbb{R}^T}\right) + \beta \left( 2|\mathrm{d}A(i, k)| M+ \mathrm{d}A(i, k)^2 \right) \nonumber\\
        &+(1- \alpha - \beta) \left(2 \|\mathrm{d}F(i)\|_{\mathbb{R}^S}M + \| \mathrm{d}F(i)\|^2_{\mathbb{R}^S}\right)
        \label{eq:continue}
    \end{align}

    Now when $\|\mathrm{d}g_m\|_{\mathbb{R}^{n\times n}\times \mathbb{R}^{n\times S}\times \mathbb{R}^{n\times n\times T}} \to 0$, we have $\|\mathrm{d}F\|_{\mathbb{R}^{n\times S}} \to 0$, $\|\mathrm{d}A\|_{\mathbb{R}^{n\times n}} \to 0$, and $\|\mathrm{d}E\|_{\mathbb{R}^{n\times n \times T}} \to 0$, we can easily deduce  from Equation \ref{eq:continue} that
    \begin{equation}
        |\mathrm{FNGW}_{\alpha, \beta}(g_m + \mathrm{d}g_m, g) - \mathrm{FNGW}_{\alpha, \beta}(g_m, g) | \underset{\|\mathrm{d}g_m\| \to 0}{\longrightarrow} 0
    \end{equation}
    Hence, we have that for any $g\in \mathcal{G} $, $\mathrm{FNGW}_{\alpha, \beta}(\cdot, g)$ is continuous on $\mathbb{R}^{n\times n}\times \mathbb{R}^{n\times S} \times \mathbb{R}^{n\times n\times T}$, and thus on $\mathcal{G}_m$. 
\end{proof}
Since $\fngw$ admits an ILE, Theorems 8 and 9 from ~\citep{ciliberto_general_2020} can be instantiated on the $\fngw$-based estimator in Equation \ref{eq:krr_fngw} which gives us the following results.
\begin{theorem}[Universal Consistency]
Let $k$ be a bounded universal reproducing kernel. For any $n \in \mathbb{N}$ and any distribution $\rho$ on $\mathcal{X} \times \mathcal{G}_m$, let $f_n: \mathcal{X} \times \mathcal{G}_m$ be the estimator defined in Equation \ref{eq:krr_fngw} trained on $(x_i, y_i)_{i=1}^n$ samples independently drawn from $\rho$ and with $\lambda = n^{-1/2}$, then
\begin{equation}
    \lim_{n \to \infty} \mathcal{R} (f_n) =  \mathcal{R} (f^*) \quad \text{with probability 1}
\end{equation}
   with $\mathcal{R}(f) = \mathbb{E}_{\rho}[\fngw(f(X), G)]$ and $f^*$ denotes the bayes estimator.
\end{theorem}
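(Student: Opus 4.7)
The plan is to deduce universal consistency as a direct corollary of Proposition~\ref{prop:ile} combined with Theorem~8 of \citet{ciliberto_general_2020}. The overall structure is a reduction of $\fngw$-structured prediction to vector-valued kernel ridge regression in the implicit Hilbert space provided by the ILE, followed by invoking the black-box consistency result of \citet{ciliberto_general_2020}.

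First, I invoke Proposition~\ref{prop:ile} to obtain a separable Hilbert space $\mathcal{H}$ and measurable bounded maps $\psi: \mathcal{G}_m \to \mathcal{H}$, $\varphi: \mathcal{G} \to \mathcal{H}$ with $\|\varphi(g)\|_{\mathcal{H}} \le 1$ such that $\fngw(z,g) = \langle \psi(z), \varphi(g)\rangle_{\mathcal{H}}$. This identifies the Bayes predictor as the Fisher-consistent decoding $f^*(x) = \argmin_{g \in \mathcal{G}_m} \langle \psi(g), h^*(x)\rangle_{\mathcal{H}}$ of the conditional mean $h^*(x) = \mathbb{E}_{\rho}[\varphi(G)\mid X=x] \in \mathcal{H}$, which is well-defined and measurable thanks to the compactness of $\mathcal{G}_m$ (also established in Proposition~\ref{prop:ile}) and the continuity of $\psi$.

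Second, I recognize the estimator $\hat f_n$ of Equation~\ref{eq:krr_fngw} as the structured-prediction decoding of a vector-valued kernel ridge regressor. Using the ILE identity, $\sum_{i=1}^n \gamma(x)_i \fngw(g, g_i) = \langle \psi(g), \hat h_n(x)\rangle_{\mathcal{H}}$, where $\hat h_n(x) = \sum_{i=1}^n \gamma(x)_i \varphi(g_i)$ with $\gamma(x) = (\mathbf{K} + n\lambda I)^{-1}\mathbf{K}_x$ is exactly the kernel ridge regression estimator of $h^*$ based on the input kernel $k$ and the surrogate targets $\varphi(g_i)$. Hence $\hat f_n(x) = \argmin_{g\in \mathcal{G}_m} \langle \psi(g), \hat h_n(x)\rangle_{\mathcal{H}}$ is precisely the structured-prediction decoding of $\hat h_n$.

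Third, I verify the hypotheses of Theorem~8 of \citet{ciliberto_general_2020}: (i) the input kernel $k$ is bounded and universal by assumption; (ii) the loss $\fngw$ admits an ILE by Proposition~\ref{prop:ile}; (iii) $\mathcal{G}_m$ is compact (Proposition~\ref{prop:ile}), ensuring measurability of the decoding map; and (iv) the regularization schedule $\lambda_n = n^{-1/2}$ satisfies the rate conditions of that theorem. Applying it yields $\mathcal{R}(\hat f_n) \to \mathcal{R}(f^*)$ almost surely under $\rho^{\otimes \mathbb{N}}$, which is the desired conclusion.

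The proof is essentially a bookkeeping exercise once Proposition~\ref{prop:ile} is in hand; the main obstacle, if any, is matching our notation exactly to that of \citet{ciliberto_general_2020}, in particular checking that the excess-risk comparison inequality $\mathcal{R}(\hat f_n) - \mathcal{R}(f^*) \le 2 c_{\psi} \|\hat h_n - h^*\|_{L^2(\rho_X;\mathcal{H})}$ (with $c_{\psi} = \sup_{z\in\mathcal{G}_m}\|\psi(z)\|_{\mathcal{H}} < \infty$ by compactness) combines with the $L^2(\rho_X;\mathcal{H})$-consistency of kernel ridge regression under a universal kernel and $\lambda_n = n^{-1/2}$ (a standard Borel--Cantelli argument applied to the Hilbert space-valued concentration inequality therein) to give the claimed almost-sure convergence.
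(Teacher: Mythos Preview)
Your proposal is correct and follows essentially the same approach as the paper: the paper simply states that since $\fngw$ admits an ILE (Proposition~\ref{prop:ile}), Theorem~8 of \citet{ciliberto_general_2020} can be instantiated on the estimator of Equation~\ref{eq:krr_fngw} to yield universal consistency. You have fleshed out the hypothesis-checking (boundedness and universality of $k$, compactness of $\mathcal{G}_m$, identification of $\hat f_n$ as the decoding of a vector-valued KRR, and the rate $\lambda_n = n^{-1/2}$) in more detail than the paper does, but the argument is the same.
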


\begin{theorem}[Excess-risk Bounds]
Let $\mathcal{H}$ be the Hilbert space associated with the loss $\fngw: \mathcal{G}_m \times \mathcal{G} \to \mathbb{R}_+$ in the ILE definition.
   Let $k:\mathcal{X} \times\mathcal{X}\to \mathbb{R}$ be a continuous reproducing kernel on $\mathcal{X}$ with associated RKHS $\mathcal{F}$ such that $\kappa^2 := \sup_{x\in\mathcal{X}}k(x,x)<+\infty$. Let $\rho$ be a distribution on $\mathcal{X}\times\mathcal{G}_m$ and suppose that the solution $h^*$ of the surrogate regression problem belongs to the considered hypothesis space $\mathcal{H} \otimes \mathcal{F}$.
   Let $\delta\in(0,1]$ and $n_0$ sufficiently large such that $n_0^{-1/2} \geq \frac{9\kappa^2}{n_0} \log \frac{n_0}{\delta}$. Then, for any $n \geq n_0$, the estimator $f_n$ defined in Equation \ref{eq:krr_fngw} trained on $n$ points independently sampled from $\rho$  and with $\lambda = n^{-1/2}$ is such that, with probability at least $1-\delta$
   \begin{equation}
       \mathcal{R}(f_n) - \mathcal{R}(f^*) \leq c\log(4/\delta)~ n^{-1/4},
   \end{equation}
with $c$ a constant independent of $n$ and $\delta$. 
\end{theorem}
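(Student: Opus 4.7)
The plan is to invoke the ILE structure established in Proposition \ref{prop:ile} to reduce the excess-risk analysis to a standard vector-valued kernel ridge regression (KRR) problem in the output feature space, and then to apply known non-asymptotic bounds for KRR. Concretely, by Proposition \ref{prop:ile} there exist bounded feature maps $\psi: \mathcal{G}_m \to \mathcal{H}$ and $\varphi: \mathcal{G} \to \mathcal{H}$ such that $\fngw(g_m, g) = \langle \psi(g_m), \varphi(g) \rangle_{\mathcal{H}}$. Since $\mathcal{G}_m$ is compact and $\psi$ is continuous (which one verifies from its construction in the Ciliberto et al. ILE theorem together with continuity of $\fngw(\cdot, g)$), the constant $B := \sup_{g_m \in \mathcal{G}_m} \|\psi(g_m)\|_{\mathcal{H}}$ is finite; $\|\varphi(g)\|_\mathcal{H} \le 1$ by the ILE definition.

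Next I would rewrite the Bayes risk and the estimator in terms of the surrogate. Linearity of expectation and of the inner product give $\mathcal{R}(f) = \mathbb{E}_{X}[\langle \psi(f(X)), h^*(X)\rangle_\mathcal{H}]$ where $h^*(x) := \mathbb{E}[\varphi(G)\mid X=x]$, so the Bayes predictor satisfies $f^*(x) = \argmin_{g_m \in \mathcal{G}_m} \langle \psi(g_m), h^*(x)\rangle_\mathcal{H}$. On the estimator side, a direct calculation shows that the weights $\gamma(x) = (\mathbf{K} + n\lambda I)^{-1} \mathbf{K}_x$ are exactly the coefficients of the KRR estimator $\hat{h}(x) = \sum_{i=1}^n \gamma(x)_i \varphi(g_i)$ in the vector-valued RKHS $\mathcal{H} \otimes \mathcal{F}$; combined with the ILE decomposition, Equation \ref{eq:krr_fngw} becomes $\hat{f}(x) = \argmin_{g_m \in \mathcal{G}_m}\langle \psi(g_m), \hat{h}(x)\rangle_\mathcal{H}$.

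The crux is then a comparison inequality controlling the structured risk by the $L^2$-error of $\hat h$. Using that for each $x$ both $\hat f(x)$ and $f^*(x)$ are minimizers of linear functionals of $\psi(\cdot)$ against $\hat h(x)$ and $h^*(x)$ respectively, a standard two-term insertion argument yields
\begin{equation}
\mathcal{R}(\hat f) - \mathcal{R}(f^*) \le 2B\, \mathbb{E}_X\!\left[\|\hat h(X) - h^*(X)\|_\mathcal{H}\right] \le 2B\, \|\hat h - h^*\|_{L^2(\rho_\mathcal{X};\,\mathcal{H})},
\end{equation}
by Jensen's inequality. This reduces the problem to bounding the surrogate $L^2$-error.

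Finally, I would apply a concentration bound for vector-valued KRR. Under the well-specified assumption $h^* \in \mathcal{H} \otimes \mathcal{F}$, boundedness of $\varphi$ and of $k$ (with $\kappa^2 < \infty$), a now-classical argument via operator Bernstein inequalities (as in Caponnetto--De Vito, and specialized to this setting in Theorem 5 of \citet{ciliberto_general_2020}) gives, for $\lambda = n^{-1/2}$ and $n \ge n_0$ satisfying the stated condition, with probability at least $1-\delta$,
\begin{equation}
\|\hat h - h^*\|_{L^2(\rho_\mathcal{X};\,\mathcal{H})} \le c'\log(4/\delta)\, n^{-1/4}.
\end{equation}
Plugging this into the comparison inequality above and setting $c := 2Bc'$ yields the claimed bound. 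The main obstacle is the surrogate KRR bound: it requires verifying that the usual scalar KRR concentration machinery extends to the vector-valued, output-space-of-ILE setting, in particular that the residual noise $\varphi(G) - h^*(X)$ is uniformly bounded in $\mathcal{H}$ (which follows from $\|\varphi\|_\infty \le 1$) so that operator Bernstein applies with the explicit constants giving the $n^{-1/4}$ rate at $\lambda = n^{-1/2}$. The comparison step and the identification of the estimator with vector-valued KRR are routine but must be carefully written because $\mathcal{G}_m$ is only compact (not finite) and $\psi$ need not be injective.
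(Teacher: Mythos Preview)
Your proposal is correct and follows the same approach as the paper: both establish the result by invoking Proposition \ref{prop:ile} (the ILE property of $\fngw$) and then applying the general excess-risk machinery of \citet{ciliberto_general_2020}. The paper simply states that Theorems 8 and 9 of \citet{ciliberto_general_2020} can be instantiated once the ILE property is verified, whereas you unpack that machinery explicitly (surrogate regression $h^*$, comparison inequality, vector-valued KRR concentration); this is the same route, just written out in more detail. One minor remark: you do not need continuity of $\psi$ to get $B<\infty$, since the ILE definition already requires $\psi$ to be bounded.
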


\section{Additional details on Algorithms}
\label{app:algos}

In this section, we present additional details on our algorithms.
\subsection{Line Search in  Alg. \ref{algo:fngw_cal} }
Denoting $\Delta = \tilde{\pi}^{(i-1)} - \pi^{(i-1)}$,  $\Lambda_1 = A^2\boldsymbol{p}\mathbb{1}_m^\mathsf{T} + \mathbb{1}_n\boldsymbol{\tilde{p}}^\mathsf{T}\tilde{A}^{\mathsf{T}2}$ and $\Lambda_2 = g(E)\boldsymbol{p}\mathbb{1}_m^\mathsf{T} + \mathbb{1}_n\boldsymbol{\tilde{p}}^\mathsf{T}h(\tilde{E})^\mathsf{T}$, then we have $\pi^i=\pi^{(i-1)} + \gamma^i\Delta$ and
\begin{align}
    &\mathcal{E}_{\alpha, \beta}(\pi^i) \nonumber \\
    =& \left\langle(1-\alpha-\beta)M(F, \tilde{F}) + \beta J(A, \tilde{A})   \otimes\pi^i + \alpha L(E, \tilde{E}) \otimes\pi^i, \pi^i \right\rangle \nonumber \\
    = & \left\langle(1-\alpha-\beta)M(F, \tilde{F}) + \beta (\Lambda_1 -2A\pi^i\tilde{A}^\mathsf{T}) + \alpha (\Lambda_2 - 2 \sum_{t=1}^{T}E[t]\pi^i\tilde{E}[t]^\mathsf{T}), \pi^i \right\rangle \nonumber \\
    = & \bigg\langle(1-\alpha-\beta)M(F, \tilde{F}) + \beta (\Lambda_1 -2A(\pi^{(i-1)} + \gamma^i\Delta)\tilde{A}^\mathsf{T}) \nonumber \\
    & + \alpha (\Lambda_2 - 2 \sum_{t=1}^{T}E[t] (\pi^{(i-1)} + \gamma^i\Delta)\tilde{E}[t]^\mathsf{T}), \pi^{(i-1)} + \gamma^i\Delta \bigg\rangle 
\end{align}
Then we can rewrite $\mathcal{E}_{\alpha, \beta}(\pi^i)$ as $a  \gamma^{(i)^2} + b \gamma^{(i)} + c $ 
with \begin{gather}
    a = \left\langle -2\alpha \sum_{t=1}^{T}E[t]\Delta\tilde{E}[t]^\mathsf{T} - 2\beta A\Delta\tilde{A}^\mathsf{T}, \Delta \right\rangle \\
    b = \left\langle(1-\alpha - \beta) M(F, \tilde{F}) + \alpha (\Lambda_2- 2 \sum_{t=1}^{T}E[t]\pi^{(i-1)}\tilde{E}[t]^\mathsf{T}) + \beta(\Lambda_1 - 2A\pi^{(i-1)}\tilde{A}^\mathsf{T}), \Delta \right\rangle \nonumber \\
    + \left\langle -2 \alpha \sum_{t=1}^{T}E[t]\Delta\tilde{E}[t]^\mathsf{T} - 2 \beta A\Delta\tilde{A}^\mathsf{T}, \pi^{(i-1)}\right \rangle
\end{gather}

Hence, the line-search presented in Alg. \ref{algo:fngw_cal} can be performed with Alg. \ref{algo:fngw_line_search} using $a$ and $b$ as defined above.

\begin{algorithm}[tb]
    \caption{Line-Search in Conditional Gradient Descent}	
	\label{algo:fngw_line_search}
\begin{algorithmic}
    \STATE {\bfseries Input:}  $a$, $b$
    \IF{$a>0$}
    \STATE $\gamma^{(i)} = \min (1, \max(0, -\frac{2a}{b}))$
    \ELSE
    \IF{$a + b < 0$}
    \STATE $\gamma^{(i)} = 1 $
    \ELSE
    \STATE $\gamma^{(i)} = 0$
    \ENDIF
    \ENDIF
    \STATE {\bfseries Output:} $\gamma^{(i)}$
\end{algorithmic}
\end{algorithm}

\subsection{Details on Graph Dictionary Learning}
Let us formalize the unmixing procedure, which can be described as solving the following problem:
\begin{align}
    &\min_{\boldsymbol{w} \in \Sigma_S } 
    \mathrm{FNGW}_{\alpha, \beta} \Big(g, \mathrm{Bary}(\boldsymbol{w}, \{\overline{g}_s\}_{s=1}^S , \boldsymbol{p})\Big) - \lambda  \|\boldsymbol{w}\|_2^2
    \label{eq:unmixing}
\end{align}
The reconstructed graph is hence the solution of a FNGW-barycenter problem, which we solve first, allowing us to find the optimal $\boldsymbol{w_i}$ with the fixed OT plans of the barycenter. Our detailed algorithm is provided in Alg. \ref{algo:unmixing}. 

\begin{algorithm}[tb]
\caption{Unmixing problem solver}
\label{algo:unmixing}
\begin{algorithmic}
    \STATE {\bfseries Input:} Graph $g$, Dictionary $\{\overline{g}_s\}_{s=1}^S$
    \STATE {\bfseries Init:} Initialize uniformly $\boldsymbol{w}$.
    \REPEAT
    \STATE With $\boldsymbol{w}$ fixed, compute the barycenter with Alg.\ref{algo:barycenter}, obtaining $\tilde{g} = \mathrm{Bary}(\boldsymbol{w}, \{\overline{g}_s\}_{s=1}^S, \boldsymbol{p})$ and save $S$ independent optimal transport plans $\pi_s$ between $\tilde{g}$ and $\overline{g}_s$.
    \STATE Compute the optimal transport plan $\pi$ between $g$ and $\tilde{g}$.
    \STATE Compute the optimal $\boldsymbol{w}$ by minimizing Equation \ref{eq:unmixing} with fixed transport plans $\pi$ and $\{\pi_s\}_s$ using the CG algorithm.
    \UNTIL{Convergence of the reconstruction loss.} 
    \STATE {\bfseries Output:} $\boldsymbol{w}$, $\pi$ and $\{\pi_s\}_s$
\end{algorithmic}
\end{algorithm}

This procedure is applied to the graphs in $\{g_i\}_i$, giving us corresponding $\{\boldsymbol{w_i}\}_i$ and transport plans, which allows us to update atoms $\{(\overline{g}_s)\}_{s=1}^S$ by gradient descent.
Given a batch of graph samples $\{g_b\}_{b=1}^B$, we define the following functional, representing the batch loss: 
\begin{equation}
    \xi (\overline{F}_s, \overline{A}_s, \overline{E}_s) = \frac{1}{B} \sum_{b=1}^{B} \mathcal{E}_{\alpha, \beta}\left(\left(E_b, A_b, F_b\right), \left(\tilde{E}_b, \tilde{A}_b, \tilde{F}_b\right), \pi_b\right)
\end{equation}
with $(\tilde{E}_b, \tilde{A}_b, \tilde{F}_b)$ the reconstructed graph of the batch sample $g_b$ from the results of the unmixing problem and $\pi_b$ the optimal transport plan between them. The reconstructed graph is then expressed as:
\begin{gather}
    \tilde{E}_b = \frac{1}{\mathcal{I}_{n\times T} \times_2 \boldsymbol{p}\boldsymbol{p}^{\mathsf{T}}}\sum_s \boldsymbol{w}_{s,b} (\overline{C}_s \times_2 \overline{\pi}_{s,b}) \times_1 \overline{\pi}_{s,b} \\
    \tilde{A}_b =  \frac{1}{\boldsymbol{p}\boldsymbol{p}^{\mathsf{T}}} \circ \sum_s \boldsymbol{w}_{s,b} \overline{\pi}_{s,b}\overline{A}_s \overline{\pi}_{s, b}^{\mathsf{T}} \\
    \tilde{F}_b =  \sum_s \boldsymbol{w}_{s,b} \mathrm{diag}(\frac{1}{\boldsymbol{p}}) \overline{\pi}_{s,b}\overline{F}_s
\end{gather}
where $\overline{\pi}_{s,b}$ denotes the optimal transport plan between the reconstructed graph $\tilde{g_b}$ and atom $\overline{g}_{s}$. Then the estimated gradients is written as: 
\begin{align}
    \nabla_{\overline{F}_s}\xi & = \frac{2}{B} \sum_{b=1}^{B} \boldsymbol{w}_{s,b} \overline{\pi}_{s,b}^{\mathsf{T}} \left(\tilde{F}_b - \mathrm{Diag}(\frac{1}{\boldsymbol{p}_b})\pi_b^{\mathsf{T}} F_b \right) \label{eq:dict_atom_f} \\
    \nabla_{\overline{A}_s}\xi  & = \frac{2}{B} \sum_{b=1}^{B} \boldsymbol{w}_{s,b} \overline{\pi}_{s,b}^{\mathsf{T}}  \left(\tilde{A}_b - \frac{1}{\boldsymbol{p}_b\boldsymbol{p}_b^{\mathsf{T}}} \circ \pi_b^{\mathsf{T}}A_b\pi_b \right) \overline{\pi}_{s,b} \\ 
     \nabla_{\overline{E}_s}\xi  & = \frac{2}{B} \sum_{b=1}^{B} \boldsymbol{w}_{s,b}  \left( \left(\tilde{E}_b - \frac{1}{\mathcal{I}_{n\times T} \times_2 \boldsymbol{p}_b\boldsymbol{p}_b^{\mathsf{T}}} \circ \left(E_b\times_2 \pi_b^{\mathsf{T}}\right)\times_1 \pi_b^{\mathsf{T}} \right) \times_2 \overline{\pi}_{s,b}^{\mathsf{T}} \right) \times_1 \overline{\pi}_{s,b} \label{eq:dict_atom_e}
\end{align}
The complete algorithm for dictionary learning is summarized by Alg. \ref{algo:dl}. 

\begin{algorithm}[tb]
\caption{Stochastic Graph Dictionary Learning}
\label{algo:dl}
\begin{algorithmic}
    \STATE {\bfseries Input:} Graph dataset $\{g_i\}_{i=1}^n$
    \STATE {\bfseries Init:} Randomly initialize the atoms $\{(\overline{C}_s, \overline{A}_s, \overline{F}_s )\}_{s=0}^S$ 
    \FOR{$i=1,\dots, N$}
    \STATE Sample a mini-batch of graphs from dataset: $\{(E_b, A_b, F_b, \boldsymbol{p}_b)\}_{b=1}^B $ 
    \FOR{$b=1,\dots, B$}
        \STATE With $\{(\overline{E}_s, \overline{A}_s, \overline{F}_s )\}_{s=1}^S$ fixed, solve the unmixing problem: \\ $\boldsymbol{w}_b, \pi_b, \{\overline{\pi}_{b, s}\}_s = \text{Alg.\ref{algo:unmixing}}((E_b, A_b, F_b, \boldsymbol{p}_b), \{(\overline{C}_s, \overline{A}_s, \overline{F}_s )\}_{s=1}^S)$ 
    \ENDFOR
    \FOR{$s=1,\dots, S$}
    \STATE With fixed $\boldsymbol{w}_b, \pi_b, \{\overline{\pi}_{b, s}\}_s$, compute the gradients of the mini-batch loss with respect to $\{\overline{C}_s, \overline{A}_s, \overline{F}_s \}$.
    \STATE Update $\{\overline{C}_s, \overline{A}_s, \overline{F}_s \}$ using gradients from Equation \ref{eq:dict_atom_f} to Equation \ref{eq:dict_atom_e}. 
    \ENDFOR
    \ENDFOR
    \STATE {\bfseries Output:} The atoms of the dictionary $\{(\overline{C}_s, \overline{A}_s, \overline{F}_s )\}_{s=1}^S$
\end{algorithmic}
\end{algorithm}

\section{Additional details on Experiments}
\label{app:expdet}

This section is dedicated to additional details about the experiments carried out on real-world datasets.
\subsection{Labeled Graph Classification.}
\paragraph{Datasets.} We consider 7 graph datasets for classification: Cuneiform~\citep{kriege_recognizing_2018}, MUTAG~\citep{debnath_structure-activity_1991, kriege_subgraph_2012}, PTC-MR~\citep{helma_predictive_2001, kriege_subgraph_2012}, BZR-MD, COX2-MD, DHFR-MD and ER-MD~\citep{sutherland_spline-fitting_2003, kriege_subgraph_2012}. While Cuneiform is one of the earliest systems of writing realized by wedge-shaped marks on clay tablets, the rest of the datasets are of the nature of small molecules. 
All the datasets can be downloaded from \url{https://ls11-www.cs.tu-dortmund.de/staff/morris/graphkerneldatasets}.

\paragraph{Experimental Settings.} 
As baselines, we use kernel-based methods, including the shortest path kernel (SPK)~\citep{borgwardt_shortest-path_2005}, the random walk kernel (RWK)~\citep{gartner_graph_2003}, the Weisfeler Lehman kernel (WLK)~\citep{vishwanathan_graph_2010}, the graphlet count kernel (GK)~\citep{shervashidze_efficient_2009}, the HOPPER kernel (HOPPERK)~\citep{feragen_scalable_2013}, the propagation kernel (PROPAK)~\citep{neumann_propagation_2016} and the Neighborhood Subgraph Pairwise Distance Kernel (NSPDK)~\citep{costa_fast_2010}. 

For the classifier based on FNGW or FGW, the coefficient  $\gamma$ presented in the Gram matrix computation is cross-validated within $\{2^{i} \mid i\in \llbracket -10, 10 \rrbracket\}$ while the number of iteration $K$ in WL labeling is searched in $\llbracket 0, 4 \rrbracket $ for MUTAG and PTC-MR or $\llbracket 0, 3 \rrbracket $ for the other datasets (0 means no WL labeling is used) except Cuneiform. For FNGW, we cross-validate 7 values of the trade-off parameter $\alpha$ via a logspace search in $[0, 0.5]$ and the same strategy is applied to $\beta$, while a total of 15 values are drawn from logspaces $[0, 0.5]$ and $[0.5, 1]$ to cross-validate $\alpha$ for FGW. For the kernel-based methods, decay factor $\lambda$ of RWK is cross-validated from $\{10^{i} \mid i\in \llbracket -6, -2 \rrbracket\}$, the number of iterations of WLK is cross-validated in $\llbracket 1, 10 \rrbracket$ while the one of PROPAK is chosen from $\{1, 3, 5, 8, 10, 15, 20\}$. For GK, the CV range of the graphlet size is $\{3, 4, 5\}$ and the one of the precision level $\epsilon$ and the conﬁdence $\delta$ is $\{0.1, 0.05\}$. For NSPDK, the maximum considered radius $r$ between vertices is cross-validated within $\llbracket 0, 5 \rrbracket$, and neighborhood depth $d$ is chosen from $\{3, 4, 5, 6, 8, 10, 12, 14, 20\}$. Finally, the regularization coefficient $C$ of the SVM is cross-validated within $\{10^{i} \mid i\in \llbracket -7, 7 \rrbracket\}$ for all the methods except for the case MUTAG-FNGW where the value $10^7$ is not included.

\subsection{Metabolite Identification.}
\paragraph{Datasets.} To evaluate the performance for metabolite identification from tandem mass spectra, we use the data extracted and processed in \cite{duhrkop2015searching}, which is a set of 4,138 small compounds from the public GNPS Public Spectral Libraries (\url{https://gnps.ucsd.edu/ProteoSAFe/libraries.jsp}). The candidate sets were built with molecular structures from PubChem. The dataset can be downloaded from \url{https://zenodo.org/record/804241#.Yi9bzS_pNhE}, which is released under \textit{Creative Commons Attribution 4.0 International} license.

\paragraph{Experimental Settings.}
Due to the computational cost, during prediction, only the 5 training samples with the greatest weights $\gamma(x)_i$ are taken into account, rather than all the samples, as described in Equation $\ref{eq:krr_fngw}$. We choose the ridge regularization parameter $(\lambda = 10^{-4})$ and the diffusion rate ($\tau=0.6$) following~\cite{brogat-motte_learning_2022}. We use a separate validation set (1/5 of the training set) to select the hyperparameters $\alpha$ and $\beta$ of FNGW loss. $\alpha$ and $\beta$ are chosen from $\{0.1, 0.33, 0.5, 0.67\}$ with the constraint $\alpha + \beta < 1$. For the experiment with FGW, we keep the same hyper-parameter as in ~\citep{brogat-motte_learning_2022}. 

\section{Additional Experiments}
\label{app:expadd}

In this section, we present additional experiments illustrating the possible applications of our proposed FNGW distance.
\subsection{Computation of a Similarity Matrix}

In this first experiment, we use the global bilateral migration networks released by the World Bank and processed by~\citet{chowdhury_gromovwasserstein_2019} to test our distance. Differently than~\citep{chowdhury_gromovwasserstein_2019}, the capacity of the FNGW distance to model higher dimensional edge features allows us to take in account both networks (corresponding to the male migration and the female migration, separated) conjointly, by combining their edge features.
The resulting dataset consists of 5 graphs, each containing 225 nodes representing countries or administrative regions. The feature of each edge $(i \to j)$ is a two-dimensional vector, where each indicates the number of respectively male and female migrating from region $i$ to region $j$. Since the networks are complete and do not possess any node features, the trade-off parameter $\alpha$ of the FNGW distance was set to $1$. The weights of the nodes in the network are uniformly distributed. Figure \ref{fig:migration} shows the dissimilarity matrix of the migration networks and its associated single-linkage clustering dendrogram. 

\begin{figure*}[hbtp!]
  \centering
  \includegraphics[width=1.0\textwidth]{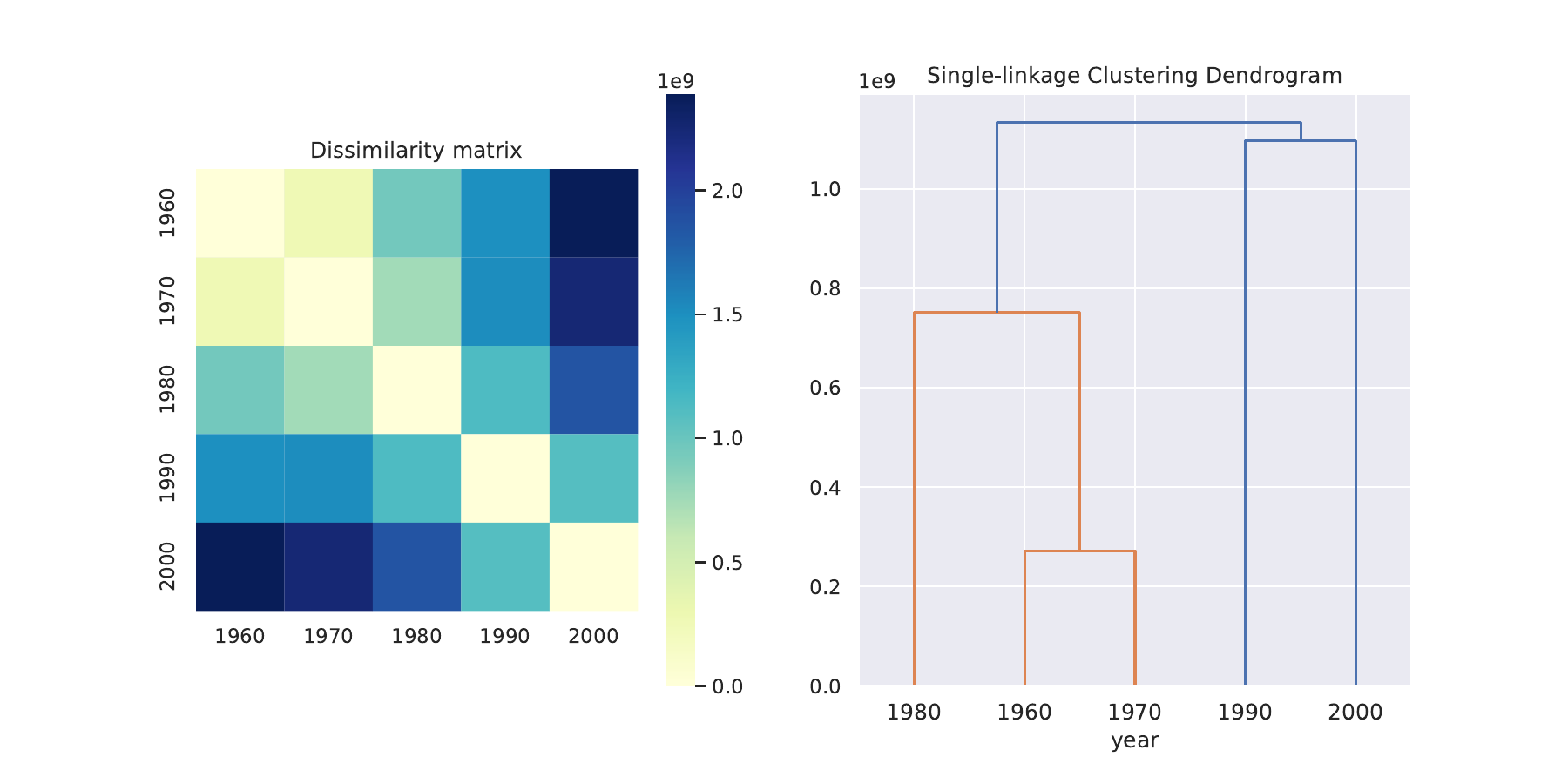}
  \caption{Results of migration networks}
  \label{fig:migration}
\end{figure*}

\subsection{Unsupervised Learning: Labeled Graphs Clustering}

In this synthetic task, we generate 3 groups of graphs, following the Stochastic Block Model (SBM) with the number of blocks in each graph chosen from $\{1, 2, 4\}$. For each cluster, we generate 15 graphs whose numbers of nodes are randomly chosen from $\{20, 30, 40\}$. For each graph, the feature of the nodes from block $i$ is sampled from $\mathcal{N}(i, 1)$. For the edges, we consider the following 3 labels: \{black, blue, green\}. Edges between nodes in the same block are labeled as black with probability 1. For every pair of nodes $(p, q)$ with $p$ from block $i$ and node $q$ from block $i+1$, there exists an edge $p\to q$ labeled as green, and there exists an edge $q\to p$ labeled as blue with the same probability.
There is no other possible edges. In total, the dataset contains 45 graphs. Several synthetic graph samples are shown in Figure~\ref{fig:cluster_synthetic}. We apply the $k$-means algorithm in order to perform graph clustering, \textbf{considering the FNGW barycenter as the centroid of each cluster and the FNGW distance as the cluster assignment metric}. Cluster centroids are randomly initialized with 20 nodes; their evolution as found by $k$-means is presented in Figure~\ref{fig:centroid}. We can observe that the resulting centroids recover not only the node features but also the characteristic of the edge labels of the different groups in the synthetic dataset.
\begin{figure}[hbtp!]
  \centering
  \includegraphics[width=1.0\textwidth]{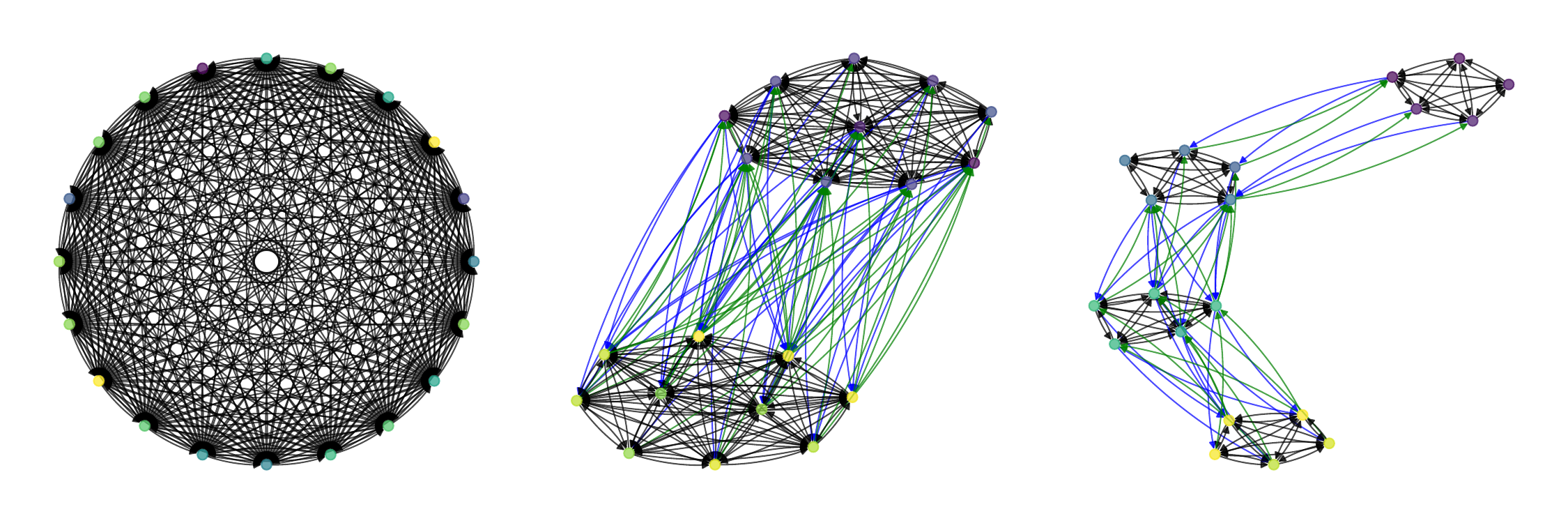}
  \caption{Samples of graphs for clustering with 20 nodes}
  \label{fig:cluster_synthetic}
\end{figure}

\begin{figure}[hbtp!]
  \centering
  \includegraphics[width=1.0\textwidth]{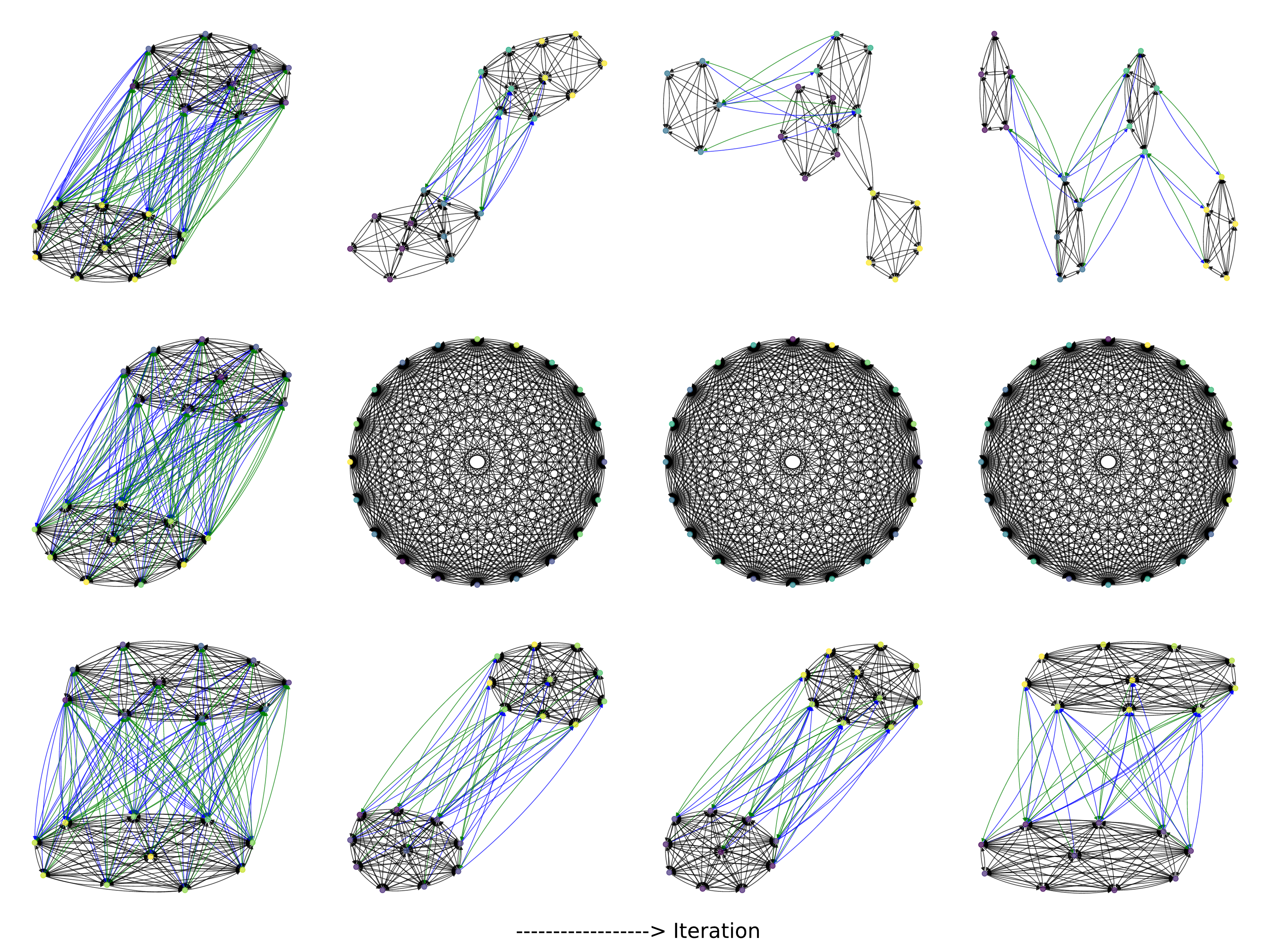} 
  \caption{Evolution of centroids of each cluster found by $k$-means algorithm.}
  \label{fig:centroid}
\end{figure}



\end{document}